\newcommand{\beq}{\begin{equation}}
\newcommand{\eeq}{\end{equation}}
\newtheorem{theo}{Theorem}
\newtheorem{corr}{Corollary}
\newtheorem{asmp}{Assumption}
\newtheorem{defn}{Definition}
\newcommand\I{\mathbb{I}}
\newcommand\s{\mathbb{S}}
\newcommand\R{\mathbb{R}}
\newcommand\E{\mathbb{E}}
\renewcommand{\v}{\mathbf{v}}
\newcommand{\x}{\mathbf{x}}
\newcommand{\cD}{{\cal D}}
\newcommand{\cN}{{\cal N}}
\newcommand{\cP}{{\cal P}}
\newcommand{\cQ}{{\cal Q}}
\newcommand{\cX}{{\cal X}}
\newcommand{\cY}{{\cal Y}}
\newcommand{\cZ}{{\cal Z}}
\newcommand{\vertiii}[1]{{\left\vert\kern-0.25ex\left\vert\kern-0.25ex\left\vert #1
    \right\vert\kern-0.25ex\right\vert\kern-0.25ex\right\vert}}
\DeclareMathOperator{\tr}{Tr}
\DeclareMathOperator{\diag}{diag}
\newcommand{\proof}{\noindent{\itshape Proof:}\hspace*{1em}}
\newcommand{\qed}{\nolinebreak[1]~~~\hspace*{\fill} \rule{5pt}{5pt}\vspace*{\parskip}\vspace*{1ex}}
\newcommand {\commentout}[1] {}
\def\ints{{{\rm Z} \kern -.35em {\rm Z} }}  
\def\smallints{{{\rm Z} \kern -.3em {\rm Z} }}  
\def\pints{{{\rm I} \kern -.15em {\rm N} }}      
\newcommand{\reals}{\mathbb R}
\def\cplx{{{\rm I} \kern -.45em {\rm C} }}       
\def\l2{\rm {\mathcal L}^{2}(\reals)}            
\newcommand{\be}{\begin{eqnarray}}
\newcommand{\ee}{\end{eqnarray}}
\newcommand{\bea}{\begin{eqnarray}}
\newcommand{\eea}{\end{eqnarray}}
\newcommand{\beaa}{\begin{eqnarray*}}
\newcommand{\eeaa}{\end{eqnarray*}}
\newcommand{\bnad}{\begin{nad}}
\newcommand{\enad}{\end{nad}}
\newcommand{\calU}{{\cal U}}
\newcommand{\calV}{{\cal V}}
\title{Hessian based analysis of SGD for Deep Nets:\\ Dynamics and Generalization}
\author{Xinyan Li \footnote{Equal Contribution}  \footnote{Emails: \{lixx1166, guxxx396, zhou0877, chen6271\}@umn.edu, banerjee@cs.umn.edu}}
\author{Qilong Gu\textsuperscript{*}\textsuperscript{$\dagger$}}
\author{Yingxue Zhou\textsuperscript{*}\textsuperscript{$\dagger$}} 
\author{Tiancong Chen\textsuperscript{$\dagger$}}
\author{Arindam Banerjee\textsuperscript{$\dagger$}}
\affil{Department of Computer Science \& Engineering\\ 
University of Minnesota, Twin Cities\\ 
Minneapolis, MN 55455, USA}
\date{\vspace{-5mm}}
\begin{document}

\maketitle

\begin{abstract}
While stochastic gradient descent (SGD) and variants have been surprisingly successful for training deep nets, several aspects of the optimization dynamics and generalization are still not well understood. In this paper, we present new empirical observations and theoretical results on both the optimization dynamics and generalization behavior of SGD for deep nets based on the Hessian of the training loss and associated quantities. 
We consider three specific research questions:
(1) what is the relationship between the Hessian of the loss and the second moment of stochastic gradients (SGs)? (2) how can we characterize the stochastic optimization dynamics of SGD with fixed and adaptive step sizes and diagonal pre-conditioning based on the first and second moments of SGs? and (3) how can we characterize a scale-invariant generalization bound of deep nets based on the Hessian of the loss, which by itself is not scale invariant? We shed light on these three questions using theoretical results supported by extensive empirical observations,with experiments on synthetic data, MNIST, and CIFAR-10, with different batch sizes, and with different difficulty levels by synthetically adding random labels.
\end{abstract}
	
\section{Introduction}
\label{sec:intro}
\vspace{-3mm}

Stochastic gradient descent (SGD) and its variants have been surprisingly successful for training complex deep nets \cite{zhbh17, smle18, nebm17}. The surprise comes from two aspects: first, SGD is able to `solve' such non-convex optimization problems, and second, the solutions typically have good generalization performance. While numerous commercial, scientific, and societal applications of deep nets are being developed every day \cite{vamr17,mina18,dutt18}, understanding the optimization and generalization aspects of SGD for deep nets has emerged as a key research focus for the core machine learning community. 

In this paper, we present new empirical and theoretical results on both the optimization dynamics and generalization behavior of SGD for deep nets. For the empirical study, we view each run of SGD as a realization of a stochastic process in line with recent perspectives and advances \cite{chso18,mahb17,xiat18,jakb18}. We repeat each experiment, i.e., training a deep net on a data set from initialization to convergence, 10,000 times to get a full distributional characterization of the stochastic process, including the dynamics of the value, gradient, and Hessian of the loss.Thus, rather than presenting an average over 10 or 100 runs, we often show trajectories of different quantiles of the loss and associated quantities, giving a more complete sense of the empirical behavior of SGD.

We consider three key research questions in the paper. First, {\em how does the Hessian of the loss relate to the second moment of the stochastic gradients} (SGs)? In general, since the loss is non-convex, the Hessian will be indefinite with both positive and negative eigenvalues and the second moment of SGs, by definition, will be positive (semi-)definite (PSD). The subspace corresponding to the top (positive) eigenvalues of the second moment  broadly captures the preferred direction of the SGs. Does this primary subspace of the second moment overlap substantially with the subspace corresponding to the top positive eigenvalues of the loss Hessian? We study the dynamics of the relationship between these subspaces till convergence for a variety of problems (Section~\ref{sec:hessian}). Interestingly, the top
sub-spaces do indeed align quite persistently over iterations and for different problems. Thus, within the scope of these experiments, SGD seems to be picking up and using second order information of the loss.

Second, {\em how does the empirical dynamics of SGD look like as a stochastic process and how do we characterize such dynamics and convergence in theory?} We study the empirical dynamics of the loss, cosine of subsequent SGs, and $\ell_2$ norm of the SGs based on 10,000 runs to get a better understanding of the stochastic process \cite{lily18,zozq18,sizp19,arcg19,itos18,sijy18,yusj18,brgm18, lily18}. Under simple assumptions, we then present a distributional characterization of the loss dynamics as well as large deviation bounds for the change in loss at each step using a characterization based on a suitable martingale difference sequence. Special cases of the analysis for over-parameterized least squares and logistic regression provide helpful insights into the stochastic dynamics. We then illustrate that adaptive step sizes or adaptive diagonal preconditioning can be used to convert the dynamics into a super-martingale. Under suitable assumptions, we characterize such super-martingale convergence as well as rates of convergence of such adaptive step size or preconditioned SGD to a stationary point.

Third, {\em can we develop a scale-invariant generalization bound which considers the structure of the Hessian at minima obtained from SGD}? While the Hessian at minima contains helpful information such as local flatness, the Hessian changes if the deep net is reparameterized. We develop a PAC Bayesian bound  based on an anisotropic Gaussian posterior whose precision (inverse covariance) matrix is based on a suitably thresholded version of the Hessian. The posterior is intuitively meaningful, e.g., flat directions in the Hessian have large variance in the posterior. We show that while Hessian itself changes due to re-parameterization, the KL-divergence between the posterior and prior does not, yielding a scale invariant generalization bound. The bound revels a dependency and also trade-off between two scale-invariant terms: a measure of effective curvature and a weighted Frobenius norm of the change in parameters from initialization. Both terms remain unchanged even if the deep net is re-parameterized. We also show empirical results illustrating that both terms stay small for simple problems and they increase for hard learning problems. 

Our experiments explore the fully connected feed-forward network with Relu activations. We evaluate SGD dynamics on both synthetic datasets and some commonly used real datasets, viz., the MNIST database of handwritten digits~\cite{lecun_gradientbased_1998} and the CIFAR-10 dataset~\cite{krizhevsky_learning_2009}. The synthetic datasets, which are inspired by recent work in \cite{saeg17,zhbh17}, consist of equal number of samples drawn from $k$ isotropic Gaussians with different means, each corresponding to one class. We refer to these datasets as Gauss-$k$. We also consider variants of these datasets where a fixed fraction of points have been assigned random labels~\cite{zhbh17} without changing the features. Details of our experimental setup are discussed in the Appendix. Experiments on both Gauss-10 and Gauss-2 datasets are repeated 10,000 times for each setting to get a full distributional characterization of the loss stochastic process and associated quantities, including full eigen-spectrum of the Hessian of the loss and the second moment. In the main paper, we present results based on Gauss-10 and some on MNIST and CIFAR-10. Additional results on all datasets and all proofs are discussed in the Appendix.

The rest of the paper is organized as follows. We start with a brief discussion on related work in Section \ref{sec:related}. In Section \ref{sec:prelim}, we introduce the notations and preliminaries for the paper. In Section \ref{sec:hessian}, we investigate the relationship between the Hessian of the loss and the second moment of the SGs. In Section \ref{sec:dynamics}, we characterize the dynamics of SGD both empirically and in theory by treating SGD as a stochastic process, and reveal the influence of the dynamics of Hessian and second moment on such dynamics. In Section \ref{sec:bound}, we present a PAC-Bayesian based scale-invariant generalization bound which balances the effect of the local structure of the Hessian and the change in parameters from initialization. Finally, we conclude the paper in section \ref{sec:conc}.

\section{Related Work}
\label{sec:related}

{\bf Hessian Decomposition.} The relationship between the stochastic gradients and the Hessian of the loss has been studied by decomposing the Hessian into the covariance of stochastic gradients and the averaged Hessian of predictions \cite{xiat18,saeg17,sabl16}. \cite{saeg17,sabl16} found the eigen-spectrum of the Hessian after convergence are composed of a `bulk' which corresponds to large portion of zero and small eigenvalues and `outliers' which corresponds to large eigenvalues. Later on, \cite{papy18, papy19} found that the `outliers' of the Hessian spectrum come from the covariance of the stochastic gradients and the `bulk' comes from the averaged Hessian of predictions. \cite{gudd18,ghkx19} studied the dynamics of the gradients and the Hessian during training and found that large portion of the gradients lie in the top-$k$ eigenspace of the Hessian. \cite{ghkx19} found that using batch normalization suppresses the outliers of Hessian spectrum and the stochastic covariance spectrum. 

{\bf SGD dynamics and convergence.} To understand how SGD finds minima which generalizes well, various aspects of SGD dynamics have been studied recently. \cite{xiat18,jakb18} studied the SGD trajectory and deduced that SGD bounces off walls of a valley-like-structure  where the learning rate controls the height at which SGD bounces above the valley floor while batch size controls gradient stochasticity which facilitates exploration. \cite{chso18,jaka17,mahb17} studied the stationary distribution of SGD using characterizations based stochastic partial differential equations. In particular, \cite{mahb17} proposed that constant learning rate SGD approximates a  continuous time stochastic process (Ornstein-Uhlenbeck process) with a Gaussian stationary distribution. However, the assumption of constant covariance has been considered unsuitable for deep nets~\cite{xiat18, saeg17}. There have been work studying SGD convergence and local minima. \cite{itos18, sijy18} proved the probability of hitting a local minima in Relu neural network is quite high, and increases with the network width. The convergence of SGD for deep nets has been extensively studied and it has been observed that over-parameterization and proper random initialization can help the optimization in training neural networks \cite{silj18b, arcg19,zhls18,lily18, mabb17}. With over-parameterization and random  initialization, GD and SGD can find the global optimum in polynomial time for deep nets with Relu activation \cite{lily18,zozq18}  and residual connections \cite{silj18b}. Linear rate of SGD for optimizing  over-parameterized deep nets is observed in some special cases and assumptions \cite{sizp19,zhls18,arcg19}. However, \cite{sham18} showed that for linear deep nets, the number of iterations required for convergence scales exponentially with the depth of the network, which is opposite to the idea that increasing depth can speed up optimization \cite{rasc18}.

{\bf Generalization.} 
Traditional approaches attribute small generalization error either to properties of the model family or to the regularization techniques. However, these methods fail to explain why large neural networks generalize well in practice \cite{zhbh17}. Recently, several interpretations have been proposed \cite{nebm17,smle18}. The concept of generalization via achieving flat minima was first proposed in \cite{hosc97b}. Motivated by such idea, \cite{chcs16} proposed the Entropy-SGD algorithm which biases the parameters to wide valleys to guarantee generalization. \cite{kemn17} showed that small batch size can help SGD converge to flat minima. However, for deep nets with positively homogeneous activations, most measures of sharpness/flatness and norm are not invariant to rescaling of the network parameters, corresponding to the same function (``$\alpha$-scale transformation'' \cite{dipb17}). This means that the measure of flatness/sharpness can be arbitrarily changed through rescaling without changing the generalization performance, rendering the notion of `flatness' meaningless. To handle the sensitive to reparameterization, \cite{smle18} explained the generalization behavior through `Bayesian evidence', which penalizes sharp minima but is invariant to model reparameterization. In addition, some spectrally-normalized margin generalization bounds have proposed which depend on the product of the spectral norms of layers \cite{bapf17,nebb17}. \cite{vazk19} proposed deterministic margin bound based on suitable derandomization of PAC-Bayesian bounds in order to address the exponential dependence on the depth \cite{bapf17,nebb17}. Most recently, \cite{rank19, tsss19,yimc19} explored scale-invariant flatness measure for deep network minima. 
   
\section{Preliminaries}
\label{sec:prelim}

In this section, we set up notations and discuss preliminaries which will be used in the sequel. For simplicity, we denote the $j$-th entry of a vector $x$ as $x_j$.
Let $\cD$ be a fixed but unknown distribution over a sample space $\cZ$ and let
\beq
{S} = \{z_1,\ldots,z_n\} \sim \cD^n
\eeq
be a finite training set drawn independently from the true distribution $\cD$. 
For $k$-class classification problems, we have 
\beq
z_i = (x_i,y_i) \in \R^d \times \cY,
\label{eq:zi}
\eeq
where $x_i \in \R^d$ is a data sample, $y_i$ is the corresponding label, and $\cY = \{1, 2, \ldots, k\}$ is the set of labels. 
In this paper, we focus on empirical and theoretical analysis of SGD applied to deep nets such as feed forward neural networks, and convolutaional networks \cite{lebh15,gobc16}. With $\theta \in \R^p$ denoting the parameters of the deep net, the empirical loss on the training set is
\beq
f(\theta) \triangleq \frac{1}{n} \sum_{i=1}^n f(\theta; z_i)~,
\label{eq:tot_loss}
\eeq
for a suitable point-wise loss $f: \R^{p} \times \cZ \rightarrow \R_+$. The gradient of the empirical loss is 
\beq
\mu(\theta)  \triangleq \frac{1}{n} \sum_{i=1}^n \nabla f(\theta; z_i) = \nabla f(\theta)
\eeq
and the covariance of the sample gradient $\nabla f(\theta; z_i)$ is
\beq
\Sigma(\theta)  \triangleq \frac{1}{n} \sum_{i=1}^n (\nabla f(\theta; z_i) - \mu(\theta)) (\nabla f(\theta; z_i) - \mu(\theta))^T~.
\label{eq:cov}
\eeq
Further, let
\beq
H_f(\theta)  \triangleq  \frac{1}{n} \sum_{i=1}^n \nabla^2 f(\theta; z_i) = \nabla^2 f(\theta)
\label{eq:hess}
\eeq
be the Hessian of the empirical loss and 
\beq
M(\theta) \triangleq \frac{1}{n} \sum_{i=1}^n \nabla f(\theta; z_i) \nabla f(\theta; z_i)^T = \Sigma(\theta) + \mu(\theta) \mu(\theta)^T
\label{eq:m2}
\eeq
be the second moment of sample gradient $\nabla f(\theta; z_i)$. Note that these quantities are all defined in terms of the training sample $S$. 

At each step, SGD performs the following update \cite{romo51,nejl09}: 
\beq
    \theta_{t+1} = \theta_t - \eta_t \nabla \tilde{f}(\theta_t)~,
\label{eq:sgd}
\eeq
where $\eta_t$ is the step size and $\tilde{f}(\theta_t)$ is the stochastic gradient (SG) typically computed from
a mini-batch of samples. Let $m$ be the batch size, so that $1 \leq m \leq n$. We denote the mean and covariance of SG as $\mu^{(m)}(\theta_t)$ and  $\Sigma^{(m)}(\theta_t)$ respectively, and we have $\mu^{(m)}(\theta_t) = \mu(\theta_t)$ and $\Sigma^{(m)}(\theta_t) = \frac{1}{m}\Sigma(\theta_t)$. For convenience, we introduce the following notation:
\beq
    \mu_t = \mu(\theta_t)~, \qquad \Sigma_t = \Sigma(\theta_t)~, \qquad M_t = M(\theta_t)~.
    \label{eq:brief}
\eeq

Let $\phi(\theta; x_i) : \R^p \times \R^d \rightarrow \R^k$ be the output of the deep net for a $k$-class classification problem, then the prediction probability of true label $y_i$ is given by:
\beq
p(y_i |x_i, \theta) = \frac{\exp( \phi_{y_i}(\theta; x_i))}{\sum_{j= 1}^k \exp( \phi_j(\theta; x_i))}~,
\eeq
where $\phi_j(\theta; x_i)$ is the $j$-th entry of $\phi(\theta; x_i)$. In this paper, we consider the log-loss, also known as the cross-entropy loss, given by
\beq
    f(\theta; z_i) = - \log p(y_i| x_i, \theta )~.
    \label{loss:log}
\eeq

\section{Hessian of the Loss and Second Moment of SGD}
\label{sec:hessian}
\begin{figure}[t!]
\vspace{-5mm}
\centering
 \subfigure[Gauss-10, batch size: 5, 0\% random labels.]{
 \includegraphics[width = 0.97\textwidth]{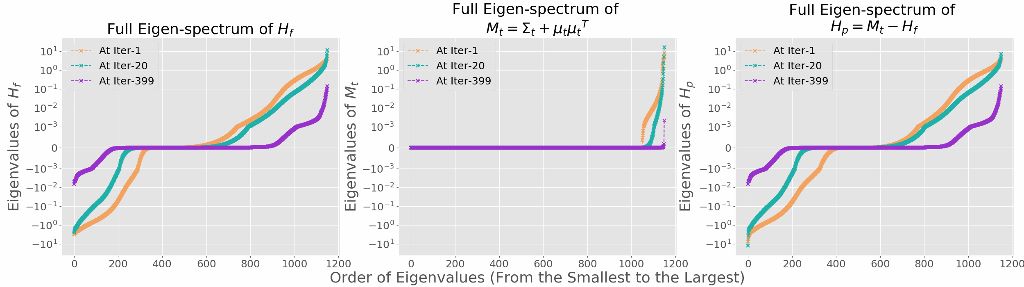}
 }
 \subfigure[Gauss-10, batch size: 5, 15\% random labels.]{
 \includegraphics[width = 0.97 \textwidth]{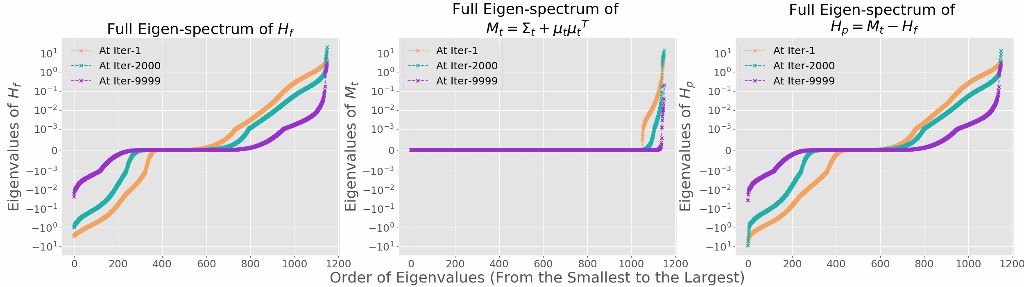}
 }
\caption[]{Eigen-spectrum dynamics of $H_f(\theta_t)$ (left),  $M_t$ (middle), and $H_p(\theta_t)$ (right). The network is trained on Gauss-10 dataset with small batches containing one twentieth of the training samples (5/100). $H_p$ remains significant even after SGD converges, and is close to $-H_f(\theta_t)$.}
 \label{fig:full_spectrum}
\end{figure}

In this section, we investigate the relationship between the Hessian of the training loss $H_f(\theta_t)$ and the second moment of SGs $M_t$.
We compute and compare the eigenvalues and eigenvectors of both $H_f(\theta_t)$ and $M_t$. Our experimental results show that the primary subspaces of $H_f(\theta_t)$ and $M_t$ overlap while the eigenvalue distributions (eigen-spectra) of the two matrices have substantial differences. We also illustrate that the overlap of the primary subspaces cannot be quite explained based the Fisher information matrix. 
Different from  \cite{gudd18,ghkx19}, our work not only focuses on the full eigenvalue distribution at the beginning and the end of SGD, but also reveals how such distribution evolves during the training by providing additional result at intermediate iteration. Comparing with \cite{gudd18}, we use a more well-established metric to characterize the overlap between two subspaces.

\begin{figure}[t!]
\centering
 \subfigure[Gauss-10, batch size: 5, 0\% random labels.]{
 \includegraphics[width = 0.97 \textwidth]{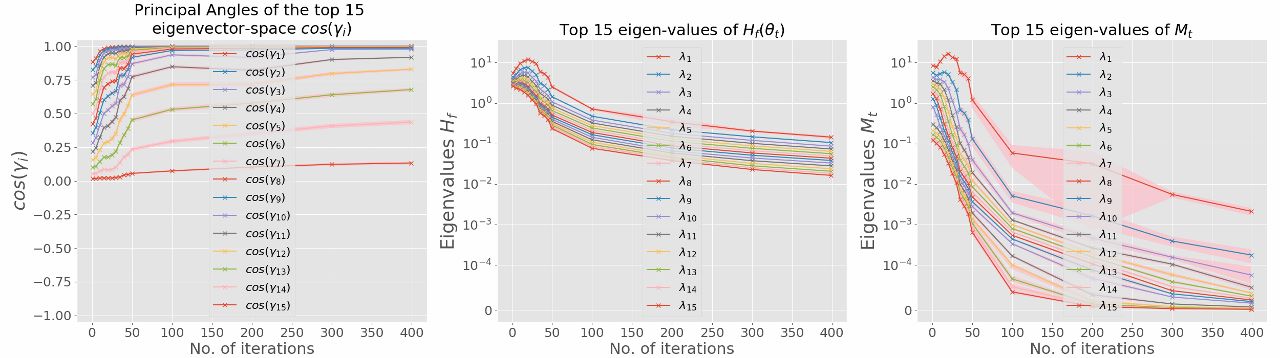}}
 \vspace{-1mm}
 \subfigure[Gauss-10, batch size: 5, 15\% random labels.]{
 \includegraphics[width = 0.97 \textwidth]{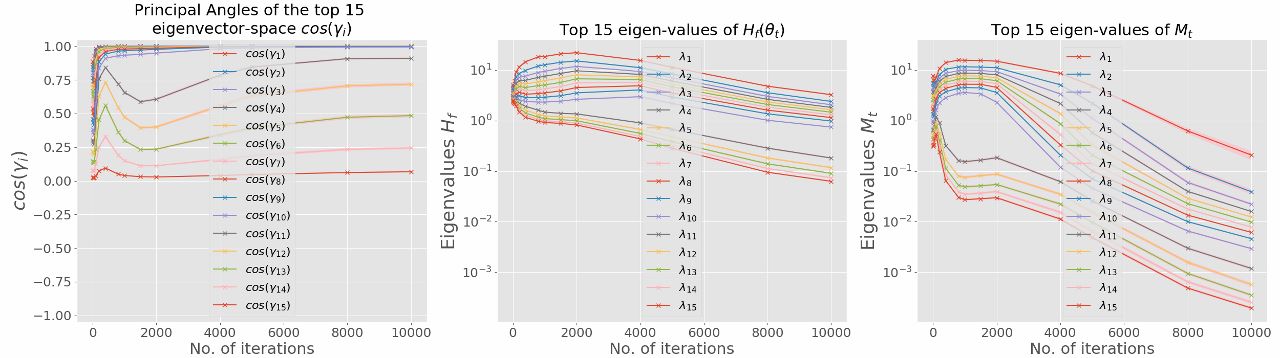}}
 \caption[]{Dynamics of top 15 principal angles between $H_f(\theta_t)$ and $M_t$ (left) and corresponding eigenvalues of $H_f(\theta_t)$ (middle) and $M_t$ (right) for Gauss-10. Solid lines represent the mean value over all runs, and shaded bands indicate the 95\% confidence intervals. $\cos(\gamma_1)$ to $\cos(\gamma_{10}) \approxeq 1$, indicating the top 10 principal subspaces are well aligned.
 }
 \label{fig:principal_angle_gauss10}
\end{figure}

\subsection{ Hessian Decomposition} 
For the empirical log-loss based on \eqref{eq:tot_loss} and \eqref{loss:log}, the Hessian $H_f(\theta_t)$ of the loss and the second moment $M_t$ of the stochastic gradients are related as follows~\cite{saeg17,xiat18,mart14,jaka17}:

\begin{restatable}{prop}{hessdecomp}
\label{prop:hessian}
For $H_f(\theta_t)$ and $M_t$ as defined in \eqref{eq:hess} and $\eqref{eq:brief}$, we have 
\beq
H_f(\theta_t) = M_t - H_p(\theta_t)~,
\label{eq:emphess}
\eeq
where
$H_p(\theta_t) =  \frac{1}{n} \sum_{i=1}^n \frac{1}{p(\theta_t;z_i)} \frac{\partial^2 p(\theta_t;z_i)}{\partial \theta^2}$.
\end{restatable}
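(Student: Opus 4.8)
The plan is to differentiate the per-sample log-loss $f(\theta;z_i) = -\log p(y_i\mid x_i,\theta)$ twice in $\theta$ and then average over $i$, matching terms against the definitions of $M_t$ in \eqref{eq:m2} and the claimed $H_p(\theta_t)$. Write $p_i(\theta) \triangleq p(y_i\mid x_i,\theta)$ for brevity; all derivatives below are with respect to $\theta$, evaluated at $\theta_t$. First I would compute the per-sample gradient: since $f(\theta;z_i) = -\log p_i(\theta)$, the chain rule gives $\nabla f(\theta;z_i) = -\frac{1}{p_i(\theta)}\nabla p_i(\theta)$.

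Next I would differentiate once more. Applying the product/quotient rule to $-\frac{1}{p_i}\nabla p_i$, one obtains
\beq
\nabla^2 f(\theta;z_i) = \frac{1}{p_i(\theta)^2}\,\nabla p_i(\theta)\,\nabla p_i(\theta)^T - \frac{1}{p_i(\theta)}\,\nabla^2 p_i(\theta)~.
\eeq
The key observation tying this back to the gradient is that $\nabla f(\theta;z_i)\nabla f(\theta;z_i)^T = \frac{1}{p_i(\theta)^2}\nabla p_i(\theta)\nabla p_i(\theta)^T$, so the first term on the right is exactly the rank-one outer product $\nabla f(\theta;z_i)\nabla f(\theta;z_i)^T$. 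Substituting this identity in and then averaging $\frac{1}{n}\sum_{i=1}^n$ over the training set, the first term averages to $M_t$ by definition \eqref{eq:m2} (and \eqref{eq:brief}), while the second term averages to precisely the stated $H_p(\theta_t) = \frac{1}{n}\sum_{i=1}^n \frac{1}{p(\theta_t;z_i)}\frac{\partial^2 p(\theta_t;z_i)}{\partial\theta^2}$. Since $H_f(\theta_t) = \frac{1}{n}\sum_{i=1}^n \nabla^2 f(\theta;z_i)$ by \eqref{eq:hess}, this yields $H_f(\theta_t) = M_t - H_p(\theta_t)$, as claimed.

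The calculation is essentially just two applications of the chain rule, so there is no deep obstacle; the only thing to be careful about is the bookkeeping of the $1/p_i$ and $1/p_i^2$ factors and recognizing that the $1/p_i^2$ term reassembles into the outer product of gradients rather than leaving it in terms of $\nabla p_i$ — this is the step where the second moment $M_t$ (as opposed to, say, a Fisher-type quantity) emerges naturally. I would also note that this identity is purely algebraic and holds pointwise in $\theta_t$, requiring only that $p_i(\theta)$ be twice differentiable and positive, which holds for the softmax parameterization used here.
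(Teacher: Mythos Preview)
Your proposal is correct and follows essentially the same route as the paper: differentiate $-\log p_i(\theta)$ twice via the chain/product rule, recognize the $\tfrac{1}{p_i^2}\nabla p_i\nabla p_i^T$ term as $\nabla f(\theta;z_i)\nabla f(\theta;z_i)^T$, and average over $i$ to obtain $M_t - H_p(\theta_t)$. The only cosmetic difference is that the paper carries the empirical average $\tfrac{1}{n}\sum_i$ through each line rather than first working per-sample.
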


\begin{figure}[t!]
\vspace{-4mm}
\centering
\subfigure[MNIST, No. hidden layers:~3, batch size: 64.]{
\includegraphics[width = 0.97 \textwidth]{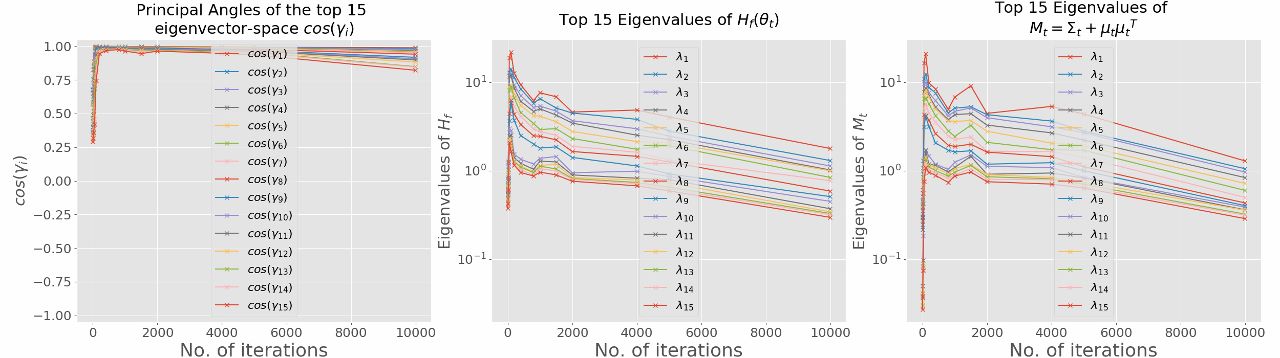}
 }\vspace{-1mm}
\subfigure[MNIST, No. hidden layers:~3, batch size: 256.]{
\includegraphics[width = 0.97 \textwidth]{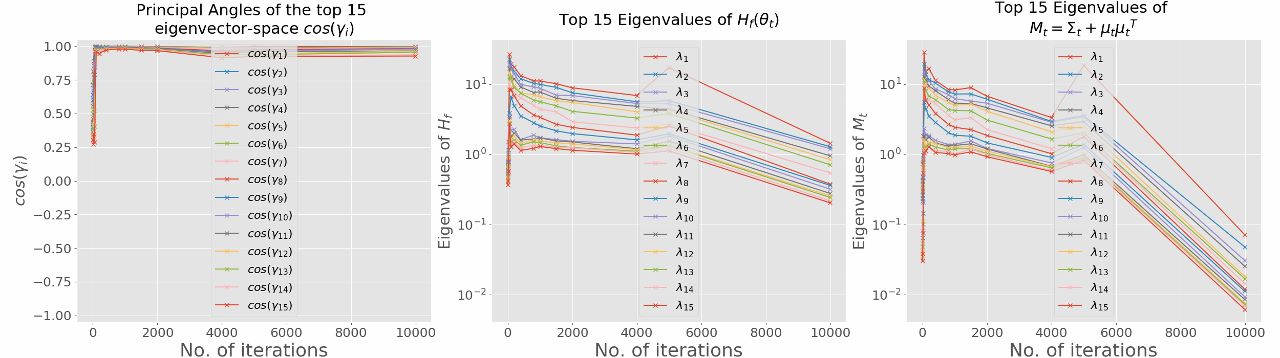}
 }\vspace{-1mm}
 \caption[]{Dynamics of top 15 principal angles between $H_f(\theta_t)$ and $M_t$ (left) and corresponding eigenvalues of $H_f(\theta_t)$ (middle) and $M_t$ (right) for MNIST. 
 $\cos(\gamma_1)$ to $\cos(\gamma_{10}) \approxeq 1$, indicating the top 10 principal subspaces are well aligned.} 
 \label{fig:principal_angle_mnist}
\end{figure}

Figure~\ref{fig:full_spectrum} shows the full eigen-spectrum (averaged over 10,000 runs) of $H_f(\theta_t)$, $M_t$, and the residual term $H_p(\theta_t)$ at the first, one intermediate, and the last iteration of SGD trained on Gauss-10. Overall, the eigenvalue dynamics of $H_f(\theta_t)$ and $M_t$ exhibit similar trend (Figure ~\ref{fig:full_spectrum},~\ref{fig:principal_angle_gauss10}: middle and right) with slight increase at early iterations, followed by a decrease. Eigenvalues of $M_t$ usually drop faster than those of  $H_f(\theta_t)$ (Figure~\ref{fig:principal_angle_gauss10}: middle and right). In general, when the training data has $k$ classes, $M_t$ is a positive definite matrix with an order of $k$ non-trivial eigenvalues \cite{saeg17}. 

For simple problems in which data are easy to separate, as SGD converges, we would expect the average gradient $\mu_t$ as well as the gradient $\nabla f(\theta_t; z_i)$ for each individual sample to be close to 0 \cite{silj18b,mabb17}. Thus $M_t$ approaches 0 as almost all non-trivial eigenvalues vanish (Figure~\ref{fig:full_spectrum} (a): middle). As a result, the residual $H_p(\theta_t)$ approaches $-H_f(\theta_t)$ (Figure~\ref{fig:full_spectrum} (a): left and right). 

When dealing with a hard problem (Figure~\ref{fig:full_spectrum}(b)), at convergence, even though the average gradient $\mu_t$ may be close to 0, certain individual gradients $\nabla f(\theta_t; z_i)$ need not. Therefore, $M_t$ may still have a handful of non-trivial eigenvalues, but they are at least one order of magnitude smaller than the top eigenvalues of $H_f(\theta_t)$ (Figure~\ref{fig:full_spectrum}(b): left and middle). Such an observation is not only true for Guass-10 with 15\% random labels, but also observed on MNIST (Figure~\ref{fig:principal_angle_mnist}: middle and right) and CIFAR-10 dataset (Figure~\ref{fig:principal_angle_cifar10}: middle and right). Hence, the top eigenvalues of $H_p(\theta_t)$ are much closer to $H_f(\theta_t)$ than those of $M_t$. The bottom eigenvalues of $H_p(\theta_t)$ always approaches $-H_f(\theta_t)$ since $M_t$ only has non-negative eigenvalues. Overall, the eigen-spetrum of $H_p(\theta_t)$ looks very similar to $-H_f(\theta_t)$ close to convergence. Our observations disagree with existing claims~\cite{xiat18} which suggest that $H_f(\theta_t)$ is almost equal to $M_t$ near the minima by assuming the residual term $H_p(\theta_t)$ disappears.

\begin{figure}[t!]
\vspace{-4mm}
\centering
\subfigure[CIFAR-10, No. hidden layers:~3, batch size: 256.]{
\includegraphics[width = 0.97 \textwidth]{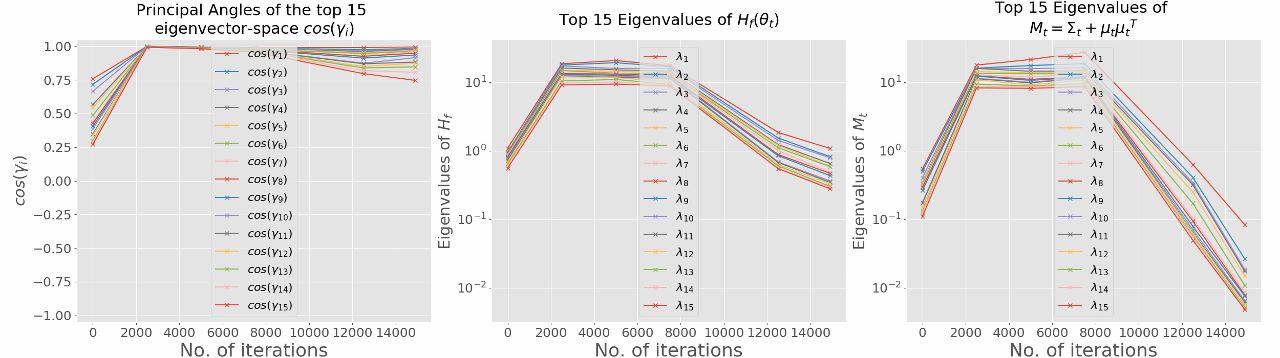}
 }
\subfigure[CIFAR-10, No. hidden layers:~3, batch size: 512.]{
\includegraphics[width = 0.97 \textwidth]{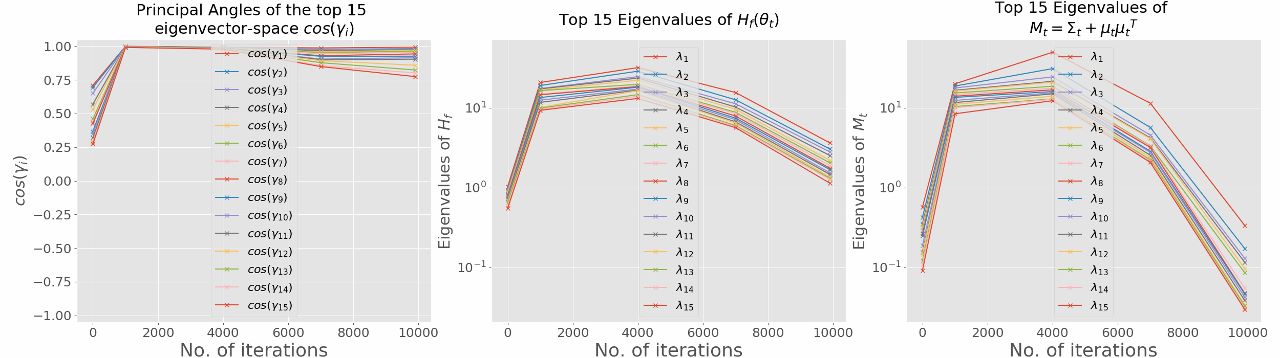}
 }
 \caption[]{Dynamics of top 15 principal angles between $H_f(\theta_t)$ and $M_t$ (left) and corresponding eigenvalues of $H_f(\theta_t)$ (middle) and $M_t$ (right) for CIFAR-10. $\cos(\gamma_1)$ to $\cos(\gamma_{10}) \approxeq 1$, indicating the top 10 principal subspaces are well aligned.} 
 \label{fig:principal_angle_cifar10}
\end{figure}

\subsection{Top Subspaces: Hessian and Second Moment} Proposition \ref{prop:hessian} indicates that, at any time during training, $H_f(\theta_t)$ and $M_t$ are related but differ by a residual term $H_p(\theta_t)$. Our empirical results show that the impact of $H_p(\theta_t)$ on the eigen-spectrum dynamics of $M_t$ and $H_f(\theta_t)$ is persistent and not negligible. But {\em how does $H_p(\theta_t)$ impact the primary subspaces}, i.e., corresponding to the largest positive eigenvalues, of $M_t$ and $H_f(\theta_t)$?

To answer this question, we carefully assess the overlap between the principal eigen-spaces of $H_f(\theta_t)$ and $M_t$ determined by the eigenvectors corresponding to the top (positive) eigenvalues of these matrices during training based on {\em principal angles}; additional results based on Davis-Kahan \cite{daka70,yuws15} perturbation can be found in the appendix.
Recall that~\textit{principal angles} \cite{golo13} $\{\gamma_i\}_{i=1}^q$ between two subspaces $P$ and $Q$ in $\R^p$, whose dimensions satisfy $p \geq \dim(P) \geq \dim(Q) = q \geq 1$, are defined recursively by 
\beq
\cos(\gamma_i)  \triangleq u_i^T v_i = \max_{\substack{u \in P, \|u\|_2 = 1,\\ u^T[u_1,\ldots,u_{i-1}] = 0}}~~ \max_{\substack{v \in Q, \|v\|_2 = 1,\\ v^T[v_1,\ldots,v_{i-1}] = 0}} ~~u^Tv~.
\label{eq:principal_angle}
\eeq

Let $U$ and $V$ be two orthogonal matrices, e.g., eigenvectors of $H_f(\theta_t)$ and $M_t$, whose range are $P$ and $Q$ respectively, then we have $\cos(\gamma_i) = \omega_i$, where $\omega_i$ denotes the singular values of $U^TV$. Such relationships are also considered in Canonical Correlation Analysis (CCA) \cite{golo13}.

Figures~\ref{fig:principal_angle_gauss10}-\ref{fig:principal_angle_cifar10} show the evolution of the subspace overlap between $H_f(\theta_t)$ and $M_t$ in terms of the top-15 principal angles for Gauss-10, MNIST, and CIFAR-10 datasets. All datasets have 10 classes. The key observation is the top 10 principal subspaces of $H_f(\theta_t)$ and $M_t$ quickly align with each other and overlap almost completely during the entire training period. Additional results for Gauss-2 dataset whose top 2 principal subspaces overlap can be found in the appendix (Figure \ref{fig:principal_angle_k2}). Such persistent overlap occurs in both synthetic and real datasets, suggesting that the second moment of the SGs somehow carry second order information about the loss. 

Notice that in some scenarios, e.g., MNIST dataset (Figure~\ref{fig:principal_angle_mnist} (b)), the cosine value of all 15 principal angles stays high. However, comparing with its top 10 eigenvalues, the remaining eigenvalues of $M_t$ have much smaller values (Figure~\ref{fig:principal_angle_mnist} (b):right). Thus, the overlap in these subspaces will not significantly affect the behavior of SGD.

\begin{table}
\caption{Summary of the probabilistic model $p(y|x, \theta)$, the Hessian of the empirical loss $H_f(\theta_t)$, the second moment $M_t$ and the residual term $H_p(\theta_t)$ for least squares (second column) and binary logistic regression (last column). Here $\sigma_{\theta_t}(x_i) = p_{\theta_t}(x_i)(1 - p_{\theta_t}(x_i))$. In high dimensional case, when $\theta_t$ approaches the optimum, $H_p(\theta_t)$ does not approach zero for least squares, and $H_p(\theta_t)$ has the same order as $H_f(\theta_t)$ for logistic regression. }
\label{tab:ex:fisher}
    \centering
    \begin{tabular}{|c|c|c|}
        \hline
         & Least Squares & Binary Logistic Regression \\
        \hline
        \rule{0pt}{4ex} 
        $p(y|x, \theta)$ \rule[-2ex]{0pt}{0pt}  & $\frac{1}{\sqrt{2 \pi} \sigma} e^{- \frac{(x^T
		\theta - y)^2}{2 \sigma^2}}$ & $\frac{1}{(1 + e^{- x_i^T \theta})^{y_i}} \cdot \frac{1}{(1 + e^{ x_i^T \theta})^{1 - y_i}} $ \rule[-2ex]{0pt}{0pt}  \\
		\hline
		\rule{0pt}{4ex} 
		$H_f(\theta_t)$ & $\displaystyle \frac{1}{n \sigma^2} X^T X$ \rule[-2ex]{0pt}{0pt} & $\frac{1}{n} \sum_{i=1}^n \sigma_{\theta_t}(x_i) x_i x_i^T$ \\
		\hline
		\rule{0pt}{4ex} 
		$M_t$ & $\frac{1}{n \sigma^4} \sum_{i=1}^{n} (x_i^T \theta_t - y_i)^2 x_i x_i^T$ \rule[-2ex]{0pt}{0pt} & $\frac{1}{n} \sum_{i=1}^{n} (p_{\theta_t}(x_i) - y_i)^2 x_i x_i^T$ \\
        \hline
        \rule{0pt}{4ex} 
        $H_p(\theta_t)$ & $\frac{1}{n \sigma^4} \sum_{i=1}^{n} [(x_i^T \theta_t - y_i)^2 - \sigma^2] x_i x_i^T$ \rule[-2ex]{0pt}{0pt} & $\frac{1}{n} \sum_{i=1}^{n} [(p_{\theta_t}(x_i) - y_i)^2 - \sigma_{\theta_t}(x_i)] x_i x_i^T$ \\
        \hline
    \end{tabular}
\end{table}

\subsection{Relationship with Fisher Information} 
The observation that the primary subspaces of $M_t$ indeed overlap with the primary suspaces $H_f(\theta_t)$ is somewhat surprising. One potential explanation can be based on connecting the decomposition in Proposition~\ref{prop:hessian} with the Fisher Information matrix.

Denoting $\theta^*$ as the true parameter for the generative model $p(\theta^*; Z)$ and with $f(\theta^*; Z) = -\log p(\theta^*; Z)$, the Fisher Information matrix \cite{leca98,rao45} is defined as 
\begin{equation}
I(\theta^*) = \E_Z [ M(\theta^*) ] = \E_Z \left[ \nabla f (\theta^*; Z) \nabla f(\theta^*; Z)^T \right]~,
\end{equation}
where $\nabla f(\theta^*; Z)$ is often referred to as the score function.
Under so-called regularity conditions \cite{coth06,amna00} (see Appendix \ref{sec:app:fisher} for more details), 
 the Fisher Information can also be written as 
\begin{equation}
    I(\theta^*) =  \E_Z [ H_f(\theta^*) ] = \E_Z \left[ \nabla^2 f(\theta^*; Z) \right]~.
    \label{eq:fish1}
\end{equation}
In particular, using integration by parts and recalling that $f(\theta^*; Z) = - \log p(\theta^*; Z)$, we in fact have
\begin{equation}
\E_Z \left[ \nabla^2 f(\theta^*; Z) \right] = \E_Z \left[ \nabla f (\theta^*; Z) \nabla f(\theta^*; Z)^T \right] + \E_Z \left[ \frac{1}{p(\theta^*;Z)} \nabla^2 p(\theta^*; Z) \right]~,
\label{eq:fish2}
\end{equation}
which is line with Proposition~\ref{prop:hessian}. However, under the regularity conditions, we have (see Appendix \ref{sec:app:fisher}) $\E_Z \left[ \frac{1}{p(\theta^*;Z)} \nabla^2 p(\theta^*; Z) \right] =0$, which makes the two forms of $I(\theta^*)$ equal. However, for finite samples, the expectation $\E_Z$ is replaced by $\frac{1}{n} \sum_{i=1}^n$, so the $\frac{1}{p(\theta^*;Z)}$ term does not cancel out, and the finite sample $H_p(\theta)$ in Proposition~\ref{prop:hessian} does not become zero. In addition, $\theta_t$ during the SGD iterations are not the true parameter $\theta^*$, so the quantities involved in Proposition~\ref{prop:hessian} are not quite the finite sample versions of Fisher Information due to model misspecification. 

Our empirical results show that in the context of Proposition~\ref{prop:hessian} the quantities corresponding to \eqref{eq:fish1} and \eqref{eq:fish2} are not the same possibly due to the finite sample over-parameterized setting, inaccurate estimate of $\theta^*$, and the non-smoothness of the Relu activation.

Especially when the model is over-parameterized, even for smooth loss functions, we may still observe $H_f(\theta)$ to be different from $M_t$, e.g., see the analysis on over-parameterized least squares (Example \ref{ex:ls}) and over-parameterized logistic regression (Example~\ref{ex:lr}) below. Thus, Fisher information alone is not sufficient to explain the overlap between the primary subspaces of $H_f(\theta_t)$ and $M_t$. 

\begin{restatable}[Least Squares]{exmp}{exmpi}
\label{ex:ls}
Let $X = [x_1, \ldots, x_n]^T \in \R^{n \times p}$ be the design matrix and $Y = [y_1, \ldots, y_n]^T \in \R^{n}$ be the response vector for $n$ training samples. Given a sample $z_i=(x_i,y_i)$ defined in \eqref{eq:zi}, we assume the following linear relationship holds: $y_i = x_i^T\theta + \epsilon_i$, where $\epsilon_i \sim \mathcal{N}(0,\sigma^2)$ is a Gaussian noise with mean $0$ and variance $\sigma^2$. The empirical loss of the least squares problem is given by 
\begin{equation}
{f}(\theta) = \frac{1}{2n\sigma^2} \sum_{i=1}^{n} (x_i^T\theta - y_i  )^2~.  
\end{equation}
The Hessian of the empirical loss ${H}_f(\theta_t)$, the second moment ${M}_t$, and the residual term ${H}_p(\theta_t)$ can be directly calculated (see Appendix \ref{subsec:ex1}) and has been summarized in Table~\ref{tab:ex:fisher} (second column). In high dimensional case when $n<p$, the optimal solution $\hat{\theta}$ satisfies $X\hat{\theta} = Y$. As $\theta_t$ approaches $\hat{\theta}$, we have
\beq
	H_p(\hat{\theta}) = - \frac{1}{n \sigma^2} X^T X,\qquad \text{and} \qquad M(\hat{\theta})=0~,
\eeq
so that $H_f(\hat{\theta}) = - H_p(\hat{\theta})$ and $H_f(\hat{\theta}) \neq M(\hat{\theta})$.
\end{restatable}

\begin{restatable}[Logistic Regression]{exmp}{exmpii}
\label{ex:lr}
The empirical loss of binary logistic regression for $n$ observations $(x_i,y_i), i=1,\ldots,n$ and $y_i \in \{0,1\}$ is given by
\beq
    {f}(\theta) = -\frac{1}{n} \sum_{i=1}^n \log \left( p_{\theta}(x_i)^{y_i}(1 - p_{\theta}(x_i))^{(1 - y_i)} \right)~,\qquad \text{where} \qquad p_{\theta}(x_i) = \frac{1}{1 + e^{- x_i^T \theta}}~.
    \label{loss:lr}
\eeq
The Hessian of the empirical loss ${H}_f(\theta_t)$, the second moment ${M}_t$, and the residual term ${H}_p(\theta_t)$ can be calculated directly (see Appendix \ref{subsec:ex2}) and has been summarized in Table~\ref{tab:ex:fisher} (last column). In the high dimensional case, the data are always linearly separable. Thus $\sum_{i=1}^n (p_{\theta_t}(x_i)-y_i)^2$ can be arbitrarily small, depending on $\| x_i \|_2$. When $\theta_t$ approaches the optimum, we have $\sigma_{\theta_t}(x_i) \gg (p_{\theta_t}(x_i)-y_i)^2 \approx 0$, therefore

\beq
	-H_p(\theta_t)  \approx H_f(\theta_t) = \frac{1}{n} \sum_{i=1}^n \sigma_{\theta_t}(x_i) x_i x_i^T \succ \frac{1}{n} \sum_{i=1}^n (p_{\theta_t}(x_i)-y_i)^2 x_i x_i^T = M_t \approx 0~,
\eeq

so that $-{H}_p(\theta_t)$ approaches ${H}_f(\theta_t)$ and $M_t \nrightarrow H_f(\theta_t) $ as $t \rightarrow \infty$.  
\end{restatable}

\begin{figure}[t!]
\vspace{-5mm}
\centering
\includegraphics[width = 0.5 \textwidth]{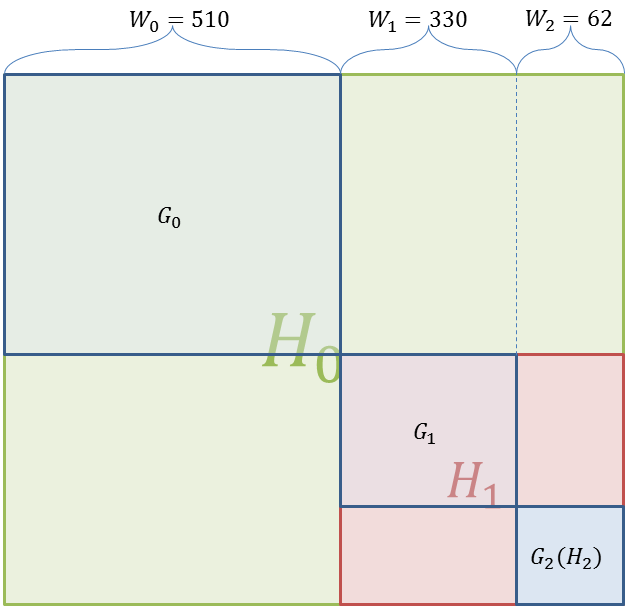}
\caption[]{Notations for layer-wise Hessian analysis. $G_h,h=0,1,2$: the diagonal blocks of $H_f(\theta_t)$, where the partial derivatives are taken with respect to the weights in the same layer. $H_h,h=0,1,2$: the Hessian of the sub-network starting from the layer $h$. $H_0$ is the full Hessian $H_f(\theta_t)$, and $H_2$ is the same as $G_2$.}
 \label{fig:layerwise_block_def}
\end{figure}

\begin{figure}[p!]
\centering
 \subfigure[Gauss-10, batch size 5, 0\% Random labels.]{
 \includegraphics[width = 0.48 \textwidth]{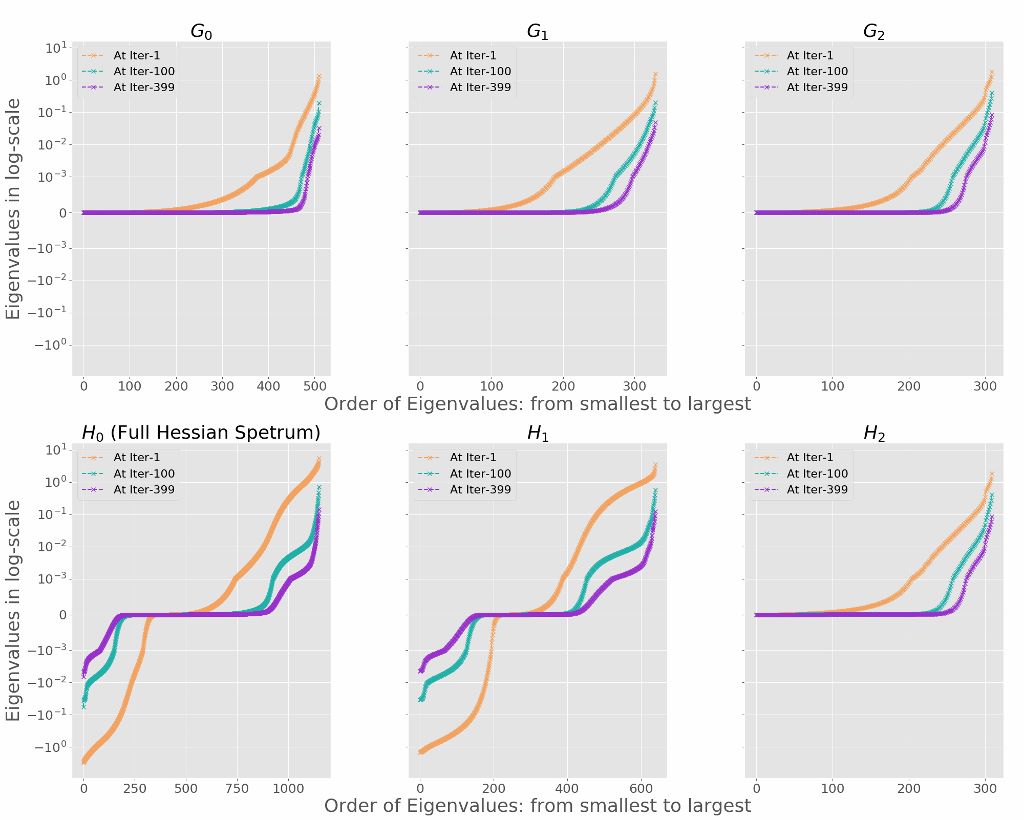}
 }
 \subfigure[Gauss-10, batch size 5, 15\% Random labels.]{
 \includegraphics[width = 0.48 \textwidth]{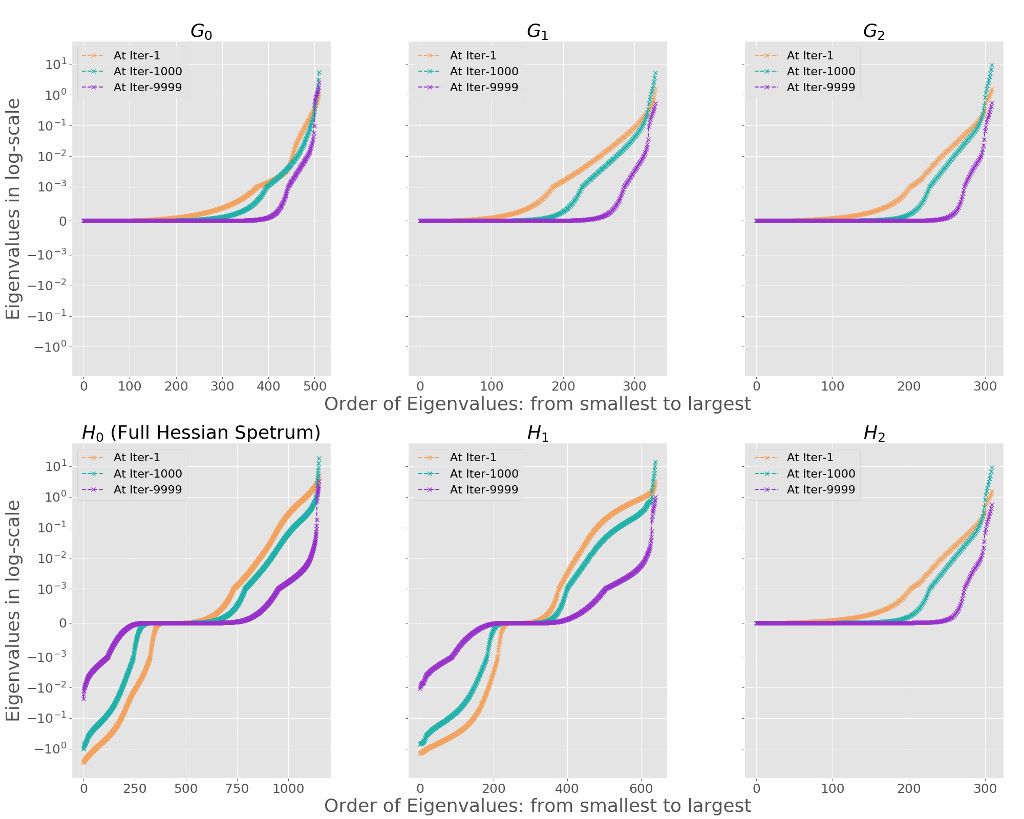}
 }
\subfigure[Gauss-10, batch size 50, 0\% Random labels.]{
 \includegraphics[width = 0.48 \textwidth]{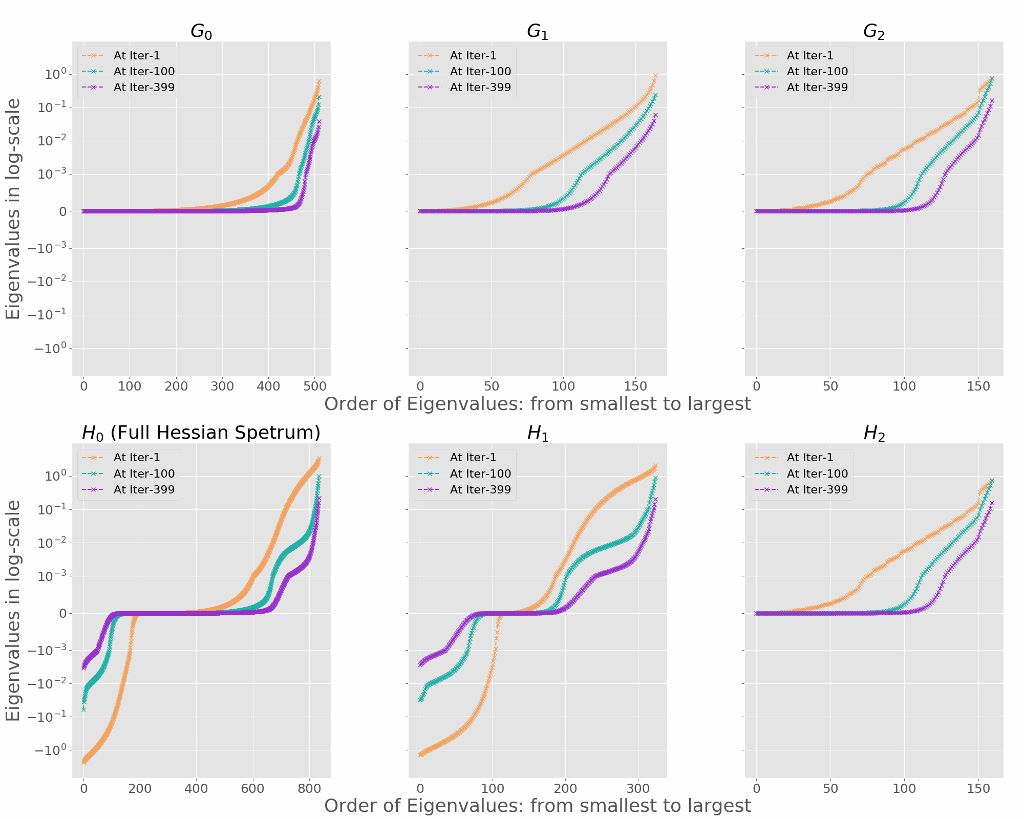}
 }
 \subfigure[Gauss-10, batch size 50, 15\% Random labels.]{
 \includegraphics[width = 0.48 \textwidth]{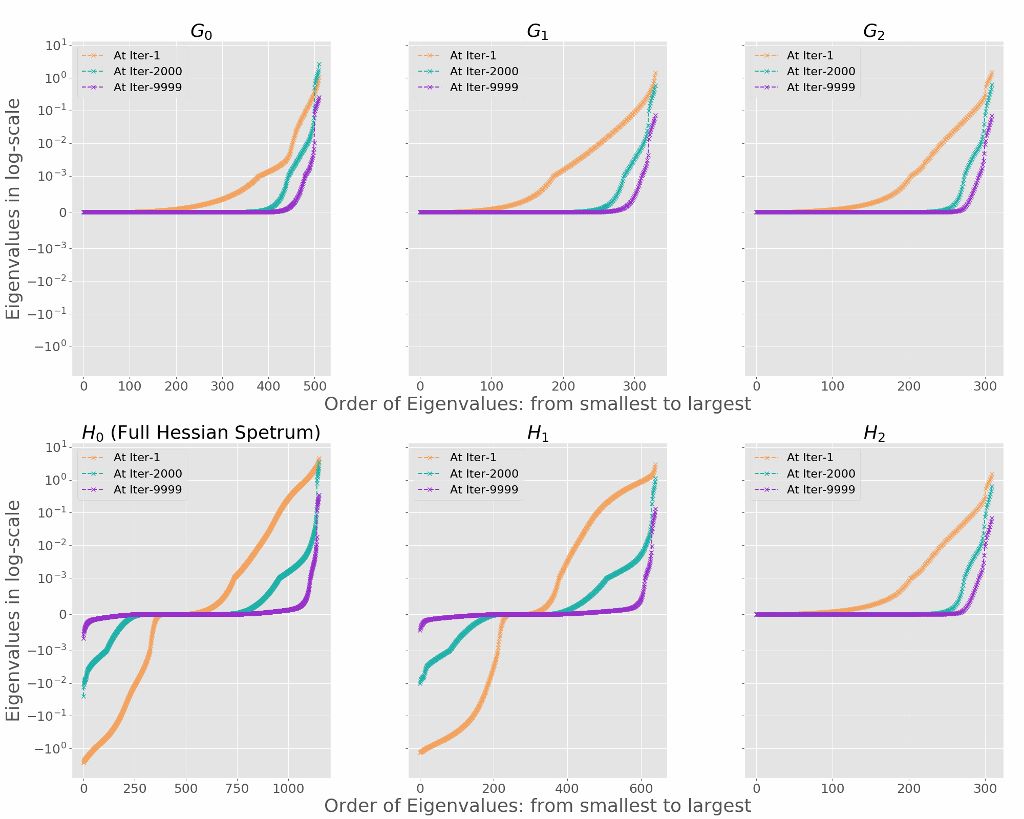}
 }
 \caption[]{Eigen-spectrum dynamics of $H_h$ and $G_h, h=0,1,2$ for networks trained on Gauss-10 dataset. (a) and (b): small batches containing one twentieth of the training samples (5/100); (c) and (d): large batches containing half of the training samples (50/100). All $G_h$s are positive semi-definite matrices whose top eigenvalues have the same order of magnitude, indicating that the top few large eigenvalues of $H_f(\theta_t)$ can not come from the output layer alone.}
 \label{fig:layerwise_hessian_k10}
\end{figure}

\begin{figure}[p!]
\vspace{-5mm}
\centering
 \subfigure[Gauss-10, batch size 5, 0\% Random labels.]{
 \includegraphics[width = 0.9 \textwidth]{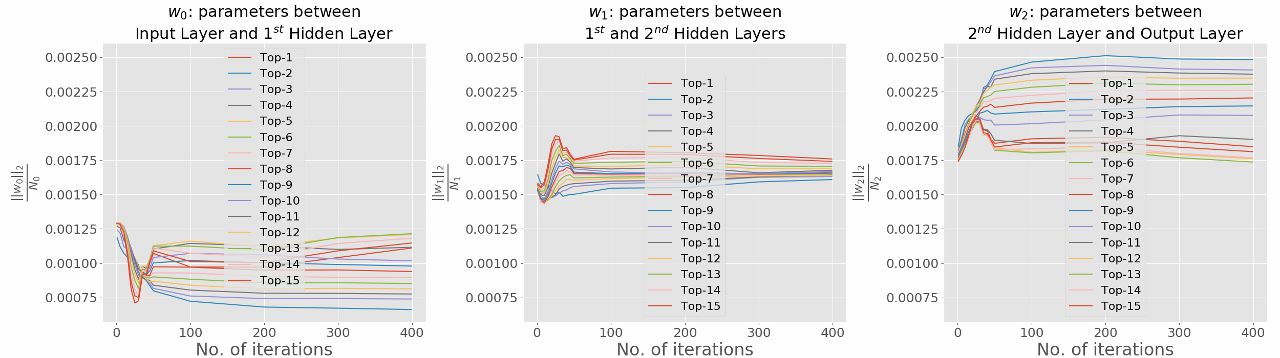}
 }
 \subfigure[Gauss-10, batch size 5, 15\% Random labels.]{
 \includegraphics[width = 0.9 \textwidth]{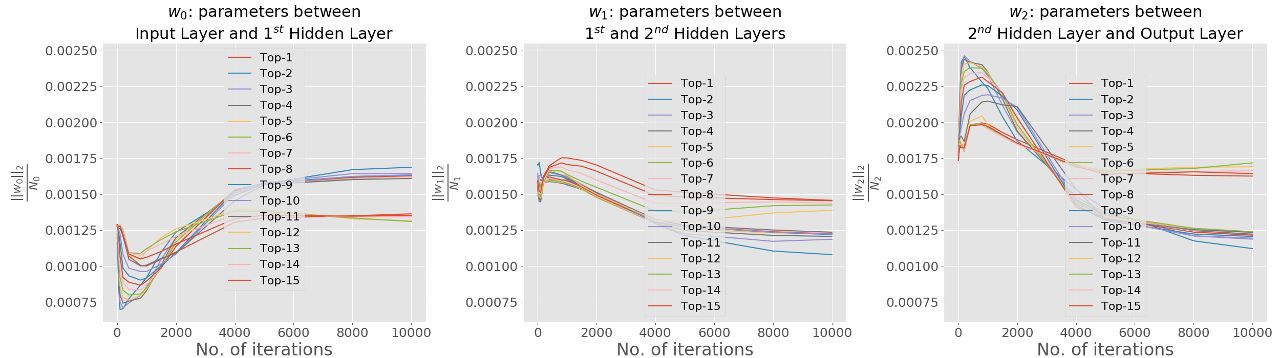}
 }
 \subfigure[Gauss-10, batch size 50, 0\% Random labels.]{
 \includegraphics[width = 0.9 \textwidth]{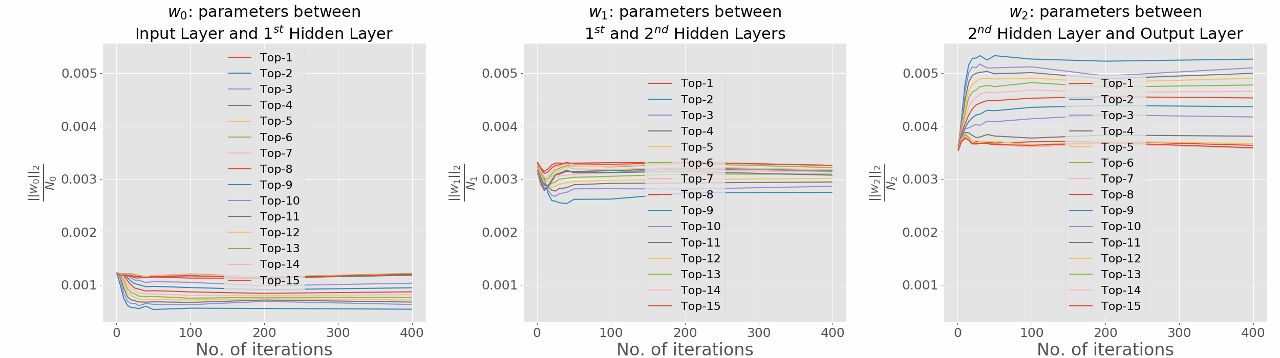}
 }
 \subfigure[Gauss-10, batch size 50, 15\% Random labels.]{
 \includegraphics[width = 0.9 \textwidth]{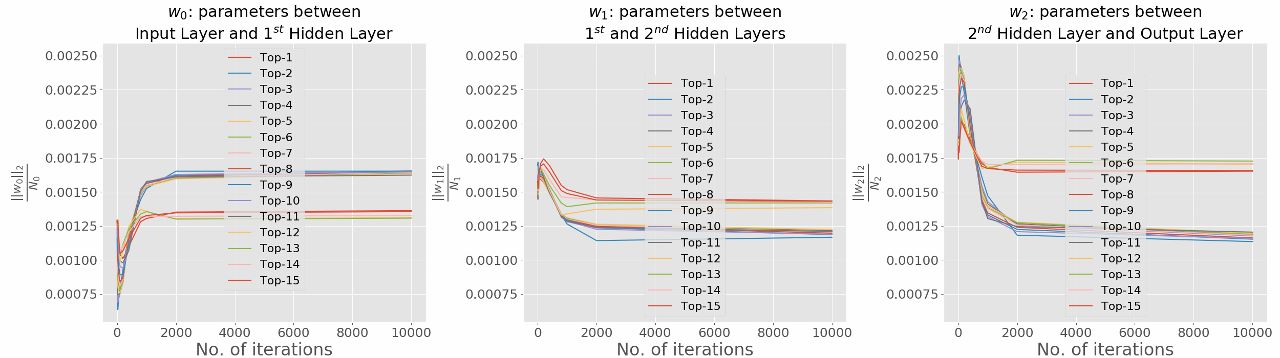}
 }
 \caption[]{Layer-wise eigenvector loadings for networks trained
on Gauss-10 dataset. (a) and (b): small batches containing one twentieth of the training samples (5/100); (c) and (d): large batches containing half of the training samples (50/100). Even though for simple problem, it seems that the 
output layer always has the loading while layer 0 contributes less, such relationship does not always hold for hard problem.}
 \label{fig:layerwise_loadings}
\end{figure}

\subsection{Layer-wise Hessian} 
We also provide a layer-wise analysis of the curvature as obtained from the Hessian. The Hessian $H_f(\theta_t)$ of a 2-hidden layer Relu network can be thought of as a collection of several block matrices (Figure~\ref{fig:layerwise_block_def}). We use $h = 0$ to denote the lowest layer which connects to the input and $h = 2$ corresponds to the output layer. Let $G_h,h=0,1,2$ denote the diagonal blocks of $H_f(\theta_t)$, such that each element in $G_h$ is the partial derivative taken with respect to the weights in the same layer, 
\beq
    G_h[kl,k'l']=\frac{\partial^2 f(\theta)}{\partial w_{kl}[h] \partial w_{k'l'}[h]} = \left(\frac{\partial\phi}{\partial w_{kl}[h]} \right)^T\nabla^2_{\phi} f(\theta) \frac{\partial\phi}{\partial w_{k'l'}[h]}
    \label{hess:layer}
\eeq

where $\phi \in \R^k$ is the output of the network defined in Section \ref{sec:prelim}, $w_{kl}[h]$ is the weight at the layer $h$ connecting the node $k$ from the previous layer and node $l$ at the layer $h$, and $H_h,h=0,1,2$ is the Hessian of the sub-network starting from the layer $h$, with $H_0$ being the full Hessian $H_f(\theta_t)$, and $H_2$ being the same as $G_2$. From \eqref{hess:layer}, every matrix $G_h$ is positive semi-definite (PSD) since $\nabla^2_\phi(\theta)$, the Hessian of the logistic loss, is PSD. The definitions of $G_h$ and $H_h$ has been depicted in Figure~\ref{fig:layerwise_block_def}. Figure~\ref{fig:layerwise_hessian_k10} shows the eigen-spectrum dynamics of $H_h$ and $G_h$, for $h=0,1,2$. We observe that the top eigenvalus of all $G_h$ 
are of the same order of magnitude. 
To get a better sense of which layer contributes more, we also analyze the eigenvectors corresponding to the top eigenvalues of $H_0$. We evaluate the magnitude of the eigenvector components corresponding to each layer, normalized by the layer size, and the results can be found in Figure \ref{fig:layerwise_loadings}. Overall, for simple problem, layer 2 (connected to the output) always has the largest value while layer 0 contributes less. Such a relationship holds for hard problem at the beginning of the training, then the difference among the 3 layers shrinks, and eventually all layers have almost equal contributions.

    
\section{SGD Dynamics}
\label{sec:dynamics}

\begin{figure}[t!]
\centering
 \subfigure[Gauss-10, batch size 5, 0\% random labels]{
 \includegraphics[width = 0.98 \textwidth]{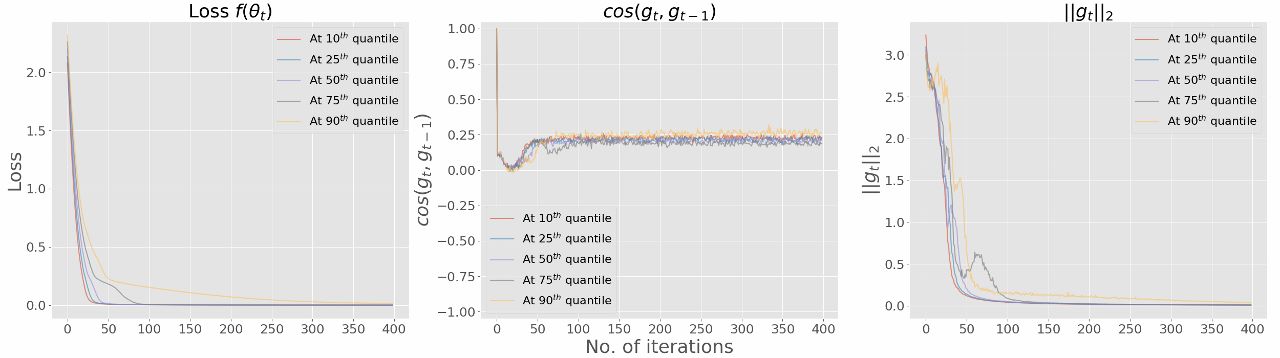}
 }
 \subfigure[Gauss-10, batch size 5, 15\% random labels]{
 \includegraphics[width = 0.98 \textwidth]{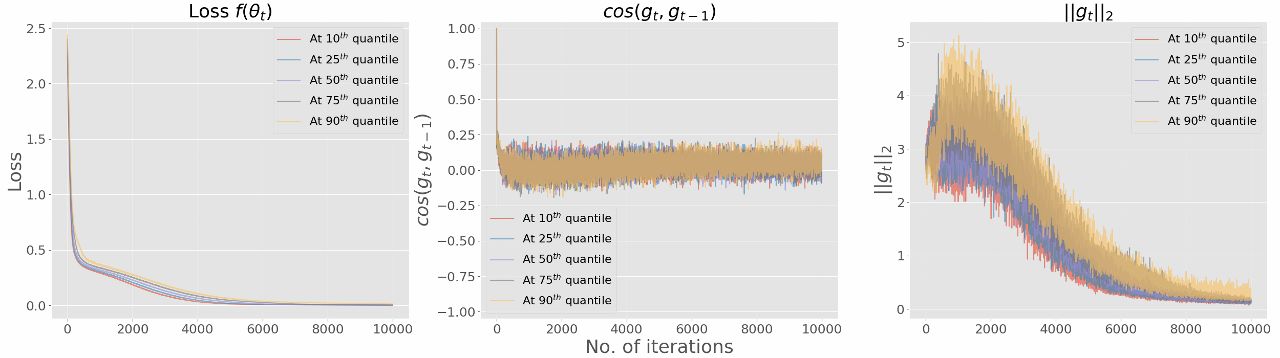}
 }
 \caption[]{Gauss-10: Dynamics of the loss $f(\theta_{t})$ (left), the angle of two successive SGs $\cos(g_t,g_{t-1})$ (middle), and the norm of the SGs $\|g_t\|_2$ (right) at different quantiles of $f(\theta_{t})$.} 
 \label{fig:loss_cos_g2}
\end{figure}

In this section, we study the empirical dynamics of the loss, cosine of the angle between subsequent SGs, and the $\ell_2$ norm of the SGs based on constant step-size SGD for fixed batch sizes and averaged over 10,000 runs. We then present a distributional characterization of the loss dynamics as well as a large deviation bound of the change in loss at each step. Further, we specialize the analysis for the special cases of least squares regression and logistic regression to gain insights for these cases.  Finally, we present convergence results to a stationary point for mini-batch SGD with adaptive step sizes as well as adaptive preconditioning.

\begin{figure}[t!]
\centering
\subfigure[Gauss-10, Batch size: 5, 0\% Random labels.]{
\includegraphics[width = 0.99 \textwidth]{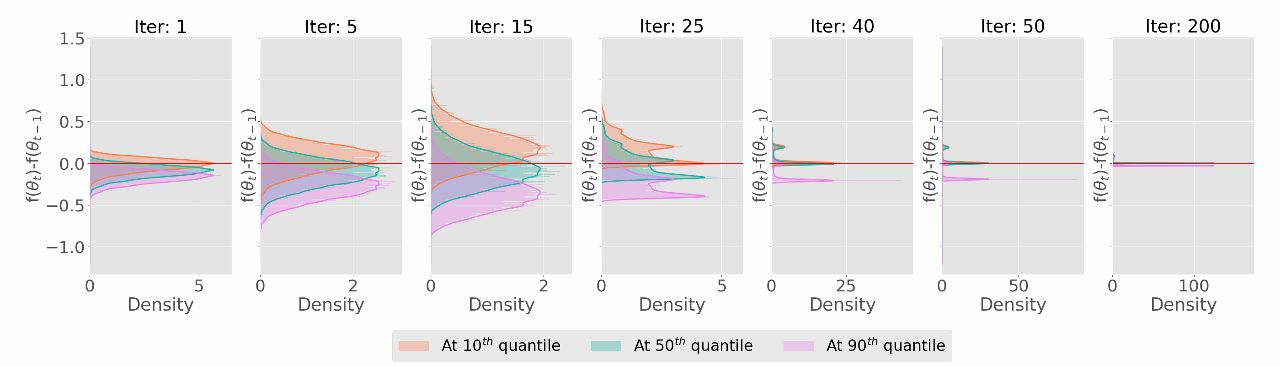}
\label{subfig:gauss10r0}
}
\subfigure[Gauss-10, Batch size: 5, 15\% Random labels.]{
\includegraphics[width = 0.99 \textwidth]{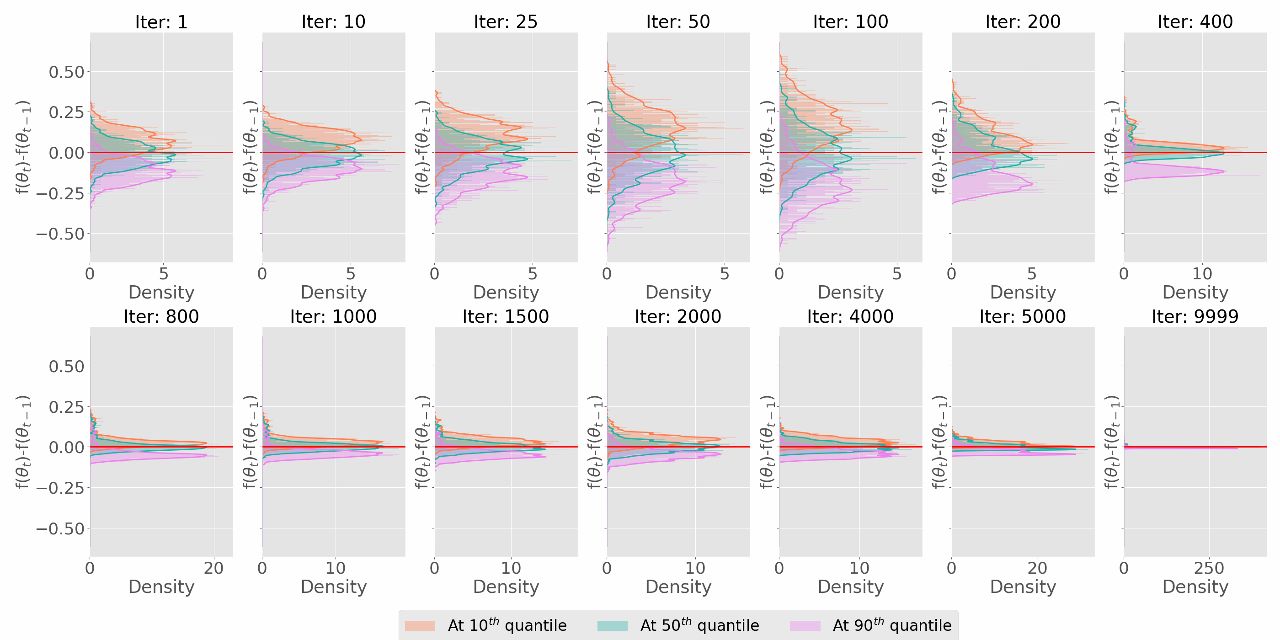}
\label{subfig:gauss10r15}
}
\caption[]{Dynamics of the distribution $\Delta_t(f)=f(\theta_{t+1}) - f(\theta_t)$ conditioned at $\theta_t$ for the Gauss-10 dataset trained with small batches containing one twentieth of training samples (5/100). Red horizontal line highlights the value of $\Delta_t(f)=0$. The loss-difference dynamics mainly consist of two phases: (1) the mean of $\Delta_t(f)$ decreases with an increase of variance (see (a): iteration 1 to 15, and (b): iteration 1 to 100); (2) the mean of $\Delta_t(f)$ increases and reaches 0 while the variance shrinks ( see (a):  iteration 25 to 200, and (b): iteration 200 to the end).} 
\label{fig:loss_dynamic}
\end{figure}

\begin{figure}[t!]
\centering
 \subfigure[Gauss-10, batch size 5, 0\% Random labels]{
 \includegraphics[width = 0.4 \textwidth]{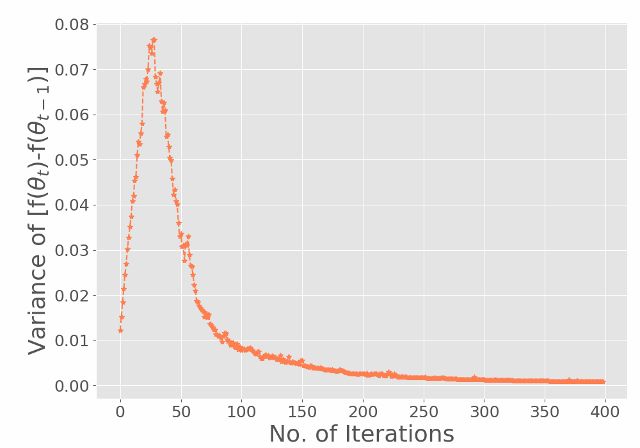}
 } 
 \subfigure[Gauss-10, batch size 5, 15\% random labels]{
 \includegraphics[width = 0.4 \textwidth]{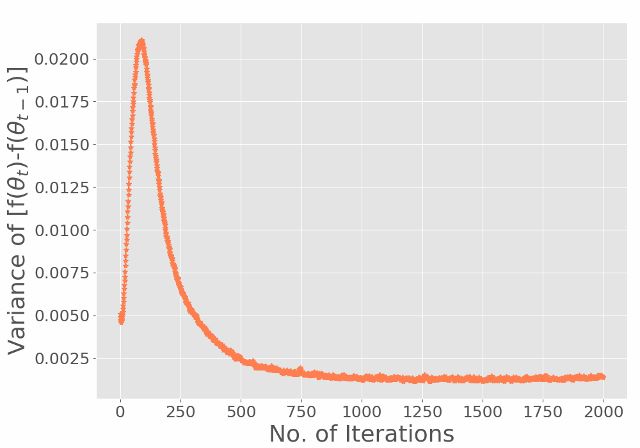}
 }
 \subfigure[Gauss-10, Batch size 50, 0\% Random labels.]{
 \includegraphics[width = 0.4 \textwidth]{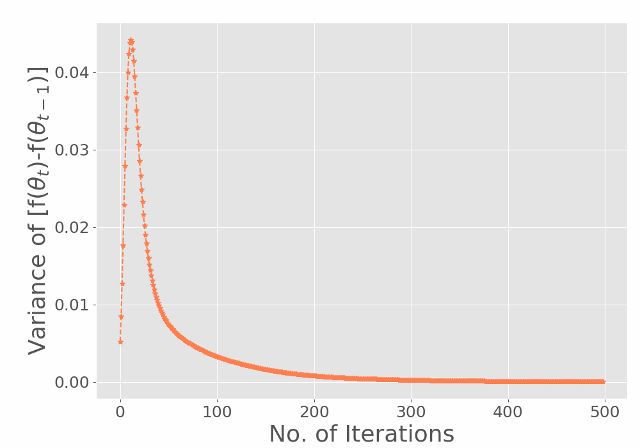}
 } 
 \subfigure[Gauss-10, Batch size 50, 15\% Random labels.]{
 \includegraphics[width = 0.4 \textwidth]{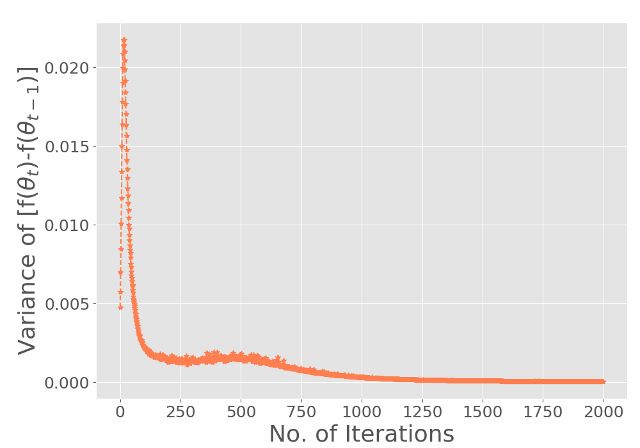}
 }
 \caption[]{The dynamics of the variance of $\Delta_t(f)=f(\theta_{t+1}) - f(\theta_t)$ conditioned at $\theta_t$ during training. The variance sharply increases with a short period of time at the beginning, then continues to decrease until convergence. For both easy and hard problem with various batch sizes, the variance exhibits a similarly behavior. }  
 \label{fig:loss_diff_var}
\end{figure}

\subsection{Empirical Loss Dynamics}  
\label{sec:emp}

The stochastic parameter dynamics of $\theta_t$ as in \eqref{eq:sgd} and associated quantities such as the loss $f(\theta_t)$ can be interpreted as a stochastic process. Since we have 10,000 realizations of the stochastic process, i.e., parameter and loss trajectory based on SGD, we present the results at different quantiles of the loss $f(\theta_t)$ at each iteration $t$.

{\bf SGD dynamics.} Figure~\ref{fig:loss_cos_g2} shows the dynamics of the loss $f(\theta_t)$, the angle between two consecutive SGs $\cos(g_t,g_{t-1})$, and the $\ell_2$ norm of the SGs $\|g_t\|_2$ for problems with different levels of difficulty, i.e., percentage of random labels in Gauss-10. For the simple problem with true labels (0\% random) (Figure \ref{fig:loss_cos_g2}(a)), SGD converges fast in less than 50 iterations. The three quantities $f(\theta_t), \cos(g_t, g_{t-1})$, and $\|g_t\|_2$ of all quantiles exhibit similar behavior, and decrease rapidly in the early iterations. Both $f(\theta_t)$ and $\|g_t\|_2$ converge to zero, while $\cos(g_t, g_{t-1})$ reaches a steady state and oscillates around 0.25, indicating subsequent SGs are almost orthogonal to each other (the angle between subsequent SGs is greater than $75^{\circ}$).

The dynamics become more interesting with 15\% random labels, the more difficult problem. In the initial phase, all quantiles of the loss and the angle between subsequent SGs drop sharply, similar to the 0\% random label case. On the other hand, the gradient norms $\|g_t\|_2$  of all quantiles increase despite the slight drop for the $75^{th}$ and $90^{th}$ quantiles at the very beginning (see Figure \ref{fig:loss_cos_g2_first100} in the appendix for more details). Once the gradient norm $\|g_t\|_2$  peaks,  and $\cos(g_t, g_{t-1})$ hits a valley, SGD enters a convergence phase. At this late phase, the gradient norm $\|g_t\|_2$  shows a steady decrease, while $\cos(g_t, g_{t-1})$ grows again until it reaches a steady state and begins to oscillate around 0. The loss persistently reduces to 0, but the rate of change also declines. We also observe similar dynamics in both MNIST and CIFAR-10 (see Figures~\ref{fig:loss_cos_g2_cifar} and~\ref{fig:loss_cos_g2_mnist} in the Appendix).

{\bf Empirical loss dynamics.} Let $\Delta_t(f)$ denote the stochastic loss difference, i.e., 
\beq
\Delta_t(f) = f(\theta_{t + 1}) - f(\theta_t)~.
\label{eq:loss_diff}
\eeq

Figure~\ref{fig:loss_dynamic} shows the empirical distributions of $\Delta_t(f)$ at the $10^{th},50^{th}$ and $90^{th}$ quantiles of the loss at iteration $t$ respectively. In particular, we get the  empirical distribution of the $q^{th}$ quantile for every iteration from 10,000 runs of SGD. Overall, the distributions of $\Delta_t(f)$ are roughly symmetric, and mainly contain two stages of change: in the first stage, the means of both the upper ($90^{th}$) and lower ($10^{th}$) quantile distributions  move away from the red horizontal line where $\Delta_t(f)=0$ (Figure~\ref{fig:loss_dynamic_k10} (a): iteration 1 to 15, and (b): iteration 1 to 100) while the variance of $\Delta_t(f)$ grows for all quantiles (Figure~\ref{fig:loss_diff_var} (a) and (b)). In the subsequent stage, the mean of both the upper ($90^{th}$) and lower ($10^{th}$) quantiles moves towards zero (Figure~\ref{fig:loss_dynamic_k10} (a): iteration 25 to 200, and (b): iteration 200 to 9999), while the variance of all quantiles shrinks significantly. As SGD converges, the mean of $\Delta_t(f)$ at all quantiles stays near zero, and the variance becomes very small.

\subsection{Deviation Bounds for Loss Dynamics} 
We consider the following two types of SGD updates: 
\beq
\theta_{t+1} = \theta_t - \eta_t \nabla \tilde{f}(\theta_t), 
\eeq
to be referred to as {\em vanilla SGD} in the sequel, and
\beq
\theta_{t+1} = \theta_t - A_t \nabla \tilde{f}(\theta_t),
\label{eq:sgd:pre}
\eeq
to be referred to as {\em preconditioned SGD} with diagonal preconditioner matrix $A_t$. Recall that here $\nabla \tilde{f}(\theta_t)$ represents the SG of $f$ at iteration $t$ computed based on a mini-batch of $m$ samples. Notice that if we take $A_t$ to be $\eta_t\I$, preconditioned SGD becomes vanilla SGD. 

Assuming SGs follow a multi-variate distribution with mean $\mu_t$ and covariance $\frac{1}{m}\Sigma_t$, we can represent the SG as
\begin{equation}
\nabla \tilde{f}(\theta_t) = \mu_t + \frac{1}{\sqrt{m}}\Sigma_t^{1/2} g~,
\label{eq:generative}
\end{equation}
where $g$ is a random vector sampled uniformly from $\sqrt{p}\s^{p-1}$  \cite{vers12,leta91} representing a sphere of radius $\sqrt{p}$ in $\R^p$. The assumption on $g$ is reasonable since in a high dimensional space, sampling from an isotropic Gaussian distribution is effectively the same as
sampling from (a small annular ring around) the sphere with high probability~\cite{vers12}.
Let us denote the batch dependent second moment of the SG as
\begin{equation}
    M_t^{(m)} \triangleq \E \left[\nabla \tilde{f}(\theta_t)\nabla \tilde{f}(\theta_t)^T \right] = \frac{1}{m}\Sigma_t + \mu_t \mu_t^T~.
\end{equation}
We show in theory that for vanilla SGD, our observations of the two-phase dynamics of $\Delta_t(f)$ conditioned on $\theta_t$, i.e., the inter-quantile range and variance increasing first then decreasing, in Figure~\ref{fig:loss_dynamic} can be characterized by the Hessian $H_f(\theta)$, the covariance $\Sigma_t$ and associated quantities introduced in Section \ref{sec:hessian}. To proceed with our analysis, we make Assumption \ref{asp:rs}, and then present Theorem \ref{thm:dynamic} characterizing a conditional expectation and large deviation bound for $\Delta_t(f)$ defined in \eqref{eq:loss_diff}.

\begin{asmp}[Bounded Hessian]
\label{asp:rs}
    Let $R(\theta_t) = \{ \theta_t - \eta_t\mu_t - \eta_t \frac{1}{\sqrt{m}}\Sigma_t^{1/2}g: \forall g \in \sqrt{p}B_{p}\}$, where $\eta_t$ is the step size in \eqref{eq:sgd}, and $\sqrt{p}B_2^{p}$ is a ball of radius $\sqrt{p}$ in $\R^p$. There is an $L > 0$ such that $\|H_f(\theta)\|_2 \leq L$ for all $\theta \in R(\theta_t) $.
\end{asmp}

Assumption \ref{asp:rs} is the so called local smoothness condition \cite{valz15}. From Figure \ref{fig:principal_angle_gauss10}, \ref{fig:principal_angle_mnist}, and \ref{fig:principal_angle_cifar10}, the largest eigenvalues of $H_f(\theta_t)$ will decrease after the first few iterations. Therefore it is reasonable to assume the spectral norm of $H_f(\theta)$ to be bounded when $\theta$ is close to a point in SGD iterations.

\begin{restatable}{theo}{theoi}
\label{thm:dynamic}
Let $\Delta_t(f) = f(\theta_{t + 1}) - f(\theta_t)$. If Assumption \ref{asp:rs} holds, we have for vanilla SGD \eqref{eq:sgd}
\begin{equation} \label{Eq: dynamic}
    - \eta_t \| \mu_t \|_2^2 - \frac{\eta_t^2}{2}L\tr M_t\leq\mathbb{E}_{\theta_{t + 1}} \left[ \Delta_t(f)\bigg|\theta_t \right]
    \leq- \eta_t \| \mu_t \|_2^2 + \frac{\eta_t^2}{2}L\tr M_t~. 
\end{equation}    

Further, for all $s > 0$, we have 
\begin{align}
P  \bigg[ \Delta_t(f) - \bigg\{ - \eta_t \| \mu_t \|_2^2 + \frac{\eta_t^2}{2}L\tr M_t \bigg\}  \geq  ~~~ s  \bigg| \theta_t \bigg]  
  \leq 2\exp \left[ -c \min \left( \frac{s^2}{ \alpha_{t,1}^2} , \frac{s^2}{ \alpha_{t,2}^2}, \frac{s}{ \alpha_{t,3}} \right) \right]~,
\\
P \bigg[ \Delta_t(f) - \bigg\{ - \eta_t \| \mu_t \|_2^2 - \frac{\eta_t^2}{2}L\tr M_t \bigg\}  \leq  -s  \bigg| \theta_t \bigg] 
  \leq 2\exp \left[ -c \min \left( \frac{s^2}{ \beta_{t,1}^2} , \frac{s^2}{ \alpha_{t,2}^2}, \frac{s}{ \alpha_{t,3}} \right) \right]~,
\end{align}
where 
\begin{align*}
\alpha_{t,1}  = \eta_t |1 - \eta_t L| \frac{1}{\sqrt{m}}\| \Sigma_t^{1/2} \mu_t \|_2~,& \qquad \beta_{t,1}  = \eta_t |1 + \eta_t L| \frac{1}{\sqrt{m}} \| \Sigma_t^{1/2} \mu_t \|_2~,\\
\alpha_{t,2}  = \frac{\eta_t^2 L}{2m}\left\| \Sigma_t \right\|_F~,& \qquad \alpha_{t,3}  = \frac{\eta_t^2 L}{2m}\left\| \Sigma_t \right\|_2~, 
\end{align*}
and $c > 0$ is an absolute constant. 
\end{restatable}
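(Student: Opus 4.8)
The plan is to combine a second-order Taylor estimate of $f$ along the SGD step with the uniform spectral bound of Assumption~\ref{asp:rs} to reduce $\Delta_t(f)$, conditioned on $\theta_t$, to a degree-two polynomial in the random direction $g$, and then invoke concentration for linear and quadratic functionals of a uniformly random point on the sphere $\sqrt p\,\s^{p-1}$. The expectation bounds and the tail bounds both fall out of the same reduction, with the only real work being the last (concentration) step.

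\emph{Step 1 (Taylor reduction and expectation bound).} Since $\theta_{t+1}-\theta_t=-\eta_t\nabla\tilde f(\theta_t)$ and $\nabla f(\theta_t)=\mu_t$, the integral-remainder form of Taylor's theorem gives
\[
\Delta_t(f) = -\eta_t\,\mu_t^T\nabla\tilde f(\theta_t) + \eta_t^2\!\int_0^1 (1-\tau)\,\nabla\tilde f(\theta_t)^T H_f\!\big(\theta_t-\tau\eta_t\nabla\tilde f(\theta_t)\big)\nabla\tilde f(\theta_t)\,d\tau .
\]
Every point $\theta_t-\tau\eta_t\nabla\tilde f(\theta_t)$, $\tau\in[0,1]$, lies in $R(\theta_t)$, so by Assumption~\ref{asp:rs} the integrand is bounded in absolute value by $\tfrac12 L\|\nabla\tilde f(\theta_t)\|_2^2$, whence
\[
-\eta_t\mu_t^T\nabla\tilde f(\theta_t)-\tfrac{\eta_t^2}{2}L\|\nabla\tilde f(\theta_t)\|_2^2 \;\le\; \Delta_t(f) \;\le\; -\eta_t\mu_t^T\nabla\tilde f(\theta_t)+\tfrac{\eta_t^2}{2}L\|\nabla\tilde f(\theta_t)\|_2^2 .
\]
(For piecewise-smooth losses such as ReLU nets the same estimate is applied on each smooth piece of the segment, the first-derivative boundary terms telescoping.) Substituting $\nabla\tilde f(\theta_t)=\mu_t+\tfrac{1}{\sqrt m}\Sigma_t^{1/2}g$ and taking $\mathbb{E}_{\theta_{t+1}}[\cdot\mid\theta_t]$ with $\mathbb{E}[g]=0$, $\mathbb{E}[gg^T]=\I$, one has $\mathbb{E}[\mu_t^T\nabla\tilde f(\theta_t)\mid\theta_t]=\|\mu_t\|_2^2$ and $\mathbb{E}[\|\nabla\tilde f(\theta_t)\|_2^2\mid\theta_t]=\|\mu_t\|_2^2+\tfrac1m\tr\Sigma_t\le\tr M_t$; taking expectations in the displayed sandwich then yields \eqref{Eq: dynamic}.

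\emph{Step 2 (centering and concentration).} Writing $w=\tfrac{1}{\sqrt m}\Sigma_t^{1/2}g$, expanding $\|\mu_t+w\|_2^2=\|\mu_t\|_2^2+2\mu_t^Tw+\|w\|_2^2$, subtracting the constant $\pm\tfrac{\eta_t^2}{2}L\tr M_t$, and using $\tr M_t=\|\mu_t\|_2^2+\tr\Sigma_t$ together with $\mathbb{E}\|w\|_2^2=\tfrac1m\tr\Sigma_t\le\tr\Sigma_t$ (so the leftover constant has the favourable sign and can be dropped), I would obtain for the upper tail
\[
\Delta_t(f)-\Big(-\eta_t\|\mu_t\|_2^2+\tfrac{\eta_t^2}{2}L\tr M_t\Big)\;\le\; -\eta_t(1-\eta_tL)\,\mu_t^Tw \;+\; \tfrac{\eta_t^2 L}{2}\big(\|w\|_2^2-\mathbb{E}\|w\|_2^2\big),
\]
and the mirror inequality (reversed, with $1+\eta_tL$ in place of $1-\eta_tL$) for the lower tail. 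The linear term equals $-\eta_t(1-\eta_tL)\tfrac{1}{\sqrt m}(\Sigma_t^{1/2}\mu_t)^Tg$, a linear functional of $g\in\sqrt p\,\s^{p-1}$, hence sub-Gaussian with parameter a constant multiple of $\alpha_{t,1}$ (for the lower tail, $\beta_{t,1}$). The quadratic term equals $\tfrac{\eta_t^2 L}{2m}\big(g^T\Sigma_t g-\mathbb{E}[g^T\Sigma_t g]\big)$, a centred quadratic form, which by a Hanson--Wright-type inequality for the uniform distribution on the sphere obeys a Bernstein tail with scales $\alpha_{t,2}=\tfrac{\eta_t^2 L}{2m}\|\Sigma_t\|_F$ and $\alpha_{t,3}=\tfrac{\eta_t^2 L}{2m}\|\Sigma_t\|_2$. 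Splitting $s$ between the two pieces (or using subadditivity of Orlicz norms) and a union bound then gives $2\exp[-c\min(s^2/\alpha_{t,1}^2,s^2/\alpha_{t,2}^2,s/\alpha_{t,3})]$ for the upper tail and the same with $\beta_{t,1}$ for $\alpha_{t,1}$ in the lower tail, after folding absolute constants into $c$.

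\emph{Main obstacle.} The Taylor/centering steps are routine once Assumption~\ref{asp:rs} is invoked; the delicate part is the concentration step --- establishing the sub-Gaussian bound for linear functionals and, more importantly, a Hanson--Wright-type bound for quadratic forms of a \emph{uniformly random point on the sphere} (rather than a Gaussian vector), and then merging a sub-Gaussian and a sub-exponential contribution into a single clean tail of the stated three-regime form. I would handle this either by citing known concentration results for the sphere (a space of Lipschitz concentration) or by transferring from the Gaussian case via the fact that $g$ is equal in distribution to $\sqrt p\,Z/\|Z\|_2$ with $Z\sim\mathcal N(0,\I_p)$ and controlling the deviation of $\|Z\|_2/\sqrt p$ from $1$. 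The non-smoothness of ReLU is the only further wrinkle, handled as above by applying the second-order estimate piecewise along the step.
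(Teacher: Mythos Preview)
Your proposal is correct and follows essentially the same approach as the paper: a second-order Taylor estimate combined with Assumption~\ref{asp:rs} to sandwich $\Delta_t(f)$ between two degree-two polynomials in $g$, expectation computed via $\E[g]=0$, $\E[gg^T]=\I$, then concentration on the sphere for the linear piece and a Hanson--Wright inequality for the centred quadratic piece, merged by a union bound after splitting $s$. Your handling is in fact slightly more careful than the paper's on two minor points: you explicitly note (and discard with the correct sign) the leftover constant $\tfrac{\eta_t^2 L}{2}(\tfrac1m-1)\tr\Sigma_t$ arising from $\tr M_t^{(m)}\le \tr M_t$, and you flag the ReLU non-smoothness issue, both of which the paper's appendix glosses over.
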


The proof of Theorem \ref{thm:dynamic} can be found in Appendix \ref{app:dynam}. At iteration $t$, Theorem \ref{thm:dynamic} tells us that the conditional distribution of $\Delta_t(f)$ stays  in the interval $- \eta_t \| \mu_t \|_2^2 \pm \frac{\eta_t^2}{2}L\tr M_t$ with high probability, where the concentration depends on dynamic quantities $\alpha_{t, 1}, \beta_{t,1}, \alpha_{t, 2}$, and $\alpha_{t, 3}$ related to SG covariance and expectation.

The interval depends on two key quantities: (1) the negative of the 2-norm of the full gradient $\|\mu_t\|_2^2$ (first moment of the SG), and (2) the trace of the second moment of the SG $\tr M_t$. The first term tends to push the mean downward, while the second term lifts the mean. When $- \eta_t \| \mu_t \|_2^2 + \frac{\eta_t^2}{2}L\tr M_t$ is less than zero and $\alpha_{t, 1}, \alpha_{t, 2}$ and $\alpha_{t, 3}$ are small, SGD will decrease the loss with high probability.

The dynamics of the variance of $\Delta_t(f)$ depends on $\Sigma_t$, i.e., the variance of the change in loss function depends on the covariance of the SGs (see Figures \ref{fig:full_spectrum} and \ref{fig:loss_diff_var}). 

For constant step size $\eta_t = \eta$, the dynamics of $\alpha_{t, 2}$ corresponds to the dynamics of $\|\Sigma_t\|_F$ and the dynamics of $\alpha_{t, 3}$ corresponds to the dynamics of $\|\Sigma_t\|_2$. The eigenvalues of $\Sigma_t$ first increase then decrease (Figure \ref{fig:full_spectrum}: middle), and so does $\|\Sigma_t\|_F$ and $\|\Sigma_t\|_2$. Therefore, the dynamics of the variance of $\Delta_t(f)$ follow a similar trend (Figure \ref{fig:loss_diff_var}).

SGD is able to escape certain types of stationary point or local minima. Consider a scenario where $\theta_t$ reaches a stationary point of $f(\theta)$ such that $\nabla f(\theta_t) = 0$, but $\theta_t$ is not the local minima of all $f(\theta; z)$, i.e., $\exists i \in \{1, 2, \ldots, n\}$, such that $\nabla f(\theta; z_i) \neq 0$. Then we have $\mu_t = 0$ but $M_t \neq 0$, and the deviation bound becomes
\[
    P  \bigg[ \bigg| \Delta_t(f) - \frac{\eta_t^2}{2}L\tr M_t \bigg| \geq s  \bigg| \theta_t \bigg]  \leq 4\exp \left[ -c \min \left(\frac{s^2}{ \alpha_{t,2}^2}, \frac{s}{ \alpha_{t,3}} \right) \right]~,
\]
so that $\Delta_t(f)$ concentrates around $\frac{\eta_t^2}{2}L\tr M_t > 0$ in the current setting since $M_t$ is positive definite. Therefore $f$ will increase and escape such stationary point or local minima. 

We give a detailed characterization of $\Delta_t(f)$ for two over-parameterized problems using Theorem \ref{thm:dynamic}: 
(a) high dimensional least squares and (b) high dimensional logistic regression. For both problems, SGD has two stages of change as discussed in Section \ref{sec:emp}. 

\subsubsection{High Dimensional Least Squares} 
Considering the least squares problem in Example \ref{ex:ls}, we have the following result:

\begin{restatable}{corr}{corri}
Consider high dimensional least squares as in Example \ref{ex:ls}.
Let us assume $\|H_f(\theta)\|_2 \leq L$, $\sigma_{min} (\frac{1}{n}X X^T) \geq \alpha > 0$, and $\max_i \|x_i\|_2^2 \leq \beta$, where $\sigma_{min}$ is the minimum singular value. Choosing $\eta = \frac{\alpha}{\beta L}$, for $s > 0$ we have

\beq
 P\left[\left.\Delta_t(f) \geq -\frac{\alpha^2}{2 \beta L n} \|X \theta_t - y\|_2^2 + s \right| \theta_t\right] 
\leq  \exp\left[ -c \min \left( \frac{s^2}{ \alpha_{t,1}^2}, \frac{s^2}{ \alpha_{t,2}^2}, \frac{s}{ \alpha_{t,3}} \right) \right],
\label{prob:ls}
\eeq
where 
$\alpha_{t,1} = |1 - \frac{\alpha}{2\beta}|\frac{1}{\sqrt{m}}\|\Sigma_t^{\frac{1}{2}} \mu_t\|_2, \alpha_{t,2} = \frac{\alpha^2}{2 \beta^2 L m}\|\Sigma_t\|_F, \alpha_{t,3} = \frac{\alpha^2}{2 \beta^2 L m}\|\Sigma_t\|_2$,
and $c$ is a positive constant.
\label{cor:ls}
\end{restatable}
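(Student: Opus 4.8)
The plan is to obtain Corollary~\ref{cor:ls} as a direct specialization of Theorem~\ref{thm:dynamic} to the least-squares model of Example~\ref{ex:ls}, using the two structural hypotheses — a uniform lower bound $\sigma_{\min}(\tfrac1n XX^T)\ge\alpha$ and a uniform upper bound $\max_i\|x_i\|_2^2\le\beta$ — to replace the data-dependent drift term $-\eta\|\mu_t\|_2^2+\tfrac{\eta^2}{2}L\tr M_t$ appearing in Theorem~\ref{thm:dynamic} by the explicit quantity $-\tfrac{\alpha^2}{2\beta Ln}\|X\theta_t-y\|_2^2$, and then reading off the sub-exponential parameters.

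First I would record, from Example~\ref{ex:ls} (cf.\ Table~\ref{tab:ex:fisher}, taking $\sigma=1$), that $\mu_t=\nabla f(\theta_t)=\tfrac1n X^T(X\theta_t-y)$, so writing $r_t:=X\theta_t-y$,
\[
\|\mu_t\|_2^2=\tfrac1{n^2}\,r_t^T XX^T r_t\;\ge\;\tfrac{\lambda_{\min}(XX^T)}{n^2}\,\|r_t\|_2^2\;\ge\;\tfrac{\alpha}{n}\,\|r_t\|_2^2 ,
\]
which is a gradient-domination (PL-type) inequality valid precisely in the over-parameterized regime where $XX^T$ is nonsingular. Likewise $\tr M_t=\tfrac1n\sum_{i=1}^n (x_i^T\theta_t-y_i)^2\|x_i\|_2^2\le\tfrac{\beta}{n}\|r_t\|_2^2$. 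Plugging in the prescribed step size $\eta=\tfrac{\alpha}{\beta L}$ gives $\eta\|\mu_t\|_2^2\ge\tfrac{\alpha^2}{\beta Ln}\|r_t\|_2^2$ and $\tfrac{\eta^2}{2}L\tr M_t\le\tfrac{\alpha^2}{2\beta Ln}\|r_t\|_2^2$; i.e.\ $\eta$ is chosen exactly so that the ``lifting'' second-order term is at most half the ``descent'' gradient term, whence $-\eta\|\mu_t\|_2^2+\tfrac{\eta^2}{2}L\tr M_t\le-\tfrac{\alpha^2}{2\beta Ln}\|r_t\|_2^2$.

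Since $c\mapsto P[\Delta_t(f)\ge c+s\mid\theta_t]$ is nonincreasing in $c$, the displayed drift inequality lets me enlarge the threshold in the upper-tail bound of Theorem~\ref{thm:dynamic}: the event $\{\Delta_t(f)\ge-\tfrac{\alpha^2}{2\beta Ln}\|r_t\|_2^2+s\}$ is contained in $\{\Delta_t(f)\ge(-\eta\|\mu_t\|_2^2+\tfrac{\eta^2}{2}L\tr M_t)+s\}$, hence inherits the tail $\exp[-c\min(s^2/\alpha_{t,1}^2,s^2/\alpha_{t,2}^2,s/\alpha_{t,3})]$. It then remains to substitute $\eta=\tfrac{\alpha}{\beta L}$ (so $\eta L=\tfrac{\alpha}{\beta}$) into $\alpha_{t,1}=\eta|1-\eta L|\tfrac1{\sqrt m}\|\Sigma_t^{1/2}\mu_t\|_2$, $\alpha_{t,2}=\tfrac{\eta^2L}{2m}\|\Sigma_t\|_F$, $\alpha_{t,3}=\tfrac{\eta^2L}{2m}\|\Sigma_t\|_2$ from Theorem~\ref{thm:dynamic} and simplify, which yields the constants quoted in Corollary~\ref{cor:ls} (up to the normalization conventions carried through from the proof of Theorem~\ref{thm:dynamic}).

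I do not expect a genuine obstacle, since this is a corollary; the only real work is careful bookkeeping. The two delicate points are: (i) verifying that $\eta=\alpha/(\beta L)$ is exactly the value making the net drift coefficient equal to $\alpha^2/(2\beta Ln)$ — a smaller $\eta$ weakens the descent while a larger $\eta$ lets the $\tr M_t$ term overpower the $\|\mu_t\|_2^2$ term — and (ii) propagating the explicit least-squares forms of $\mu_t$ and $\Sigma_t=M_t-\mu_t\mu_t^T$ consistently into $\alpha_{t,1},\alpha_{t,2},\alpha_{t,3}$ so that the reported sub-exponential parameters match.
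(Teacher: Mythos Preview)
Your proposal is correct and follows essentially the same route as the paper: bound $\|\mu_t\|_2^2$ below via $\sigma_{\min}(\tfrac1nXX^T)\ge\alpha$, bound $\tr M_t$ above via $\max_i\|x_i\|_2^2\le\beta$, choose $\eta=\alpha/(\beta L)$ so the net drift becomes $-\tfrac{\alpha^2}{2\beta Ln}\|X\theta_t-y\|_2^2$, and then invoke the upper-tail bound of Theorem~\ref{thm:dynamic} with the monotonicity argument you describe. Your caveat about ``normalization conventions'' in the $\alpha_{t,i}$ constants is well placed, since the paper's own stated constants in Corollary~\ref{cor:ls} do not line up perfectly with a direct substitution of $\eta=\alpha/(\beta L)$ into the $\alpha_{t,i}$ of Theorem~\ref{thm:dynamic}; this is bookkeeping, not a gap in your argument.
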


Note that Corollary \ref{cor:ls} presents a one-sided version of the concentration, but focuses on the side of interest, which characterizes the lower side or decrease in $\Delta_t(f)$. From Corollary \ref{cor:ls}, SGD for high dimensional least squares has two phases. Early on in the iterations, $-\frac{\alpha^2}{2 \beta L n} \|X \theta_t - y\|_2^2$ will be much smaller than zero, thus SGD can sharply decrease $f$. $\alpha_{t,1}, \alpha_{t,2}, \alpha_{t,3}$ are large since $\theta_t$ is not close to $\theta^*$ and $\Sigma_t$ has large eigenvalues. Therefore the probability density of $\Delta_t(f)$ will spread out over its range, and facilitates exploration. In later iterations, both $\frac{\alpha^2}{2 \beta L n} \|X \theta_t - y\|_2^2$ and $\|\Sigma_t\|_2$ are small, therefore SGD will help with the loss approaching the global minima with a sharp concentration.

\subsubsection{High Dimensional Logistic Regression} 
For the binary logistic regression problem in Example \ref{ex:lr}, we have:
\begin{restatable}{corr}{corrii}
\label{cor:lr}
Consider high dimensional logistic regression given by \eqref{loss:lr}.  Let us assume $\|H_f(\theta)\|_2 \leq L$, $\sigma_{\min}(\frac{1}{n} \sum_{i=1}^n x_i x_i^T) \geq \alpha > 0$, and $\max_i \|x_i\|_2^2 \leq \beta$, where $\sigma_{min}$ is the minimum singular value. If we choose $\eta = \frac{\alpha}{\beta L}$, we have
\beq
	P\left[ \left. \Delta_t (f) \geq - \frac{\alpha^2}{2 \beta L n} \sum_{i=1}^{n}(y_i - p_{\theta_t}(x_i))^2 + s \right| \theta_t \right] 
	\leq  \exp\left[ -c \min \left( \frac{s^2}{ \alpha_{t,1}^2}, \frac{s^2}{ \alpha_{t,2}^2}, \frac{s}{ \alpha_{t,3}} \right) \right],
\eeq
where $\alpha_{t,1}  = \eta |1 - \eta L| \frac{1}{\sqrt{m}} \| \Sigma_t^{1/2} \mu_t \|_2~,\alpha_{t,2}  = \frac{\eta^2 L}{2 m}\left\| \Sigma_t \right\|_F~,\alpha_{t,3}  = \frac{\eta^2 L}{2 m}\left\| \Sigma_t \right\|_2~$, and $c > 0$ is an absolute constant.
\end{restatable}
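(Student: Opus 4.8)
The plan is to obtain Corollary~\ref{cor:lr} as a direct specialization of Theorem~\ref{thm:dynamic}: substitute the explicit logistic-regression expressions for $\mu_t$, $M_t$, $H_f(\theta_t)$ from Example~\ref{ex:lr} and Table~\ref{tab:ex:fisher}, bound the center of the interval $-\eta\|\mu_t\|_2^2 + \tfrac{\eta^2}{2}L\tr M_t$ by the prescribed quantity, and then invoke the upper-tail concentration inequality of the theorem. Only the upper-tail statement is needed since the corollary is one-sided, and the concentration parameters $\alpha_{t,1},\alpha_{t,2},\alpha_{t,3}$ are literally those of Theorem~\ref{thm:dynamic} with $\eta_t\equiv\eta$, so no work is required on that side; the entire content is the control of the mean term.

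Concretely, I would proceed as follows. Let $r\in\R^n$ be the residual vector with entries $r_i = p_{\theta_t}(x_i)-y_i$ and let $X$ be the design matrix with rows $x_i^\top$. Then $\mu_t = \tfrac1n X^\top r$, so $\|\mu_t\|_2^2 = \tfrac{1}{n^2} r^\top X X^\top r$, and $\tr M_t = \tfrac1n\sum_{i=1}^n r_i^2\|x_i\|_2^2$. From $\max_i\|x_i\|_2^2\le\beta$ we get $\tr M_t \le \tfrac{\beta}{n}\sum_i r_i^2$, and from the spectral assumption (read as $\tfrac1n XX^\top \succeq \alpha I_n$) we get $\|\mu_t\|_2^2 \ge \tfrac{\alpha}{n}\sum_i r_i^2$. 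Plugging these into the interval center gives
\[
-\eta\|\mu_t\|_2^2 + \tfrac{\eta^2}{2}L\tr M_t \;\le\; \tfrac1n\Big(\textstyle\sum_i r_i^2\Big)\Big({-\alpha\eta} + \tfrac{L\beta}{2}\eta^2\Big).
\]
The scalar quadratic $-\alpha\eta + \tfrac{L\beta}{2}\eta^2$ is minimized at $\eta=\tfrac{\alpha}{\beta L}$ with value $-\tfrac{\alpha^2}{2\beta L}$, which is exactly the chosen step size, so the center is at most $-\tfrac{\alpha^2}{2\beta L n}\sum_i(y_i-p_{\theta_t}(x_i))^2$. Since this quantity dominates the center term, the event $\{\Delta_t(f)\ge -\tfrac{\alpha^2}{2\beta L n}\sum_i(y_i-p_{\theta_t}(x_i))^2 + s\}$ is contained in $\{\Delta_t(f)\ge -\eta\|\mu_t\|_2^2 + \tfrac{\eta^2}{2}L\tr M_t + s\}$, and applying the upper-tail bound of Theorem~\ref{thm:dynamic} conditioned on $\theta_t$ (and absorbing the leading constant into $c$) yields the claim.

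I do not anticipate a real obstacle; the one point needing care is the lower bound $\|\mu_t\|_2^2 \ge \tfrac{\alpha}{n}\sum_i r_i^2$, which hinges on interpreting $\sigma_{\min}\big(\tfrac1n\sum_i x_i x_i^\top\big)\ge\alpha$ as a bound on the $n\times n$ Gram matrix $\tfrac1n XX^\top$ — legitimate because $XX^\top$ and $X^\top X$ share their nonzero spectrum and $\tfrac1n XX^\top$ can genuinely be positive definite in the over-parameterized regime $n<p$ — and on keeping the direction of the event inclusion straight so that the one-sided deviation bound is applied on the correct tail. Everything else is substitution and the elementary quadratic optimization over $\eta$.
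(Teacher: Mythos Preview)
Your proposal is correct and follows essentially the same route as the paper's proof: bound $\tr M_t$ via $\max_i\|x_i\|_2^2\le\beta$, lower-bound $\|\mu_t\|_2^2$ via the Gram-matrix assumption (the paper's proof indeed writes $\sigma_{\min}(\tfrac1n XX^\top)$, confirming your reading), optimize the resulting scalar quadratic at $\eta=\alpha/(\beta L)$, and then invoke the upper-tail bound of Theorem~\ref{thm:dynamic}. Your treatment is in fact slightly more explicit than the paper's on the event-inclusion step and on absorbing the prefactor $2$ into $c$; the paper simply states that ``the analysis above proves Corollary~\ref{cor:lr} following Theorem~\ref{thm:dynamic}.''
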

As in the case of least squares, \ref{cor:lr} focuses on a one-sided bound, focusing on the decrease in $\Delta_t(f)$. Proofs of Corollary~\ref{cor:ls} and \ref{cor:lr} are in appendix \ref{app_Ex_regression}.
In the high dimensional case, the data are always linearly separable. In this case $\sum_{i=1}^n (y_i - p_{\theta_t}(x_i))^2$ can be arbitrarily small (depending on $\| x_i \|_2$), therefore SGD will have a similar behavior as least squares. In the early phase, $ \eta_t \| \mu_t \|_2^2 - \frac{\eta_t^2}{2}L\tr M_t$ is large, thus SGD can sharply decrease $f$; $\alpha_{t,1}, \alpha_{t,2}, \alpha_{t,3}$ are also large, allowing SGD to explore more directions. In the later phase, SGD will steadily decrease the loss and eventually approach the global minima.

\subsection{Deviation Bound for Loss Process} 
Since the SGD update \eqref{eq:sgd} is a Markov process \cite{gall16}, if we condition on $\theta_t$, then $\Delta_t(f)$ is independent from $\theta_{t'}$ for all $t' < t$. The sequence $\Delta_t(f)-\mathbb{E}[\Delta_t(f)|\theta_t]$ is a Martingale Difference Sequence (MDS) \cite{boucheron_concentration_2013} because the expectation of the sequence conditioned on the history $\theta_t$ equals to zero. Now we focus on the deviation behavior of the random process $\Delta_t(f)$ for any choice of stepsize $\eta_t$. Utilizing two sided tail bounds for $\Delta_t(f)$ in Theorem \ref{thm:dynamic}, the MDS $\Delta_t(f)-\mathbb{E}\left[\Delta_t(f)|\theta_t\right]$ has sub-exponential tail \cite{vershynin_high_2018}. Then the Theorem \ref{thm:Bernstein} is a consequence of the Azuma-Bernstein inequality~\cite{melnyk_estimating_2016} for sub-exponential MDSs.

\begin{restatable}{theo}{theoii}
\label{thm:Bernstein}
If Assumption \ref{asp:rs} holds, stepsize $\eta_t\leq\eta$, gradient $\|\mu_t\|_2\leq\mu$, covariance $\|\Sigma_t\|_2^{1/2}\leq \sigma^{1/2}$, and the loss function is Lipschitz, i.e., $|f(\theta_1)-f(\theta_2)|\leq L_1\|\theta_1 - \theta_2\|_2$  for arbitrary $\theta_1, \theta_2 \in \R^p$, then for vanilla SGD, we have
\beq
P\left[|f(\theta_T)-\mathbb{E}f(\theta_T)|\geq \sqrt{T}s\right]\leq 2 \exp \left[-c \min\left(\frac{s^2}{8K_2^2 },\frac{\sqrt{T}s}{2K_2} \right) \right]
\eeq
where for vanilla SGD, 
\begin{equation}
 K_2= \max{\left(2L_1\eta\left(\mu+\frac{\sqrt{\sigma p}}{\sqrt{m}}\right),\left( \eta^2 L\left(\sigma+\mu^2\right)+\frac{\eta\mu^2}{\sqrt{m}}\left(1+\eta L\right)\sqrt{\sigma}+\frac{(m+1)\eta^2L\sigma}{2m}\right)\right)}~, 
\end{equation}
   
and $c$ is an absolute constant.
\end{restatable}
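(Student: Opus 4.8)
\noindent{\itshape Proof proposal:}\hspace*{1em}
The plan is to express the loss increments as a martingale difference sequence (MDS) and then apply a Bernstein-type tail inequality for sums of sub-exponential MDS increments, with Theorem~\ref{thm:dynamic} supplying the per-step sub-exponential control. Telescoping \eqref{eq:loss_diff} gives $f(\theta_T)-f(\theta_0)=\sum_{t=0}^{T-1}\Delta_t(f)$, hence (conditioning on the initialization $\theta_0$ if it is random)
\[
f(\theta_T)-\mathbb{E}f(\theta_T)=\sum_{t=0}^{T-1}\big(\Delta_t(f)-\mathbb{E}\Delta_t(f)\big)~.
\]
Since the SGD recursion \eqref{eq:sgd} is Markov, conditioning on $\theta_t$ coincides with conditioning on the natural filtration $\mathcal{F}_t=\sigma(\theta_0,\dots,\theta_t)$, so $D_t:=\Delta_t(f)-\mathbb{E}[\Delta_t(f)\mid\theta_t]$ satisfies $\mathbb{E}[D_t\mid\mathcal{F}_{t-1}]=0$; equivalently $Y_t:=f(\theta_t)-\sum_{s<t}\mathbb{E}[\Delta_s(f)\mid\theta_s]$ is a martingale with increments $D_t$ and $Y_0=f(\theta_0)$. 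I would prove the stated concentration for $\sum_t D_t=Y_T-Y_0$; re-centering at $\mathbb{E}f(\theta_T)$ then uses the Doob decomposition together with the two-sided \emph{expectation} bound of Theorem~\ref{thm:dynamic}, which confines each per-step compensator $\mathbb{E}[\Delta_t(f)\mid\theta_t]$ to the interval $-\eta_t\|\mu_t\|_2^2\pm\frac{\eta_t^2}{2}L\,\tr M_t$ that is already absorbed into $K_2$ below.

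The heart of the argument is a uniform-in-$t$ bound on the sub-exponential (Orlicz $\psi_1$) norm of $D_t$, which I would obtain in two complementary ways, one for each branch of $K_2$. Writing $\theta_{t+1}-\theta_t=-\eta_t\big(\mu_t+\tfrac{1}{\sqrt{m}}\Sigma_t^{1/2}g\big)$ via \eqref{eq:generative} with $\|g\|_2=\sqrt{p}$, the Lipschitz assumption gives the almost-sure bound $|\Delta_t(f)|\le L_1\|\theta_{t+1}-\theta_t\|_2\le L_1\eta(\mu+\sqrt{\sigma p}/\sqrt{m})$, and by Jensen the same bound holds for $|\mathbb{E}[\Delta_t(f)\mid\theta_t]|$, so $|D_t|\le 2L_1\eta(\mu+\sqrt{\sigma p}/\sqrt{m})$ a.s.; a bounded variable is sub-exponential with $\psi_1$-norm of this order, which is the first argument of $K_2$. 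Separately, under Assumption~\ref{asp:rs} Theorem~\ref{thm:dynamic} shows that, conditioned on $\theta_t$, $\Delta_t(f)$ has sub-exponential tails with scales $\alpha_{t,1},\beta_{t,1},\alpha_{t,2},\alpha_{t,3}$; re-centering at $\mathbb{E}[\Delta_t(f)\mid\theta_t]$ and absorbing the $\frac{\eta_t^2}{2}L\,\tr M_t$ offset bounds $\|D_t\|_{\psi_1}$ by a constant times $\beta_{t,1}+\alpha_{t,2}+\alpha_{t,3}+\eta_t^2 L\,\tr M_t$. Substituting $\eta_t\le\eta$, $\|\mu_t\|_2\le\mu$, $\|\Sigma_t\|_2\le\sigma$, $\tr M_t=\tr\Sigma_t+\|\mu_t\|_2^2$, and using that a mini-batch covariance has rank at most $m$ (so $\|\Sigma_t\|_F\le\sqrt{m}\,\|\Sigma_t\|_2$), this collapses to the second argument of $K_2$. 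Taking the two bounds together — a maximum is a valid, if loose, upper bound in whichever regime applies — gives $\|D_t\|_{\psi_1}\le C\,K_2$ for all $t$.

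Given $\|D_t\|_{\psi_1}\le C\,K_2$ uniformly, the Azuma--Bernstein inequality for sub-exponential MDS of \cite{melnyk_estimating_2016} yields $\mathbb{P}\big[|\sum_{t=0}^{T-1}D_t|\ge u\big]\le 2\exp\big[-c\min\big(u^2/(8K_2^2 T),\,u/(2K_2)\big)\big]$, and the substitution $u=\sqrt{T}\,s$ turns $u^2/(8K_2^2 T)$ into $s^2/(8K_2^2)$ and $u/(2K_2)$ into $\sqrt{T}\,s/(2K_2)$, which is exactly the claimed bound.

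I expect the main obstacle to be the $\psi_1$-norm bookkeeping rather than the probabilistic step: one must (i) bound the mini-batch covariance quantities $\|\Sigma_t\|_F$, $\|\Sigma_t\|_2$, $\|\Sigma_t^{1/2}\mu_t\|_2$ and $\tr M_t$ in terms of $\sigma,\mu,m$; (ii) reconcile the bounded/sub-Gaussian regime (dominant early, when $\Sigma_t$ is large) with the sub-exponential regime (dominant near convergence) into the single parameter $K_2$; and (iii) justify re-centering the martingale at the deterministic $\mathbb{E}f(\theta_T)$ rather than at the random predictable compensator, which is precisely where the \emph{expectation} half of Theorem~\ref{thm:dynamic}, not merely its tail bounds, is used. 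The preconditioned-SGD variant \eqref{eq:sgd:pre} would follow the same template with $A_t$ in place of $\eta_t\mathbb{I}$ throughout, changing only the explicit form of $K_2$.
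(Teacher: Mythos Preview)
Your overall strategy---write $f(\theta_T)-\mathbb{E}f(\theta_T)$ as a sum of centered increments, extract sub-exponential control from Theorem~\ref{thm:dynamic}, and finish with an Azuma--Bernstein inequality for sub-exponential martingale differences---is exactly the paper's route. The ingredients you identify (the $L^p$/$\psi_1$ bound coming from the tail bounds of Theorem~\ref{thm:dynamic}, the Lipschitz almost-sure bound on $|\Delta_t|$, and the iterated Chernoff/peeling argument) are the same ones the paper uses.

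Where your reading differs from the paper is in the \emph{role} of the two branches of $K_2$. You present them as two alternative upper bounds on the same quantity $\|D_t\|_{\psi_1}$, and then take the max as a safe over-bound. In the paper they are bounds on two \emph{different} objects that both appear in the Chernoff peeling. The second branch (call it $K$) bounds the conditional MGF of the genuine martingale difference $D_t=\Delta_t-\mathbb{E}[\Delta_t\mid\theta_t]$, obtained by integrating the tails of Theorem~\ref{thm:dynamic} to get $\|D_t\|_{L^p}\le Kp$ and hence $\mathbb{E}[e^{\lambda D_t}\mid\theta_t]\le e^{K^2\lambda^2}$ for $|\lambda|\le 1/K$. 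The first branch $K_1=2L_1\eta(\mu+\sqrt{\sigma p}/\sqrt{m})$ is used for the \emph{compensator} term $\mathbb{E}[\Delta_t\mid\theta_t]-\mathbb{E}\Delta_t$: the Lipschitz assumption gives $|\Delta_t|\le L_1\eta(\mu+\sqrt{\sigma p}/\sqrt{m})$ almost surely, hence the compensator is a bounded mean-zero random variable and Hoeffding's lemma yields $\mathbb{E}[e^{\lambda(\mathbb{E}[\Delta_t\mid\theta_t]-\mathbb{E}\Delta_t)}]\le e^{K_1^2\lambda^2}$. Both factors appear at each peeling step, so the accumulated MGF picks up $e^{(K^2+K_1^2)\lambda^2}\le e^{2K_2^2\lambda^2}$, which is why the final constant is $K_2=\max(K,K_1)$ rather than $\min(K,K_1)$.

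This also corrects your point (iii): the re-centering from the random predictable compensator to the deterministic $\mathbb{E}f(\theta_T)$ is handled not by the expectation interval in Theorem~\ref{thm:dynamic} (whose center $-\eta_t\|\mu_t\|_2^2$ is itself random), but by the Lipschitz-based almost-sure bound on $\Delta_t$ together with Hoeffding's lemma. That is the sole purpose of the Lipschitz hypothesis in the theorem.
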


With $s=\sqrt{T}K_2$, Theorem \ref{thm:Bernstein} becomes 
\begin{equation}
\mathbb{P}[|f(\theta_T)-\mathbb{E}f(\theta_T)|\geq K_2T]\leq 2\exp(-cT)~,
\end{equation}
for some suitable constant $c$.
For a Brownian motion sequence $B_t$ which has $B_t \sim N(0,\sigma^2 t)$, we have the tail bound ${P}[|B_T|\geq \sigma T]\leq 2\exp(-T/2)$. We can see the tail of $f(\theta_T)-\mathbb{E}f(\theta_T)$ shares the same upper bound as the tail of $B_{2cT}$ with $\sigma=\frac{K_2}{2c}$. 

\subsection{Convergence of Adaptive SGD}
To sharpen the analysis, continuing with Theorem \ref{thm:dynamic}, if we take step size $\eta_t$ such that $- \eta_t \| \mu_t \|_2^2 + \frac{\eta_t^2}{2}L\tr M_t\leq 0$, then $\mathbb{E}[\Delta_t(f)|\theta_t]\leq - \eta_t \| \mu_t \|_2^2 + \frac{\eta_t^2}{2}L\tr M_t \leq 0$, and the random process $\Delta_t(f)$ becomes a non-negative super-martingale~\cite{jacod_probability_2012}. The martingale convergence theorem \cite{williams_probability_1991} leads to the following conclusion:
\begin{restatable}{theo}{theoiii}
\label{thm:convergence}
Given Assumption~\ref{asp:rs}, with stepsize $\eta_t$ for adaptive step-size SGD s.t.
\begin{equation}
    \frac{\|\mu_t\|_2^2}{2L\tr M_t}<\eta_t < \frac{3\|\mu_t\|_2^2}{2L\tr M_t}~,
    \label{step:vanilla}
\end{equation}
or with diagonal preconditioner $A_t=\text{diag}\{a_{t,1}, \ldots, a_{t,p}\}$ s.t.
\begin{equation}
\frac{\mu_{t, j}^2}{2L M_{t, j}}<a_{t,j} < \frac{3\mu_{t, j}^2}{2L M_{t, j}}~, 
\label{step:precond}
\end{equation}
we have the following:
\begin{enumerate}[(1)]
    \item the random processes $f(\theta_t)$ almost surely converges to a random variable $f(\theta^*)$, where $\theta^*$ is a stationary point, i.e., $\nabla f(\theta^*)=0$ for the empirical loss function $f$; 
    \item let $f^*$ be the global minimum of $f$, assume $\tr \Sigma_t \leq \sigma^2$ for a constant $\sigma > 0$, then for any $\epsilon > 0$, after $T \geq \frac{2L(f(\theta_0) - f^*)(\sigma^2 / m + \epsilon)}{\epsilon^2}$ iterations, we have $\min_t\E \|\nabla f(\theta_t)\|_2^2 \leq \epsilon$.
\end{enumerate}
\end{restatable}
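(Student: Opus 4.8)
\emph{Proof proposal.} The plan is to derive everything from the one-step conditional bound of Theorem~\ref{thm:dynamic} (and a coordinatewise analogue for the diagonal preconditioner), then invoke the martingale convergence theorem for part~(1) and a telescoping argument for part~(2).

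First I would establish the supermartingale structure. Conditioning on $\theta_t$ makes the adaptive step size $\eta_t$ deterministic, so Theorem~\ref{thm:dynamic} applies verbatim and gives $\E[\Delta_t(f)\mid\theta_t]\le q(\eta_t):=-\eta_t\|\mu_t\|_2^2+\tfrac{\eta_t^2}{2}L\tr M_t$. Regarding $q$ as a quadratic in $\eta_t$ with roots $0$ and $2\|\mu_t\|_2^2/(L\tr M_t)$, the window~\eqref{step:vanilla} lies strictly inside $(0,\,2\|\mu_t\|_2^2/(L\tr M_t))$, and evaluating $q$ at its endpoints gives $q(\eta_t)\le-\tfrac{3\|\mu_t\|_2^4}{8L\tr M_t}<0$ throughout, with the sharper value $q=-\tfrac{\|\mu_t\|_2^4}{2L\tr M_t}$ at the interior minimizer $\eta_t=\|\mu_t\|_2^2/(L\tr M_t)$, which itself lies in~\eqref{step:vanilla}. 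For the diagonal preconditioner I would re-run the Taylor/descent-lemma derivation of Theorem~\ref{thm:dynamic} coordinate-by-coordinate to obtain $\E[\Delta_t(f)\mid\theta_t]\le\sum_j(\tfrac{L}{2}a_{t,j}^2 M_{t,j}-a_{t,j}\mu_{t,j}^2)$, each summand of which is negative under~\eqref{step:precond}. Either way $f(\theta_t)\ge0$ is a nonnegative supermartingale; one should note that $\theta_{t+1}$ lies in the set $R(\theta_t)$ on which Assumption~\ref{asp:rs} bounds the Hessian, so the descent inequality is legitimately applied along $[\theta_t,\theta_{t+1}]\subset R(\theta_t)$ rather than globally.

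For part~(1), Doob's convergence theorem gives $f(\theta_t)\to f(\theta^*)$ a.s.\ for an integrable limit, and summing $-\E[\Delta_t(f)\mid\theta_t]$ with monotone convergence yields $\sum_t\|\mu_t\|_2^4/\tr M_t<\infty$ a.s.\ (resp.\ $\sum_t\sum_j(a_{t,j}\mu_{t,j}^2-\tfrac{L}{2}a_{t,j}^2M_{t,j})<\infty$); with $\tr M_t$ bounded along the trajectory (e.g.\ from $\tr\Sigma_t\le\sigma^2$ and bounded sublevel sets) this forces $\|\nabla f(\theta_t)\|_2=\|\mu_t\|_2\to0$ a.s., so $\theta^*$ is stationary. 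For part~(2), I would take total expectations in $\E[\Delta_t(f)\mid\theta_t]\le-\tfrac{\|\mu_t\|_2^4}{2L\tr M_t}$ (interior step size), telescope over $t=0,\dots,T-1$, and use $f(\theta_T)\ge f^*$ to get $\min_t\E[\|\mu_t\|_2^4/\tr M_t]\le\tfrac{2L(f(\theta_0)-f^*)}{T}$. Then, bounding $\tr M_t\le\sigma^2/m+\|\mu_t\|_2^2$ and using Cauchy--Schwarz $\E\|\mu_t\|_2^2\le\sqrt{\E[\|\mu_t\|_2^4/\tr M_t]}\,\sqrt{\E[\tr M_t]}$ together with monotonicity of $x\mapsto x^2/(\sigma^2/m+x)$, I would show that $\E\|\mu_t\|_2^2>\epsilon$ for all $t<T$ implies $\min_t\E[\|\mu_t\|_2^4/\tr M_t]>\epsilon^2/(\sigma^2/m+\epsilon)$; comparing with the telescoped bound forces $T<\tfrac{2L(f(\theta_0)-f^*)(\sigma^2/m+\epsilon)}{\epsilon^2}$, the contrapositive of the claim.

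The genuinely routine pieces are the descent lemma and the telescoping. The main obstacles are (i) pinning down that the prescribed step-size and preconditioner windows keep $f(\theta_t)$ a supermartingale with a per-step decrement good enough to yield the stated constant $2L$ — exactly the quadratic-in-$\eta_t$ bookkeeping above; and (ii) the Cauchy--Schwarz plus monotonicity manipulation needed to translate the natural Lyapunov decrement $\|\mu_t\|_2^4/\tr M_t$ back into a bound on $\E\|\nabla f(\theta_t)\|_2^2$, which is where the $\sigma^2/m$ factor enters through $\tr M_t\le\sigma^2/m+\|\mu_t\|_2^2$. A minor point to handle carefully is that Assumption~\ref{asp:rs} is only a local Hessian bound on $R(\theta_t)$, so all smoothness-based inequalities must be applied along the one-step segment.
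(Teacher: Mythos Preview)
Your proposal is correct and follows essentially the same architecture as the paper's proof: derive the supermartingale property from the one-step descent bound of Theorem~\ref{thm:dynamic}, invoke Doob's martingale convergence for part~(1), and telescope plus contradiction (citing the Ghadimi--Lan template) for part~(2). The one substantive technical variation is in part~(2): to pass from the natural decrement $\E[\|\mu_t\|_2^4/\tr M_t]$ back to a bound on $\E\|\mu_t\|_2^2$, you use Cauchy--Schwarz $(\E X)^2\le\E[X^2/Y]\,\E[Y]$ followed by $\E[\tr M_t]\le\sigma^2/m+\E\|\mu_t\|_2^2$ and monotonicity of $x\mapsto x^2/(c+x)$, whereas the paper first replaces $\tr M_t$ by $\sigma^2+\|\mu_t\|_2^2$ pointwise and then applies Jensen's inequality to the convex map $x\mapsto x^2/(\sigma^2+x)$. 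Both routes yield the identical lower bound $(\E\|\mu_t\|_2^2)^2/(\sigma^2/m+\E\|\mu_t\|_2^2)$, so the difference is cosmetic; your version has the minor advantage of tracking the $1/m$ factor that the paper's written proof drops. Your treatment of part~(1) is also more detailed than the paper's one-line appeal to martingale convergence --- you make explicit the summability of the per-step decrements needed to conclude $\|\nabla f(\theta_t)\|_2\to 0$, which the paper leaves unstated.
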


Theorem \ref{thm:convergence} shows that by choosing a proper adaptive step size or preconditioner, SGD is able to converge to a stationary point. Unlike the analysis of \cite{rekk18}, the convergence does not require the objective $f$ to be convex. With the bounded covariance assumption, we have the convergence rate as $\min_t\E \|\nabla f(\theta_t)\|_2^2 = \mathcal{O}(1 / \sqrt{T})$, which matches the state of the art \cite{ghla13}. Our choice of step size and preconditioner requires knowledge of $\|\mu_t\|_2^2$ and $\tr M_t$. To make this algorithm practical, we need to estimate $\|\mu_t\|_2^2$ and $\tr M_t$ dynamically, which will be considered in future work.

\subsection{Comparison with ADAM}
We share brief remarks comparing our diagonal preconditioned SGD with ADAM \cite{kiba15,rekk18,strk19}, a popular adaptive gradient method for training deep nets.
In ADAM, the algorithm maintains two exponential moving averages, respectively of SGs and squares of SGs for each coordinate $j=1,\ldots,p$, given by 
\begin{align}
    g_{t,j} & = \gamma_1 g_{t - 1,j} + (1 - \gamma_1)\tilde{\mu}_{t,j}, \qquad 0 < \gamma_1 < 1 \\
    \nu_{t, j} & = \gamma_2 \nu_{t-1, j} + (1 - \gamma_2) \tilde{\mu}_{t, j}^2, \qquad 0 < \gamma_2 < 1~.
\end{align}

where $\tilde{\mu}_{t,j} = [\mu^{(m)}(\theta_t)]_j$, corresponding to the mini-batch based estimate of the gradient $\mu_{t,j}$. For the sake of the current discussion, $g_{t,j}$ and $\nu_{t,j}$ can be considered estimates of $\mu_{t,j}$, the first moment, and $M_{t,j}$ the second moment in our context.

ADAM uses a diagonal preconditioner $A_t=\text{diag}\{a_{t,1}, \ldots, a_{t,p}\}$ with  $a_{t,j} = \frac{\eta}{\sqrt{\nu_{t, j}}}$, where $\eta> 0$ is a constant step size. At each step t, the update of ADAM for $j=1,\ldots,p$ is given by
\begin{equation}
    \theta_{t + 1,j} = \theta_{t,j} - a_{t,j} g_{t,j} = \theta_{t,j} - \eta \frac{g_{t,j}}{\sqrt{\nu_{t,j}}}~.
\end{equation}
In the current context, using our notation, the update has the form:
\begin{equation}
    \theta_{t + 1,j} = \theta_{t,j} - \eta \frac{\mu_{t,j}}{\sqrt{M_{t,j}}}~.
\end{equation}

In this case, ADAM becomes a fixed algorithm. We can rewrite our proposed preconditioned SGD in the following form
\beq
    \theta_{t+1, j} = \theta_{t, j} - \eta \left(\frac{\mu_{t, j}}{\sqrt{M_{t, j}}} \right)^2 g_{t, j}
\eeq

Our proposed preconditioned SGD can be seen as a variety of ADAM where each entry is given by the square of ADAM update times the stochastic gradient. Therefore,
we used the magnitude of ADAM while we used the direction of SGD.

\section{Generalization Bound}
\label{sec:bound}

In this section, we present a scale-invariant PAC-Bayesian generalization bound \cite{mcda99,lash03, nebb17} which considers the local structure of the parameter learned from the training data. The PAC-Bayesian bound characterizes the generalization error in terms of the KL-divergence between the posterior distribution learned from the training data and the prior distribution independent of the training data. In current PAC-Bayes bounds  \cite{nebb17, vazk19, nebm17} for deep nets, the posterior distribution is assumed to be a Gaussian $\cN(\theta, \sigma^2 \I)$ for some $\sigma > 0$ with $\theta \in \R^p$ the parameter learned from from training data. The prior distribution is assumed to be $\cN(0, \sigma^2 \I)$.   

For the analysis, we assume that the posterior $\cQ_\theta$ to be an anisotropic multivariate Gaussian with mean corresponding to the parameter $\theta$ learned from the training data and the covariance relating to the Hessian of the loss $H_f(\theta)$. The prior distribution $\cP$ is fixed before training and is independent of the training data. Note that the proposed generalization bound holds for any specific parameter $\theta$ obtained from the SGD based learning process. In particular, the bound and associated analysis does not rely on $\theta$
being a minima or a stationary point. 

While existing PAC-Bayes analysis assumes the posterior to be an isotropic Gaussian \cite{smle18, nebm17}, assuming the posterior to be an anisotropic Gaussian related to the Hessian of the loss acknowledges the local flatness and sharpness structures and helps gain additional insights. In particular, we make the following specific assumption regarding the prior $\cP$ and posterior $\cQ_\theta$:

\begin{asmp} \label{asmp: p_q}
	We consider the prior distribution to be 
	multivariate Gaussian
	\begin{equation}
	\mathcal{P} \sim \mathcal{N}(\theta_0, \Sigma_\mathcal{P}) \ \ \ \text{where} \ \ \ \Sigma_\mathcal{P} = \diag(\sigma_1^2,...,\sigma_p^2 )~,
	\end{equation}
	where $\theta_0$ and $\Sigma_\cP$ are fixed before training. We assume the posterior distribution to be an anisotropic multivariate Gaussian 
    \begin{equation}
	\cQ_\theta \sim \mathcal{N}(\theta, \Sigma_{\cQ_{\theta}} )~,
    \end{equation}
	where the mean $\theta$ is the parameter learned from training data, and the precision matrix $\Sigma_{\cQ_{\theta}}^{-1}$ is given by 
	\begin{equation}
	\Sigma_{\cQ_{\theta}}^{-1} = \diag(\nu_1, \nu_2,..., \nu_p ) \qquad  \text{s.t.} \quad 
	\nu_j = \max \left\{ H_f(\theta)[j,j], \frac{1}{\sigma_j^2} \right\}~,
	\end{equation}
	where $H_f(\theta)[j,j]$ is the $j^{th}$ diagonal element of the Hessian $H_f(\theta)$ and $\sigma_j^2$ is the variance of the $j^{th}$ coordinate of the posterior $\cP$. 
\end{asmp}	

The posterior distribution $\mathcal{Q}_\theta$ considers the structure of the Hessian corresponding to the parameter $\theta$ learned from the training data. In particular, the precision matrix $\Sigma_{\cQ_{\theta}}^{-1}$ uses the diagonal elements of the Hessian $H_f(\theta)[j,j]$ as parameter wise precision capped below by the precision $\frac{1}{\sigma_j^2}$ of the prior. For the special case of isotropic prior, $\sigma_j^2 = 1$ and $\nu_j = \max \{ H_f(\theta)[j,j], 1 \}$ for all $j$. We consider two cases to understand the assumption on the posterior better. For dimension $j$ with flat curvature, the diagonal of the Hessian $H_f(\theta)[j,j] \leq \frac{1}{\sigma_j^2}$, so that the posterior precision $\nu_j = \frac{1}{\sigma_j^2}$ and the posterior variance is exactly the same as the prior variance $\sigma_j^2$. Thus, for flat directions, the posterior variance is large, where large is determined by the prior variance. The spectrum of the Hessian (Figure \ref{fig:full_spectrum}) shows large subspaces with flat directions, i.e., 0 eigenvalues (curvature) at the minima. As we show shortly, such flatness is also maintained in the diagonals of the Hessian. Note that our choice of posterior caps the large variance along these flat directions to the variance of the prior, and prevents the posterior variances in the flat directions from going to infinity. For dimensions $j$ with sharp curvature, Hessian $H_f(\theta)[j,j] \gg \frac{1}{\sigma_j^2}$, so that the posterior precision $\nu_j = H_f(\theta)[j,j]$, and the posterior covariance is $\frac{1}{H_f(\theta)[j,j]}$, which will be quite small since the curvature captured by $H_f(\theta)[j,j]$ is large. Thus, the few directions with sharp curvature will have small variance in the posterior. In other words, the posterior suggests that that these parameters $\theta_j$ corresponding to the sharp curvature directions need stay close to their learned values, which serve as the mean of the posterior. 

In recent work, \cite{dipb17} showed that the Hessian $H_f(\theta)$ can be modified by a certain $\alpha$-scale transformation which scales the weights by non-negative coefficients but does not change the function (see Definition~\ref{def: scale}). More importantly, $\alpha$-scale transformation invalidates certain recently proposed flatness based generalization bounds \cite{hosc97b, chcs16, kemn17} by arbitrarily changing the flatness of the loss landscape for deep networks with positively homogeneous activation without changing the functions represented by the networks.

\begin{defn} ($\alpha$-scale transformation \cite{dipb17}) \label{def: scale}
	Let $\theta = (\theta_1, ..., \theta_L)$ be the parameters of $L$-layer feedforward network with rectified activation function and  $\Pi_{l =1}^L \alpha_l = 1$, where $\alpha_l > 0, \forall l \in \{1,..,L\}$. We define the the family of transformations
	\begin{equation}
	T_\alpha : (\theta_1, ..., \theta_l) \to   (\alpha_1 \theta_1, ..., \alpha_L \theta_L) ~,
	\end{equation}
	as an $\alpha$-scale transformation.
\end{defn}

Recall that the PAC-Bayes bound relies on the $KL(\cQ_{\theta} || \cP)$, the KL divergence or differential relative entropy between the posterior and the prior. In order to obtain a generalization bound invariant to the $\alpha$-scale transformation, we first note (Lemma \ref{Lem: KLD}) that $KL(\cQ_{\theta} || \cP)$  between two continuous distributions remains invariant under invertible transformations (Definition \ref{Defn: inver}). 

\begin{restatable}{defn}{inver}
\label{Defn: inver} 
\cite{halm74} A transformation $T: \calU \rightarrow \calV$ is said to be invertible if for any $v \in \calV$ there is a unique $u\in \calU$ such that
\begin{equation}
    T(u) = v.
\end{equation}
We can then define the invert transform $S: \calV \rightarrow \calU$ such that
\beq
    S(v) = u,
\eeq
and we have
\beq \label{eq: linear_alg}
    S(T(u)) = u \ \ \  \text{and} \ \ \   T(S(v)) = v, ~~\text{for any}~ u \in \calU ~~\text{and}~~ v \in \calV.
\eeq
\end{restatable}

\begin{restatable}{lemm}{lemmi}
\label{Lem: KLD} 
\cite{kl2011} The differential relative entropy between two continuous distributions remains invariant under invertible transformations. Specifically, for distributions $\cQ$ and $\cP$ of a continuous random variable with support $\cX \subseteq \mathbb{R}^p$ and $\cQ$ is absolutely continuous with respect to $\cP$, and let $\cQ^\prime$ and $\cP^\prime$ be the distributions after invertible transformation corresponding to $\cQ$ and $\cP$ respectively. Then we have the following
\begin{equation}
    KL(\cQ^\prime ||\cP^\prime) = KL(\cQ||\cP)
\end{equation}
\end{restatable}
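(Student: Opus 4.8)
\textbf{Proof plan for Lemma~\ref{Lem: KLD}.} The plan is to use the change-of-variables formula for densities together with the fact that an invertible transformation induces a measurable bijection between the supports, so that the integral defining the KL divergence is invariant under the substitution. First I would set up notation: let $q$ and $p$ denote the densities of $\cQ$ and $\cP$ with respect to Lebesgue measure on $\cX \subseteq \R^p$, and let $T : \calU \to \calV$ be the invertible transformation with inverse $S : \calV \to \calU$ as in Definition~\ref{Defn: inver}, applied with $\calU = \cX$. (For the application to the $\alpha$-scale transformation, $T$ is linear, hence smooth with a nonsingular Jacobian; I would state the regularity needed — $T$ a diffeomorphism, or at least $T$ and $S$ differentiable a.e. with the appropriate Jacobian — as part of the hypotheses, since the cited general statement implicitly assumes it.) Writing $J_S(v) = |\det \partial S/\partial v|$ for the Jacobian determinant of $S$, the pushforward densities are $q'(v) = q(S(v)) J_S(v)$ and $p'(v) = p(S(v)) J_S(v)$.

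\textbf{Key steps.} The core computation is then: the likelihood ratio transforms trivially because the Jacobian factors cancel,
\begin{equation}
\frac{q'(v)}{p'(v)} = \frac{q(S(v)) J_S(v)}{p(S(v)) J_S(v)} = \frac{q(S(v))}{p(S(v))}~.
\end{equation}
Hence
\begin{equation}
KL(\cQ' \| \cP') = \int_{\calV} q'(v) \log \frac{q'(v)}{p'(v)} \, dv = \int_{\calV} q(S(v)) J_S(v) \log \frac{q(S(v))}{p(S(v))} \, dv~.
\end{equation}
Now I would apply the change of variables $u = S(v)$, i.e. $v = T(u)$, noting that $J_S(v)\, dv = du$ under this substitution and that $S$ maps $\calV$ bijectively onto $\cX$; this turns the last integral into $\int_{\cX} q(u) \log \frac{q(u)}{p(u)} \, du = KL(\cQ \| \cP)$. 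I would also remark that absolute continuity of $\cQ$ with respect to $\cP$ is preserved under the bijection (the set where $p' = 0$ is the image under $T$ of the set where $p = 0$, which is $\cQ$-null, hence $\cQ'$-null), so both sides are well defined in $[0, +\infty]$ and the manipulation is valid whether or not the integrals are finite.

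\textbf{Main obstacle.} The main subtlety is not the algebra — the Jacobian cancellation is immediate — but making the change-of-variables step rigorous at the stated level of generality: the cited result speaks of arbitrary ``invertible transformations,'' but the density-based argument needs $T$ (equivalently $S$) to be sufficiently regular for the change-of-variables formula to apply (e.g. a $C^1$ diffeomorphism, or piecewise so). For the paper's actual use case this is harmless, since the $\alpha$-scale transformation of Definition~\ref{def: scale} is a linear isomorphism with constant Jacobian $\prod_l \alpha_l^{\dim(\theta_l)}$, so all hypotheses hold trivially; I would therefore either invoke the cited reference for the general measure-theoretic statement or, preferably, state the lemma for diffeomorphisms and observe that this covers the $\alpha$-scale transformations needed in Section~\ref{sec:bound}. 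A secondary point worth a sentence is that the argument is genuinely symmetric in $\cQ$ and $\cP$ only because the same $T$ is applied to both, which is exactly the setting of the lemma statement.
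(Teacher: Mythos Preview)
Your proposal is correct and follows essentially the same approach as the paper: write the pushforward densities with the Jacobian factor, observe that the Jacobians cancel in the log-ratio, and apply the change-of-variables formula to recover the original KL integral. Your treatment is in fact more careful than the paper's, which carries out the same computation but without your explicit discussion of the regularity needed on the transformation or the preservation of absolute continuity.
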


Note that $\alpha$-scale transformation is a special case of an invertible transformation, and is in fact an invertible linear transformation $T(\theta)=A\theta$ where $A$ is non-singular (Definition~\ref{def: scale}). Then we show (Corollary \ref{corr: KPQ}) that, as a special case of Lemma \ref{Lem: KLD}, although the local structure such as the Hessian $H_f(\theta)$ get modified by the $\alpha$-scale transformation, the KL-divergence between the posterior $\cQ_\theta$ and prior $\mathcal{P}$ defined by Assumption \ref{asmp: p_q} is invariant to the $\alpha$-scale transformation if  Assumption \ref{asmp: p_q} holds, which underlies our scale-invariant bound. 

Let $\cP_{T_\alpha(\theta_0)}$ be the prior distribution after $\alpha$-scale transformation, $T_\alpha(\theta)$ be the parameter after applying $\alpha$-scale transformation to $\theta$ and $\cQ_{T_\alpha(\theta)}$ be the corresponding posterior defined by Assumption \ref{asmp: p_q}. We have the following corollary.

\begin{restatable}{corr}{corriv}
 \label{corr: KPQ}
	If Assumption \ref{asmp: p_q} holds, the $KL(\cQ_\theta||\cP)$ is invariant to $\alpha$-scale transformation, i.e.,
	\beq
	KL(\cQ_{T_\alpha(\theta)}|| {\cP_{{T_\alpha}(\theta_0)}}) = KL(\cQ_\theta|| \cP).
	\eeq
\end{restatable}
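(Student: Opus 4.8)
The plan is to exhibit both $\cQ_{T_\alpha(\theta)}$ and $\cP_{T_\alpha(\theta_0)}$ as the pushforwards of $\cQ_\theta$ and $\cP$ under the invertible linear map $T_\alpha$, and then appeal directly to Lemma~\ref{Lem: KLD}. By Definition~\ref{def: scale}, $T_\alpha(\theta) = A\theta$ where $A$ is the diagonal matrix whose $j$-th entry $a_j$ equals $\alpha_l$ when coordinate $j$ belongs to layer $l$; since every $\alpha_l > 0$, the matrix $A$ is symmetric, positive definite, non-singular, and $T_\alpha$ is invertible with inverse $\theta \mapsto A^{-1}\theta$. So Lemma~\ref{Lem: KLD} will apply once we match the transformed distributions with the ones prescribed by Assumption~\ref{asmp: p_q}.

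The structural fact I would establish first is that the $\alpha$-scale transformation leaves the empirical loss unchanged as a function on $\R^p$: because the activations are positively homogeneous of degree one and $\prod_{l=1}^L \alpha_l = 1$, the network output satisfies $\phi(A\theta; x) = \phi(\theta; x)$, hence $f(A\theta) = f(\theta)$, i.e., $f\circ A = f$ (at parameters where the relevant derivatives exist, as implicitly assumed in Assumption~\ref{asmp: p_q}). Differentiating this identity twice via the chain rule and using $A^T = A$ gives $\nabla^2 f(\theta) = A\,\nabla^2 f(A\theta)\,A$, so that $H_f(A\theta) = A^{-1}H_f(\theta)A^{-1}$; reading off diagonal entries, $H_f(T_\alpha(\theta))[j,j] = H_f(\theta)[j,j]/a_j^2$.

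Next I would track Assumption~\ref{asmp: p_q} through the transformation. Pushing the prior $\cN(\theta_0,\Sigma_\cP)$ through $T_\alpha$ yields $\cN(A\theta_0, A\Sigma_\cP A)$, whose $j$-th coordinate variance is $a_j^2\sigma_j^2$; this is exactly $\cP_{T_\alpha(\theta_0)}$. Pushing the posterior $\cQ_\theta = \cN(\theta,\Sigma_{\cQ_\theta})$ through $T_\alpha$ yields $\cN(A\theta, A\Sigma_{\cQ_\theta}A)$, whose precision matrix is $\diag(\nu_j/a_j^2)$. On the other hand, $\cQ_{T_\alpha(\theta)}$ as prescribed by Assumption~\ref{asmp: p_q} relative to the transformed prior has mean $T_\alpha(\theta)=A\theta$ and precision $\diag(\nu'_j)$ with
\[
\nu'_j = \max\Big\{ H_f(T_\alpha(\theta))[j,j],\ \tfrac{1}{a_j^2\sigma_j^2} \Big\} = \tfrac{1}{a_j^2}\max\Big\{ H_f(\theta)[j,j],\ \tfrac{1}{\sigma_j^2} \Big\} = \tfrac{\nu_j}{a_j^2}.
\]
Hence the pushforward of $\cQ_\theta$ equals $\cQ_{T_\alpha(\theta)}$ and the pushforward of $\cP$ equals $\cP_{T_\alpha(\theta_0)}$, and Lemma~\ref{Lem: KLD} (with invertible transformation $T_\alpha$) gives $KL(\cQ_{T_\alpha(\theta)}\,\|\,\cP_{T_\alpha(\theta_0)}) = KL(\cQ_\theta\,\|\,\cP)$, which is the claim.

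The only non-bookkeeping step — and the main obstacle — is the commutation verified in the displayed equation: one must check that "apply the Hessian-capping rule of Assumption~\ref{asmp: p_q}, then transform" produces the same distribution as "transform, then apply the rule." This holds precisely because the diagonal Hessian entries scale by $a_j^{-2}$ while the prior capping threshold $1/\sigma_j^2$ scales by the same factor $a_j^{-2}$, so the coordinatewise maximum defining $\nu_j$ commutes with the scaling; the homogeneity identity $f\circ A = f$ and its consequence $H_f(A\theta)=A^{-1}H_f(\theta)A^{-1}$ are exactly what force these two scalings to agree. A minor technical caveat is the non-smoothness of ReLU, which I would handle by restricting attention to parameters at which the second derivatives are defined.
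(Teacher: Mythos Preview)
Your proposal is correct and follows essentially the same route as the paper: both set up $T_\alpha(\theta)=A\theta$ with $A$ diagonal and invertible, use the homogeneity of the network to obtain $H_f(A\theta)=A^{-1}H_f(\theta)A^{-1}$, and then verify the key commutation step $\nu'_j=\nu_j/a_j^2$ showing that the Hessian-capping rule of Assumption~\ref{asmp: p_q} is compatible with the scaling. The only cosmetic difference is in the last step: you conclude by invoking Lemma~\ref{Lem: KLD} once the pushforwards are identified, whereas the paper (after remarking that the result follows from Lemma~\ref{Lem: KLD}) writes out a direct computation of the Gaussian KL formula and cancels the $A$'s explicitly; both finish the same argument.
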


Note that the prior distribution  $\cP_{{T_\alpha}(\theta_0)}$ is obtained by applying the scaling defined in Definition \ref{def: scale} to the original coordinate system. The posterior $\cQ_{T_\alpha(\theta)}$ is obtained by the local curvature of $T_\alpha(\theta)$, i.e., Hessian $H_f(T_\alpha(\theta))$. 

Let $S$ be a sample of $n$ pairs $(x_i,y_i)$ drawn i.i.d.~from the distribution $\cD$. For two Bernoulli distributions with event probability $p,q$, the relative entropy
$kl(p \| q) \triangleq p \log (p/q) + (1-p) \log (1-p)/(1-q)$. The main generalization bound can be stated as follows:

\begin{restatable}{theo}{theoiv} 
\label{theo: bound1}
	Let $\mathcal{P} \sim \mathcal{N}(\theta_0, \Sigma_\mathcal{P}\mathbb{I})$ to be the prior and $\cQ_\theta \sim \mathcal{N}(\theta, \Sigma_{\cQ_{\theta}})$ to be the posterior defined following Assumption \ref{asmp: p_q}. Let $\tilde d = | \{ j : H_f(\theta)[j,j] > 1/ \sigma_j^2 \}|$, the corresponding precision values be  $\{ \tilde \nu_{(1)}, ..., \tilde \nu_{(\tilde d)}\}$ and the corresponding thresholds be $\{1/ \tilde\sigma_{(1)}, ... , 1/\tilde \sigma_{(d)}\}$. With probability at least $(1-\delta)$ over the choice of $S$ we have the following scale-invariant generalization bound:

\begin{equation}
	kl(\ell(\cQ_\theta, S) || \ell(\cQ_\theta, D))
	 \leq \frac{1}{2n} \left( \underbrace{\sum_{l =1}^{\tilde d} \ln \frac{\tilde \nu_{(l)}}{1/\tilde \sigma_{(l)}^2}}_{\textup{effective curvature}} + \underbrace{\sum_{j =1}^{p}\frac{(\theta[j]-\theta_0[j])^2}{\sigma_j^2}}_{\textup{precision weighted Frobenius norm}}  \right) + \frac{\ln\frac{n+1}{\delta}}{n}
	 \label{eq: bound1}
\end{equation}	
	where 
	\begin{equation*}
	\ell(\cQ_\theta,S) = E_{\theta^\prime \sim \cQ_\theta}\left[ \frac{1}{n} \sum_{i=1}^n \ell(y_i, \phi(\x_i,\theta^\prime)) \right] \qquad \text{and} 
	\qquad \ell(\cQ_\theta,D) = E_{\theta^\prime \sim \cQ_\theta} \left[ E_{(x,y) \sim D} \left[ \ell(y, \phi(x,\theta^\prime)) \right] \right]~
	\end{equation*}
	are respectively the expected training and generalization error of the Bayesian model $\theta \sim \cQ_{\theta}$. 
\end{restatable}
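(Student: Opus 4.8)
The plan is to reduce the statement to the classical PAC--Bayesian theorem and then to evaluate the KL term in closed form. First I would invoke the standard PAC--Bayes bound in its binary-relative-entropy form \cite{mcda99,lash03}: for any prior $\cP$ chosen independently of $S$, any (possibly data-dependent) posterior $\cQ$ absolutely continuous with respect to $\cP$, and $[0,1]$-valued losses, with probability at least $1-\delta$ over $S\sim\cD^n$,
\[
kl\bigl(\ell(\cQ,S)\,\|\,\ell(\cQ,D)\bigr)\ \le\ \frac{KL(\cQ\|\cP)+\ln\frac{n+1}{\delta}}{n}~.
\]
The hypotheses of Assumption~\ref{asmp: p_q} make this applicable: $\theta_0$ and $\Sigma_\cP$ are fixed before training, so $\cP$ is a legitimate prior; and since $\nu_j=\max\{H_f(\theta)[j,j],1/\sigma_j^2\}\ge 1/\sigma_j^2>0$, the precision $\Sigma_{\cQ_\theta}^{-1}$ is positive definite, so $\cQ_\theta$ is a well-defined Gaussian with full support, hence absolutely continuous w.r.t.\ $\cP$. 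It then remains to upper bound $KL(\cQ_\theta\|\cP)$ by one half of the bracketed quantity in \eqref{eq: bound1}.

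Second, I would plug the diagonal Gaussians $\cP=\cN(\theta_0,\diag(\sigma_1^2,\dots,\sigma_p^2))$ and $\cQ_\theta=\cN(\theta,\diag(1/\nu_1,\dots,1/\nu_p))$ into the closed-form expression
\[
KL(\cQ_\theta\|\cP)=\frac12\Bigl[\sum_{j=1}^p\tfrac{1}{\sigma_j^2\nu_j}-p+\sum_{j=1}^p\tfrac{(\theta[j]-\theta_0[j])^2}{\sigma_j^2}+\sum_{j=1}^p\ln\bigl(\sigma_j^2\nu_j\bigr)\Bigr]~,
\]
and exploit the definition of $\nu_j$ coordinate by coordinate. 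Because $\sigma_j^2\nu_j\ge 1$, each term $1/(\sigma_j^2\nu_j)\le 1$, so $\sum_j 1/(\sigma_j^2\nu_j)-p\le 0$ and this ``trace slack'' can be discarded. For a coordinate with $H_f(\theta)[j,j]\le 1/\sigma_j^2$ (one of the $p-\tilde d$ flat directions) we have $\nu_j=1/\sigma_j^2$, hence $\ln(\sigma_j^2\nu_j)=0$; for the remaining $\tilde d$ coordinates $\nu_j=H_f(\theta)[j,j]$, so $\ln(\sigma_j^2\nu_j)=\ln\bigl(\tilde\nu_{(l)}/(1/\tilde\sigma_{(l)}^2)\bigr)$, which assembles into the ``effective curvature'' sum; the Mahalanobis term is already the ``precision weighted Frobenius norm.'' Combining, $KL(\cQ_\theta\|\cP)\le\frac12\bigl[\text{eff.\ curv.}+\text{prec.\ wtd.\ Frob.}\bigr]$, and substituting into the PAC--Bayes inequality gives exactly \eqref{eq: bound1}.

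Third, for the claimed scale invariance I would appeal to Corollary~\ref{corr: KPQ}: since $T_\alpha$ is an invertible linear map and preserves the function computed by the net, Lemma~\ref{Lem: KLD} yields $KL(\cQ_{T_\alpha(\theta)}\|\cP_{T_\alpha(\theta_0)})=KL(\cQ_\theta\|\cP)$; moreover a direct check using $H_f(T_\alpha(\theta))[j,j]=H_f(\theta)[j,j]/\alpha_{(j)}^2$ and $\sigma_j^2\mapsto\alpha_{(j)}^2\sigma_j^2$ (with $\alpha_{(j)}$ the scale of the layer containing coordinate $j$) shows that each product $\sigma_j^2\nu_j$ and each ratio $(\theta[j]-\theta_0[j])^2/\sigma_j^2$ is \emph{separately} unchanged, so the two displayed terms in \eqref{eq: bound1} are individually scale invariant; the left-hand side is too, since $T_\alpha$ leaves $\phi(\cdot,\theta')$ unchanged.

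I do not anticipate a serious obstacle: once the posterior of Assumption~\ref{asmp: p_q} is in hand the argument is essentially bookkeeping. The one point requiring care is the trace term $\sum_j 1/(\sigma_j^2\nu_j)-p$; the whole construction hinges on the capping $\nu_j\ge 1/\sigma_j^2$, which simultaneously (i) guarantees $\cQ_\theta$ is a valid distribution even though $H_f(\theta)$ is indefinite, (ii) forces this trace contribution to be non-positive so it can be dropped, and (iii) annihilates the log-det contribution along the flat directions so that only the $\tilde d$ sharp directions enter. A secondary caveat is the boundedness requirement on the loss needed to invoke the $kl$-form of PAC--Bayes, which holds for the classification losses considered here.
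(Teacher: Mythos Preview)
Your proposal is correct and follows essentially the same route as the paper: invoke the $kl$-form PAC--Bayes bound, write out the closed-form Gaussian KL, drop the nonpositive trace slack using $\sigma_j^2\nu_j\ge 1$, split the log-det sum into flat ($\ln(\sigma_j^2\nu_j)=0$) and sharp directions to obtain the effective-curvature term, and then appeal to Corollary~\ref{corr: KPQ} for scale invariance. If anything, your write-up is more explicit than the paper's about why the capping $\nu_j\ge 1/\sigma_j^2$ makes the trace and flat-direction log terms vanish.
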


\begin{figure*}[t] 
\centering
 \subfigure[Diagonal Element of $H_f(\theta_t)$.]{
 \includegraphics[width = 0.48 \textwidth]{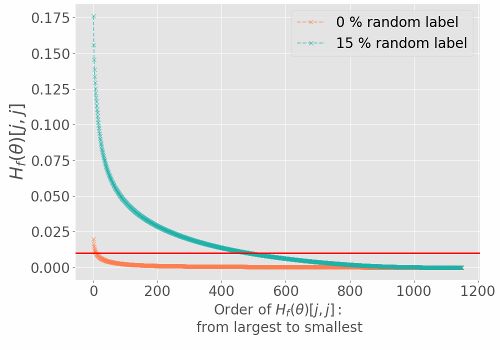}
 } 
 \subfigure[Effective Curvature.]{
 \includegraphics[width = 0.48 \textwidth]{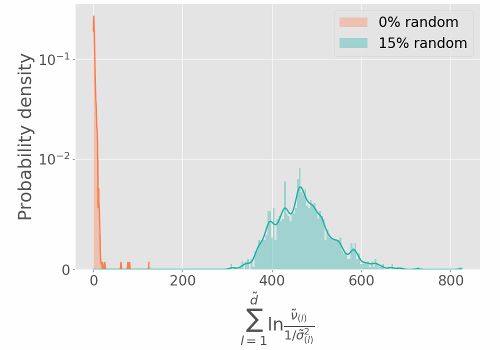}
 } 
 \subfigure[Precision Weighted Frobenius Norm.]{
 \includegraphics[width = 0.48 \textwidth]{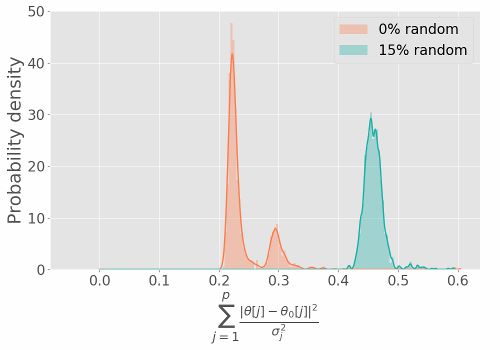}
 } 
 \subfigure[Scale-invariant Generalization Bound.]{
 \includegraphics[width = 0.48 \textwidth]{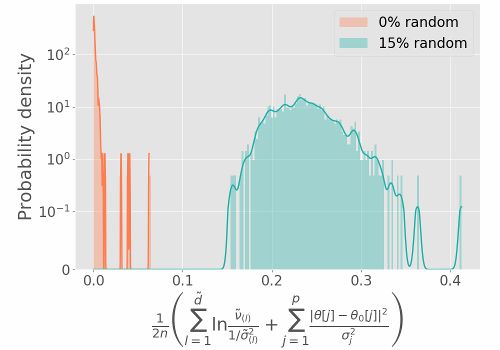}
 }
\caption[]{Gauss-10, batch size 5: Distributions from 10,000 runs. Note that (b), (c) and (d) are scale-invariant based on the analysis of Corollary \ref{corr: KPQ}. The plots indicate that with randomness increased from $0\%$ to $15\%$, the distribution of the effective curvature, precision weighted Frobenius norm and generalization error shift to a higher value, which validates the proposed bound.}
\label{fig:bound}
\end{figure*}
We step through each of the terms  in \eqref{eq: bound1} to gain insights into the bound. The first term, referred to as {\em effective curvature}, measures (in log scale) how the posterior precision (inverse variance) measured based on the diagonal elements of the Hessian cross a threshold based on the prior precision.
This term implies that only dimensions with high curvature that cross certain thresholds contribute to the generalization error. The effective curvature term implies that low curvature or `flat valley' models, where few $H_f(\theta)[j,j]$ cross the threshold $1/\sigma_j^2$ and only by a small amount, have small generalization error; on the other hand, high curvature models, where several $H_f(\theta)[j,j]$cross the threshold $1/\sigma_j^2$ or a few cross the threshold by a large amount, have larger generalization error. 

At a high level, this term captures similar qualitative dependencies as in recent advances in spectrally normalized bounds~\cite{bapf17, nebb17}, but with a more explicit dependence on the curvature base on a computable quantity: the diagonal of the Hessian.  In our notation, spectrally normalized bounds characterize the perturbation \cite{nebb17, vazk19} $f(\tilde \theta) - f(\theta) \approxeq (\tilde \theta - \theta)^T H_f(\theta) (\tilde \theta - \theta)$. Instead of using the Hessian, the existing advances have focused on uniform bounds on such perturbations in terms of the spectral and related norms of the layerwise weight matrices \cite{bapf17, nebb17}. Our results suggest that it is possible to get qualitatively similar but arguably more intuitive bounds by focusing on the structure of the Hessian. 

The second term is the precision weighted Frobenius norm, which measures the distance of the parameter $\theta$ from initialization $\theta_0$ weighted by the prior precision. The closer the parameter stays to the initialization, the smaller the term will be, implying a smaller generalization error. Similar qualitative dependencies on the Frobenius norm also showed up in recent advances in spectrally normalized PAC-Bayesian bounds \cite{bapf17, nebb17, vazk19} where the layer-wise Frobenius norm is normalized by the layer-wise spectral norm by picking special prior distribution for the PAC-Bayesian bound. 

The above generalization bound explains the generalization jointly in terms
of both the effective curvature and the precision weighted Frobenius norm. In the above generalization bound, the dependence on the prior covariance in the two terms illustrates a trade-off, i.e., a large prior variance $\sigma_j$ diminishes the dependence on $(\theta[j] -\theta_0[j])^2$ but increases the dependence on the effective curvature should dimension $j$ turn out to be a direction with sharp curvature, and vice versa.

The following corollary gives a generalization bound corresponding to the special case of an isotropic prior, with $\sigma_j = \sigma, j = 1,..., p$:
\begin{corr} \label{corr: bound2}
	Let $\mathcal{P} \sim \mathcal{N}(\theta_0, \sigma^2)$ be the prior and $\cQ_\theta \sim \mathcal{N}(\theta, \Sigma_{\cQ_{\theta}} )$ be the posterior, where $\Sigma_{\cQ_{\theta}}$ is defined as Assumption \ref{asmp: p_q}. Let $\tilde d$ be the number that $H_f(\theta)[j,j] > 1/ \sigma^2$ and let the corresponding precision values be  $\{ \tilde \nu_{(1)}, ..., \tilde \nu_{(\tilde d)}\}$ and corresponding thresholds be $\{1/ \tilde\sigma_{(1)}, ... , 1/\tilde \sigma_{(d)}\}$. With probability at least $(1-\delta)$ we have the following scale-invariant generalization bound:	
	\begin{equation}
	\begin{array}{ll}
	KL(\ell(\cQ_\theta, S) || \ell(\cQ_\theta, D))
	& \leq \frac{1}{2(n)} \left( \sum_{l =1}^{\tilde d} \ln \frac{\tilde \nu_{(s)}}{1/\sigma^2} + \frac{\|\theta -\theta_0\|^2}{\sigma^2}  \right) + \frac{\ln\frac{n+1}{\delta}}{n}\\
	& 
	\end{array}
	\end{equation}
	where $\ell(\cQ_\theta,S) = E_{\theta^\prime \sim \cQ_\theta}\left[ \frac{1}{n} \sum_{i=1}^n \ell(y_i, \phi(\x_i,\theta^\prime)) \right]$ and 
	$\ell(\cQ_\theta,D) = E_{\theta^\prime \sim \cQ_\theta} \left[ E_{(x,y) \sim D} \left[ \ell(y, \phi(x,\theta^\prime)) \right] \right]$
	are the training and generalization error of the hypotheses $\theta \sim \cQ_{\theta}$ respectively. 
\end{corr}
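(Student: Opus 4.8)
The plan is to obtain Corollary~\ref{corr: bound2} as the isotropic specialization of Theorem~\ref{theo: bound1}: set $\sigma_j = \sigma$ for all $j = 1,\ldots,p$ in the bound \eqref{eq: bound1} and simplify. With a common prior variance the thresholds collapse, $1/\tilde\sigma_{(l)}^2 = 1/\sigma^2$, so the effective-curvature term becomes $\sum_{l=1}^{\tilde d} \ln\big(\tilde\nu_{(l)} / (1/\sigma^2)\big)$ with $\tilde d = |\{ j : H_f(\theta)[j,j] > 1/\sigma^2 \}|$, and the precision-weighted Frobenius term factors as $\sum_{j=1}^p (\theta[j]-\theta_0[j])^2 / \sigma_j^2 = \sigma^{-2}\sum_{j=1}^p (\theta[j]-\theta_0[j])^2 = \|\theta - \theta_0\|_2^2/\sigma^2$. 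Substituting these into \eqref{eq: bound1} gives exactly the claimed inequality, and scale invariance is inherited from Corollary~\ref{corr: KPQ}.

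Since the corollary rests on Theorem~\ref{theo: bound1}, for completeness I would also recall the route to that theorem. Start from a standard PAC-Bayesian inequality (Seeger/McAllester form): with probability at least $1-\delta$ over $S$, $kl(\ell(\cQ_\theta,S)\,\|\,\ell(\cQ_\theta,D)) \le \frac{1}{n}\big(KL(\cQ_\theta\|\cP) + \ln\tfrac{n+1}{\delta}\big)$. Then evaluate $KL(\cQ_\theta\|\cP)$ for the two diagonal Gaussians of Assumption~\ref{asmp: p_q}: for diagonal covariances this KL equals $\tfrac12 \sum_{j=1}^p \big[ \ln(\nu_j \sigma_j^2) + \tfrac{1}{\nu_j \sigma_j^2} + \tfrac{(\theta[j]-\theta_0[j])^2}{\sigma_j^2} - 1 \big]$. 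Because $\nu_j = \max\{H_f(\theta)[j,j], 1/\sigma_j^2\} \ge 1/\sigma_j^2$, the coordinates split: for the flat coordinates with $\nu_j = 1/\sigma_j^2$ one has $\nu_j\sigma_j^2 = 1$, so the logarithm and $\tfrac{1}{\nu_j\sigma_j^2}-1$ both vanish, leaving only the quadratic term; for the $\tilde d$ sharp coordinates with $\nu_j = H_f(\theta)[j,j] = \tilde\nu_{(l)}$ one has $\tfrac{1}{\nu_j\sigma_j^2}-1 \le 0$, which can be dropped for an upper bound, leaving $\ln(\tilde\nu_{(l)}/(1/\sigma_j^2))$ plus the quadratic term. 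Collecting the surviving pieces yields the factor-$\tfrac12$ bound on $KL(\cQ_\theta\|\cP)$, hence \eqref{eq: bound1}, and then the isotropic substitution above gives the corollary.

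Scale invariance of the corollary's bound follows because, by Corollary~\ref{corr: KPQ} (which invokes Lemma~\ref{Lem: KLD} for the invertible linear $\alpha$-scale map), $KL(\cQ_\theta\|\cP)$ is unchanged under any $\alpha$-scale transformation; both terms on the right-hand side are exactly the two pieces of that KL, hence individually invariant, and the $\ln\tfrac{n+1}{\delta}$ term does not depend on the parameterization. The only point requiring care — not a genuine obstacle — is that $H_f(\theta)[j,j]$ can be negative since the Hessian is indefinite, so $\Sigma_{\cQ_\theta}^{-1}$ must be seen to be positive definite; this is precisely why Assumption~\ref{asmp: p_q} caps $\nu_j = \max\{H_f(\theta)[j,j], 1/\sigma_j^2\} \ge 1/\sigma_j^2 > 0$. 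Granting Theorem~\ref{theo: bound1}, the corollary is a one-line substitution; the substantive work lives in Theorem~\ref{theo: bound1} and Corollary~\ref{corr: KPQ}, both already available in the excerpt.
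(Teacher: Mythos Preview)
Your proposal is correct and matches the paper's approach exactly: the paper presents Corollary~\ref{corr: bound2} as the immediate isotropic specialization of Theorem~\ref{theo: bound1} obtained by setting $\sigma_j=\sigma$ for all $j$, with no separate proof given. Your recapitulation of the proof of Theorem~\ref{theo: bound1} (PAC-Bayes plus the diagonal-Gaussian KL computation under Assumption~\ref{asmp: p_q}) and the appeal to Corollary~\ref{corr: KPQ} for scale invariance also mirror the paper's argument in the appendix.
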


To evaluate the proposed generalization bound, we use SGD with isotropic Gaussian initialization (Gaussian prior: $\sigma_i = \sigma$ and $\theta_0 = 0$) to train the aforementioned models on true labeled data and random labeled data. We repeat the training for 10,000 time. Figure \ref{fig:bound} presents the results on Gauss-10 dataset.  Figure \ref{fig:bound}(a) plots the diagonal elements of $H_f(\theta)$, demonstrating that for true labeled data, few $H_f(\theta)[j,j]$ cross $1/\sigma^2$ (red line). For random labeled data, a larger number of $H_f(\theta)[j,j]$ cross $1/\sigma^2$, suggesting a larger generalization error than true labeled data. The large diagonal elements of the Hessian on the random labeled data also implies that the loss surface of the parameter learned from random labeled data is sharper than the one learned from true labeled data. Note that the spectrum of the diagonal elements of $H_f(\theta)$ can change by $\alpha$-scale transformation. However the ratio of the diagonal elements to the corresponding precision does not change, since the scaling on both terms gets canceled. Figure \ref{fig:bound}(b) presents the effective curvature, the first term in the bound. True labeled data has smaller effective curvature than data with random labels in line with the observations in \ref{fig:bound}(a) that fewer $H_f(\theta)[j,j]$ cross $1/\sigma^2$ for true labeled data than random labeled data. Figure \ref{fig:bound}(c) plots the weighted Frobenius norm  $\|\theta\|_F^2/\sigma^2$. As the $\sigma^2$ remains the same for random labeled data and true labeled data, Figure \ref{fig:bound}(c) implies that SGD goes further from the initialization with random labels suggesting larger generalization error than true labeled data. 
Finally, the results in Figure \ref{fig:bound}(d) shows with the randomness increased from $0\%$ to $15\%$, the generalization error shifts to a higher value, which is consistent with the observations in \cite{zhbh17,nebm17} that random labeled data has larger generalization error.  Note that Figure \ref{fig:bound}(b), (c) and (d) are scale-invariant based on the analysis of Corollary \ref{corr: KPQ}. Additional results can be found in Appendix \ref{app:general}.

\section{Conclusions}
\label{sec:conc}
\vspace*{-3mm}

In this paper, we empirically and theoretically study the dynamics and generalization of SGD for deep nets based on the Hessian of the loss. We find that the primary subspace of the second moment of SGs overlaps substantially with that of the Hessian, although the matrices are not equal. Thus, SGD seems to be picking up and using second order information of the loss. We empirically study the SGD dynamics and present large deviation bounds for the change in loss at each step characterized as a martingale sequence. We also characterize the convergence of SGD to a stationary point with adaptive step sizes as well as preconditioning. From a stochastic process perspective, such adaptivity makes the dynamics a super-martingale. We develop a scale-invariant PAC-Bayesian generalization bound where the anisotropic posterior depends on the Hessian at minima in an intuitive manner, e.g., flat directions of the Hessian have large variance in the posterior. 

\section*{Acknowledgments}
\label{sec:ack}
The research was supported by NSF grants IIS-1563950, IIS-1447566, IIS-1447574, IIS-1422557, CCF-1451986, IIS-1029711.

\bibliography{SGD_for_Deep_Nets_Dynamics_and_Generalization}

\begin{thebibliography}{10}

\bibitem{zhls18}
Zeyuan Allen-Zhu, Yuanzhi Li, and Zhao Song.
\newblock On the convergence rate of training recurrent neural networks.
\newblock {\em arXiv:1810.12065}, 2018.

\bibitem{amna00}
Shun-ichi Amari and Hiroshi Nagaoka.
\newblock {\em Methods of Information Geometry}, volume 191.
\newblock 01 2000.

\bibitem{arcg19}
Sanjeev Arora, Nadav Cohen, Noah Golowich, and Wei Hu.
\newblock A convergence analysis of gradient descent for deep linear neural
  networks.
\newblock In {\em ICLR}, 2019.

\bibitem{rasc18}
Sanjeev Arora, Nadav Cohen, and Elad Hazan.
\newblock On the optimization of deep networks: Implicit acceleration by
  overparameterization.
\newblock {\em arXiv:1802.06509}, 2018.

\bibitem{bapf17}
Peter~L Bartlett, Dylan~J Foster, and Matus~J Telgarsky.
\newblock Spectrally-normalized margin bounds for neural networks.
\newblock In {\em NIPS}, 2017.

\bibitem{bert99}
D.P. Bertsekas.
\newblock {\em Nonlinear Programming}.
\newblock Athena Scientific, 1999.

\bibitem{boucheron_concentration_2013}
St{\'e}phane Boucheron, G{\'a}bor Lugosi, and Pascal Massart.
\newblock {\em Concentration inequalities: A nonasymptotic theory of
  independence}.
\newblock Oxford university press, 2013.

\bibitem{brgm18}
Alon Brutzkus, Amir Globerson, Eran Malach, and Shai Shalev-Shwartz.
\newblock {SGD} learns over-parameterized networks that provably generalize on
  linearly separable data.
\newblock In {\em ICLR}, 2018.

\bibitem{cabe01}
G.~Casella and R.L. Berger.
\newblock {\em Statistical Inference}.
\newblock Duxbury advanced series. Brooks/Cole Publishing Company, 1990.

\bibitem{chcs16}
Pratik Chaudhari, Anna Choromanska, Stefano Soatto, Yann LeCun, Carlo Baldassi,
  Christian Borgs, Jennifer Chayes, Levent Sagun, and Riccardo Zecchina.
\newblock Entropy-sgd: Biasing gradient descent into wide valleys.
\newblock {\em arXiv:1611.01838}, 2016.

\bibitem{chso18}
Pratik Chaudhari and Stefano Soatto.
\newblock Stochastic gradient descent performs variational inference, converges
  to limit cycles for deep networks.
\newblock In {\em ICLR}, 2018.

\bibitem{coth06}
Thomas~M. Cover and Joy~A. Thomas.
\newblock {\em Elements of Information Theory (Wiley Series in
  Telecommunications and Signal Processing)}.
\newblock Wiley-Interscience, New York, NY, USA, 2006.

\bibitem{daka70}
C.~Davis and W.~Kahan.
\newblock The rotation of eigenvectors by a perturbation. iii.
\newblock {\em SIAM Journal on Numerical Analysis}, 7(1):1--46, 1970.

\bibitem{dipb17}
Laurent Dinh, Razvan Pascanu, Samy Bengio, and Yoshua Bengio.
\newblock {Sharp minima can generalize for deep nets}.
\newblock {\em arXiv:1703.04933}, 2017.

\bibitem{silj18b}
Simon~S Du, Jason~D Lee, Haochuan Li, Liwei Wang, and Xiyu Zhai.
\newblock Gradient descent finds global minima of deep neural networks.
\newblock In {\em ICML}, 2019.

\bibitem{sijy18}
Simon~S. Du, Jason~D. Lee, Yuandong Tian, Barnab{\'{a}}s P{\'{o}}czos, and
  Aarti Singh.
\newblock Gradient descent learns one-hidden-layer {CNN}: Don’t be afraid of
  spurious local minima.
\newblock In {\em ICML}, pages 1339--1348. PMLR, 10--15 Jul 2018.

\bibitem{sizp19}
Simon~S. Du, Xiyu Zhai, Barnabas Poczos, and Aarti Singh.
\newblock Gradient descent provably optimizes over-parameterized neural
  networks.
\newblock In {\em ICLR}, 2019.

\bibitem{dutt18}
Sourav Dutta.
\newblock An overview on the evolution and adoption of deep learning
  applications used in the industry.
\newblock {\em Wiley Interdisciplinary Reviews: Data Mining and Knowledge
  Discovery}, 8(4):e1257, 2018.

\bibitem{ghla13}
S.~Ghadimi and G.~Lan.
\newblock Stochastic first- and zeroth-order methods for nonconvex stochastic
  programming.
\newblock {\em SIAM Journal on Optimization}, 23(4):2341--2368, 2013.

\bibitem{ghkx19}
Behrooz Ghorbani, Shankar Krishnan, and Ying Xiao.
\newblock An investigation into neural net optimization via hessian eigenvalue
  density.
\newblock {\em arXiv preprint arXiv:1901.10159}, 2019.

\bibitem{golo13}
Gene~H. Golub and Charles~F. van Loan.
\newblock {\em Matrix Computations}.
\newblock JHU Press, fourth edition, 2013.

\bibitem{gobc16}
Ian Goodfellow, Yoshua Bengio, and Aaron Courville.
\newblock {\em Deep Learning}.
\newblock MIT Press, 2016.
\newblock \url{http://www.deeplearningbook.org}.

\bibitem{gudd18}
Guy Gur-Ari, Daniel~A Roberts, and Ethan Dyer.
\newblock Gradient descent happens in a tiny subspace.
\newblock {\em arXiv preprint arXiv:1812.04754}, 2018.

\bibitem{halm74}
Paul Halmos.
\newblock {\em Naive Set Theory}.
\newblock Van Nostrand, 1960.

\bibitem{hosc97b}
Sepp Hochreiter and Jürgen Schmidhuber.
\newblock Flat minima.
\newblock {\em Neural Computation}, 9(1):1--42, 1997.

\bibitem{hskz12}
Daniel Hsu, Sham Kakade, and Tong Zhang.
\newblock A tail inequality for quadratic forms of subgaussian random vectors.
\newblock {\em Electronic Communications in Probability}, 17:6 pp., 2012.

\bibitem{jacod_probability_2012}
Jean Jacod and Philip Protter.
\newblock {\em Probability essentials}.
\newblock Springer Science \& Business Media, 2012.

\bibitem{jaka17}
Stanislaw Jastrzebski, Zachary Kenton, Devansh Arpit, Nicolas Ballas, Asja
  Fischer, Yoshua Bengio, and Amos~J. Storkey.
\newblock Three factors influencing minima in {SGD}.
\newblock In {\em ICANN}, 2018.

\bibitem{jakb18}
Stanislaw Jastrzebski, Zachary Kenton, Nicolas Ballas, Asja Fischer, Yoshua
  Bengio, and Amos~J. Storkey.
\newblock Dnn's sharpest directions along the sgd trajectory.
\newblock {\em arXiv:1807.05031}, 2018.

\bibitem{kemn17}
Nitish~Shirish Keskar, Dheevatsa Mudigere, Jorge Nocedal, Mikhail Smelyanskiy,
  and Ping Tak~Peter Tang.
\newblock On large-batch training for deep learning: Generalization gap and
  sharp minima.
\newblock In {\em ICLR}, 2017.

\bibitem{kiba15}
Diederik~P. Kingma and Jimmy Ba.
\newblock Adam: A method for stochastic optimization.
\newblock In {\em ICLR}, 2015.

\bibitem{kl2011}
Richard Kleeman.
\newblock Information theory and dynamical system predictability.
\newblock {\em Entropy}, 13(3):612--649, 2011.

\bibitem{krizhevsky_learning_2009}
Alex Krizhevsky.
\newblock Learning {{Multiple Layers}} of {{Features}} from {{Tiny Images}}.
\newblock Technical Report Vol. 1. No. 4., {University of Toronto}, 2009.

\bibitem{Lanc50}
Cornelius Lanczos.
\newblock {An iteration method for the solution of the eigenvalue problem of
  linear differential and integral operators}.
\newblock {\em J. Res. Natl. Bur. Stand. B}, 45:255--282, 1950.

\bibitem{lash03}
John Langford and John Shawe-Taylor.
\newblock Pac-bayes \& margins.
\newblock In {\em NIPS}, 2003.

\bibitem{gall16}
Jean-Francois Le~Gall.
\newblock {\em Brownian Motion, Martingales, and Stochastic Calculus}, volume
  274.
\newblock Springer, 01 2016.

\bibitem{lebh15}
Yann LeCun, Yoshua Bengio, and Geoffrey Hinton.
\newblock Deep learning.
\newblock {\em Nature}, 521:436 EP --, 05 2015.

\bibitem{lecun_gradientbased_1998}
Yann LeCun, L\'eon Bottou, Yoshua Bengio, and Patrick Haffner.
\newblock Gradient-based learning applied to document recognition.
\newblock {\em Proceedings of the IEEE}, 86(11):2278--2324, 1998.

\bibitem{leta91}
Michel Ledoux and Michel Talagrand.
\newblock {\em {Probability in Banach Spaces: isoperimetry and processes}}.
\newblock Springer, Berlin, May 1991.

\bibitem{leca98}
E.L. Lehmann and G.~Casella.
\newblock {\em {Theory of Point Estimation}}.
\newblock Springer Verlag, 1998.

\bibitem{lily18}
Yuanzhi Li and Yingyu Liang.
\newblock Learning overparameterized neural networks via stochastic gradient
  descent on structured data.
\newblock In {\em NIPS}, 2018.

\bibitem{mabb17}
Siyuan Ma, Raef Bassily, and Mikhail Belkin.
\newblock The power of interpolation: Understanding the effectiveness of sgd in
  modern over-parametrized learning.
\newblock {\em arXiv:1712.06559}, 2017.

\bibitem{mada15}
Dougal Maclaurin, David Duvenaud, and Ryan~P. Adams.
\newblock Autograd: Effortless gradients in numpy.
\newblock In {\em ICML AutoML Workshop}, 2015.

\bibitem{mahb17}
Stephan Mandt, Matthew~D. Hoffman, and David~M. Blei.
\newblock Stochastic gradient descent as approximate bayesian inference.
\newblock {\em JMLR}, 18(1):4873--4907, January 2017.

\bibitem{mart14}
James {Martens}.
\newblock {New insights and perspectives on the natural gradient method}.
\newblock {\em arXiv:1412.1193}, Dec 2014.

\bibitem{mcda99}
David~A McAllester.
\newblock Pac-bayesian model averaging.
\newblock In {\em COLT}. ACM, 1999.

\bibitem{melnyk_estimating_2016}
Igor Melnyk and Arindam Banerjee.
\newblock Estimating structured vector autoregressive models.
\newblock In {\em International Conference on Machine Learning}, pages
  830--839, 2016.

\bibitem{mina18}
Matiur~Rahman Minar and Jibon Naher.
\newblock Recent advances in deep learning: An overview.
\newblock {\em arXiv:1807.08169}, 2018.

\bibitem{vazk19}
Vaishnavh Nagarajan and Zico Kolter.
\newblock Deterministic {PAC}-bayesian generalization bounds for deep networks
  via generalizing noise-resilience.
\newblock In {\em ICLR}, 2019.

\bibitem{nejl09}
A.~Nemirovski, A.~Juditsky, G.~Lan, and A.~Shapiro.
\newblock Robust stochastic approximation approach to stochastic programming.
\newblock {\em SIAM Journal on Optimization}, 19(4):1574--1609, 2009.

\bibitem{nebb17}
Behnam Neyshabur, Srinadh Bhojanapalli, David McAllester, and Nathan Srebro.
\newblock A pac-bayesian approach to spectrally-normalized margin bounds for
  neural networks.
\newblock {\em arXiv:1707.09564}, 2017.

\bibitem{nebm17}
Behnam Neyshabur, Srinadh Bhojanapalli, David Mcallester, and Nati Srebro.
\newblock Exploring generalization in deep learning.
\newblock In {\em NIPS}, 2017.

\bibitem{papy18}
Vardan Papyan.
\newblock The full spectrum of deep net hessians at scale: Dynamics with sample
  size.
\newblock {\em arXiv preprint arXiv:1811.07062}, 2018.

\bibitem{papy19}
Vardan Papyan.
\newblock Measurements of three-level hierarchical structure in the outliers in
  the spectrum of deepnet hessians.
\newblock {\em arXiv preprint arXiv:1901.08244}, 2019.

\bibitem{Pear94}
Barak~A. Pearlmutter.
\newblock Fast exact multiplication by the hessian.
\newblock {\em Neural Comput.}, 6(1):147--160, January 1994.

\bibitem{scikit-learn}
F.~Pedregosa, G.~Varoquaux, A.~Gramfort, V.~Michel, B.~Thirion, O.~Grisel,
  M.~Blondel, P.~Prettenhofer, R.~Weiss, V.~Dubourg, J.~Vanderplas, A.~Passos,
  D.~Cournapeau, M.~Brucher, M.~Perrot, and E.~Duchesnay.
\newblock Scikit-learn: Machine learning in {P}ython.
\newblock {\em Journal of Machine Learning Research}, 12:2825--2830, 2011.

\bibitem{rank19}
Akshay Rangamani, Nam~H Nguyen, Abhishek Kumar, Dzung Phan, Sang~H Chin, and
  Trac~D Tran.
\newblock A scale invariant flatness measure for deep network minima.
\newblock {\em arXiv preprint arXiv:1902.02434}, 2019.

\bibitem{rao45}
C.~Radhakrishna Rao.
\newblock Information and the accuracy attainable in the estimation of
  statistical parameters.
\newblock {\em . Bulletin of the Calcutta Mathematical Society}, pages 81--89,
  1945.

\bibitem{rekk18}
Sashank~J. Reddi, Satyen Kale, and Sanjiv Kumar.
\newblock On the convergence of adam and beyond.
\newblock In {\em ICLR}, 2018.

\bibitem{romo51}
Herbert Robbins and Sutton Monro.
\newblock A stochastic approximation method.
\newblock {\em Ann. Math. Statist.}, 22(3):400--407, 09 1951.

\bibitem{rudi76}
Walter Rudin.
\newblock {\em Principles of mathematical analysis}.
\newblock McGraw-Hill Book Co., New York, third edition, 1976.
\newblock International Series in Pure and Applied Mathematics.

\bibitem{itos18}
Itay Safran and Ohad Shamir.
\newblock Spurious local minima are common in two-layer relu neural networks.
\newblock In {\em ICML}, 2018.

\bibitem{sabl16}
Levent Sagun, Leon Bottou, and Yann LeCun.
\newblock Eigenvalues of the hessian in deep learning: Singularity and beyond.
\newblock {\em arXiv:1611.07476}, 2016.

\bibitem{saeg17}
Levent Sagun, Utku Evci, V.~Ugur G{\"{u}}ney, Yann Dauphin, and L{\'{e}}on
  Bottou.
\newblock Empirical analysis of the hessian of over-parametrized neural
  networks.
\newblock {\em arXiv:1706.04454}, 2017.

\bibitem{sebah02}
Pascal Sebah and Xavier Gourdon.
\newblock Introduction to the gamma function.
\newblock 2002.

\bibitem{sham18}
Ohad Shamir.
\newblock Exponential convergence time of gradient descent for one-dimensional
  deep linear neural networks.
\newblock {\em arXiv:1809.08587}, 2018.

\bibitem{smle18}
Samuel~L. Smith and Quoc~V. Le.
\newblock A bayesian perspective on generalization and stochastic gradient
  descent.
\newblock In {\em ICLR}, 2018.

\bibitem{strk19}
Matthew Staib, Sashank Reddi, Satyen Kale, Sanjiv Kumar, and Suvrit Sra.
\newblock Escaping saddle points with adaptive gradient methods.
\newblock In {\em ICML}, pages 5956--5965, 2019.

\bibitem{tsss19}
Yusuke Tsuzuku, Issei Sato, and Masashi Sugiyama.
\newblock Normalized flat minima: Exploring scale invariant definition of flat
  minima for neural networks using pac-bayesian analysis.
\newblock {\em arXiv preprint arXiv:1901.04653}, 2019.

\bibitem{valz15}
Daniel Vainsencher, Han Liu, and Tong Zhang.
\newblock Local smoothness in variance reduced optimization.
\newblock In {\em Advances in Neural Information Processing Systems 28}, pages
  2179--2187, 2015.

\bibitem{vamr17}
Rocio Vargas, Amir Mosavi, and Ramon Ruiz.
\newblock Deep learning: A review.
\newblock {\em Advances in Intelligent Systems and Computing}, 5, 08 2017.

\bibitem{vers12}
Roman Vershynin.
\newblock {\em Introduction to the non-asymptotic analysis of random matrices}.
\newblock Cambridge University Press, 2012.

\bibitem{vershynin_high_2018}
Roman Vershynin.
\newblock {\em High-dimensional probability: An introduction with applications
  in data science}, volume~47.
\newblock Cambridge University Press, 2018.

\bibitem{wass10}
Larry Wasserman.
\newblock {\em All of Statistics: A Concise Course in Statistical Inference}.
\newblock Springer Publishing Company, Incorporated, 2010.

\bibitem{williams_probability_1991}
David Williams.
\newblock {\em Probability with martingales}.
\newblock Cambridge university press, 1991.

\bibitem{xiat18}
Chen Xing, Devansh Arpit, Christos Tsirigotis, and Y~Bengio.
\newblock A walk with sgd.
\newblock {\em arXiv:1802.08770}, 02 2018.

\bibitem{yimc19}
Mingyang Yi, Qi~Meng, Wei Chen, Zhi-ming Ma, and Tie-Yan Liu.
\newblock Positively scale-invariant flatness of relu neural networks.
\newblock {\em arXiv preprint arXiv:1903.02237}, 2019.

\bibitem{yuws15}
Y.~Yu, T.~Wang, and R.~J. Samworth.
\newblock A useful variant of the davis{\textendash}kahan theorem for
  statisticians.
\newblock {\em Biometrika}, 102(2):315--323, apr 2015.

\bibitem{yusj18}
Chulhee Yun, Suvrit Sra, and Ali Jadbabaie.
\newblock Small nonlinearities in activation functions create bad local minima
  in neural networks.
\newblock {\em arXiv:1802.03487}, February 2018.

\bibitem{zhbh17}
Chiyuan Zhang, Samy Bengio, Moritz Hardt, Benjamin Recht, and Oriol Vinyals.
\newblock Understanding deep learning requires rethinking generalization.
\newblock In {\em ICLR}, 2017.

\bibitem{zozq18}
Difan Zou, Yuan Cao, Dongruo Zhou, and Quanquan Gu.
\newblock Stochastic {Gradient} {Descent} {Optimizes} {Over}-parameterized
  {Deep} {ReLU} {Networks}.
\newblock {\em arXiv:1811.08888}, November 2018.

\end{thebibliography}
\bibliographystyle{plain}

\appendix
\section{Experimental Setup}

We perform experiments on the fully connected feed-forward network with Relu activation. All experiments on synthetic datasets have been run on a 56-core Intel\textsuperscript \textregistered~CPU @ 2.40 GHz with 256GB memory, while experiments on real datasets have been performed on a Tesla M4 GPU.

\subsection{Synthetic data with corrupted labels}
In this section, we provide discussions regrading the synthetic dataset, the network architecture, and the training process. The details of setting for each specific experiment have been summarized in Table \ref{table:synthetic}. 

\textbf{Synthetic datasets.} We generated synthetic datasets of size \textit{n} with $k$-class Gaussian blobs where equal number of points is randomly sampled from $k$ Gaussian distribution $\mathcal{N}(\bf{\mu}_k,\mathbb{I})$ with $\bf{\mu}_k$be generated uniformly between -10 and 10 in each dimension (the default setting provided by $sklearn$ \cite{scikit-learn}). 

We form the $k$-class classification problems with different degrees of difficulties by introducing different levels of randomness \textit{r} in labels \cite{zhbh17}. In our context, \textit{r} is the portion of labels for each class that has been replaced by random labels uniformly chosen from $k$ classes. $r = 0$ denotes the original dataset with no corruption, and $r = 1$ means a dataset with completely random labels.

\textbf{Network architecture for Guass-$k$.} The Relu-network has two hidden layers \cite{saeg17} with 10 and 30 hidden units respectively. The input layer of such Relu network is 50-dimensional, and the output layer is $k$-dimensional ($k\geq 2$) with softmax activation. The proposed network has approximately 1,000 parameters (the exact number of parameters can be found in Table \ref{table:synthetic}).

\begin{table}[t!]
\caption{Summary of the setting for each specific experiment with synthetic data. Since $ n=100 \ll p = 902 (1150)$, the 2-layer Relu network is over-parameterized.}
\label{table:synthetic}
\centering
\begin{tabular}{lllll}
\hline
\multicolumn{5}{c}{Data} \\
\hline 
No. of Classes k:            & 2           & 2   & 10  & 10 \\
Input Dimension:            & 50          & 50   & 50  & 50\\
No. of Training Samples n:  & 100         & 100  & 100 & 100\\
Random Labels:              & 0           & 20\% & 0\% & 15\%\\
\hline
\multicolumn{4}{c}{Network Structure}\\
\hline 
No. of Layers:& 2          & 2           & 2  \\
No. of Nodes per Layer:    & {[}10,30{]} & {[}10,30{]} & {[}10,30{]}& {[}10,30{]}\\
No. of Parameters p:         & 902         & 902         & 1150   & 1150      \\
\hline 
\multicolumn{4}{c}{Training Parameters} \\
\hline 
Batch Size m:               & 5,~50  & 5,~50   & 5,~50  & 5,~50 \\
Learning Rate $\eta$:       & 0.1   & 0.05     & 0.1   & 0.1\\
Max Iterations:             & 100    & 3,000   & 400   & 10,000\\
\hline        
\end{tabular}
\end{table}

\textbf{Training.} We use constant step-size SGD to train the Relu network on the above mentioned datasets repetitively 10,000 times to analyze the SGD dynamics, stationary distribution and generalization. For each independent run, we first generate $\theta_0 \sim P(\theta)$ from a Gaussian distribution $\mathcal{N}(\bf0,\mathbb{I})$, than train the Rule network using SGD with constant learning rate $\eta$, batch size $m$ (random samples with replacement) for $T$ iterations until converge. The corresponding $\theta_t$,training loss $f(\theta_t)$, Hessian of the loss $H_f(\theta_t)$, and $M_t$ at each iteration t are recorded. Training till convergence is repeated 10,000 times, and these 10,000 different runs let us compute the empirical distribution of several quantities of interest including  $f(\theta_t), \theta_t, H_f(\theta_t),M_t$ as well as eigen-spectra of $H_f(\theta_t),M_t$ and related matrices.

\subsection{MNIST and CIFAR-10}
We also conduct a series of experiments on two commonly used real datasets: MNIST~\cite{lecun_gradientbased_1998}, and CIFAR-10~\cite{krizhevsky_learning_2009} to demonstrate that, even though in the real-world scenario the problem can be significantly more challenging, observations we have made in the synthetic datasets are still valid.

\textbf{MNIST dataset.} The MNIST dataset contains 60,000 black and white training images, representing handwritten digits 0 to 9. Each image of size $28\times28$ is normalized by subtracting the mean and dividing the standard deviation of the training set and converted into a vector of size 784. 

\textbf{Network architecture for MNIST.} The $d$-hidden layer Relu network, with $d$ varying from 3 to 6, has 128 hidden units at each layer. Each Relu-network has more than 100,000 parameters (see Table \ref{table:real} for details). 

\textbf{CIFAR-10 dataset.} The CIFAR-10 dataset consists of 60,000 color images including 10 categories. 50,000 of them are for training, and the rest 10,000 are for validation/testing purpose. Every image is of size $32\times32$ and has 3 color channels. We first re-scale each image into [0, 1] by dividing each pixel value by 255, then each image is normalized by subtracting the mean and dividing the standard deviation of the training set for each color channel, and finally each image is converted into a vector of size $3072$ $(32\times32\times3)$. 

\textbf{Network architecture for CIFAR-10.} We consider two network architectures: a shallow 3-hidden layer Relu-network, and a deeper 6-hidden layer one. Each network structure has approximately 1 million parameters with 256 nodes at each layer.

{\bf Training.} We use constant step-size SGD to train the Relu network with 4 mini-batch sizes: 64, 128, 256, and 512 on MNIST, and 2 mini-batch sizes: 256 and 512 on CIFAR-10. Each experiment has been repeated 10 times. 

\begin{table}[t!]
\caption{Summary of the setting for each specific experiment with MNIST dataset and CIFAR-10 dataset.}
\label{table:real}
\centering
\begin{tabular}{l|cccc|cc}
\hline 
& \multicolumn{4}{c|}{MNIST} & \multicolumn{2}{c}{CIFAR-10}\\
\hline
No. of Classes k: & \multicolumn{4}{c|}{10} & \multicolumn{2}{c}{10}\\
Input Dimension:  & \multicolumn{4}{c|}{784}& \multicolumn{2}{c}{3072}\\
No. of Training Samples n: & \multicolumn{4}{c|}{60,000} & \multicolumn{2}{c}{50,000}\\
\hline
&\multicolumn{6}{c}{Network Structure}\\
\hline 
No. of Layers:& 3  & 4 & 5 & 6 & 3 & 6  \\
No. of Nodes per Layer: & 128& 128 & 128& 128&256&256\\
No. of Parameters p:& 134,794 &  151,306 & 167,818  & 184,330& 920,842 &1,118,218      \\
\hline 
& \multicolumn{6}{c}{Training Parameters} \\
\hline 
Batch Size m:  & \multicolumn{4}{c|}{[64, 128, 256, 512]} &  \multicolumn{2}{c}{[256, 512]} \\
Learning Rate $\eta$:&0.1 &0.1 &0.1 &0.1 &0.1 &0.1\\
Max Iterations: & 20,00 & 10,000 & 10,000 & 10,000 & 10,000 & 15,000\\
\hline        
\end{tabular}
\end{table}

\newpage
\section{Hessian of the Loss and the Second Moment of SGD}
\label{app:hess}

In this section, we first provide a full derivation of the Hessian decomposition in Proposition \ref{prop:hessian}. Then we present more experimental results about the overlap between the top eigenvectors of the Hessian $H_f(\theta_t)$ and the second moment $M_t$ based on principal angles \eqref{eq:principal_angle}, and additional analysis based on Davis-Kahan perturbation theorem \cite{daka70}.  Finally we discuss potential relationships between the decomposition in Proposition \ref{prop:hessian} and the Fisher Information matrix.

\subsection{Proof of the Proposition \ref{prop:hessian}}

Recall Proposition~\ref{prop:hessian} in Section~\ref{sec:hessian}:
\hessdecomp*

\proof  By definition,
\begin{align*}
\nabla^2 f(\theta_t) & = \frac{1}{n} \sum_{i=1}^n \nabla^2 f(\theta_t; z_i) \\
& =  \frac{1}{n} \sum_{i=1}^n -\frac{\partial^2 \log p(\theta_t; z_i)}{\partial^2 \theta_t} \\
& = \frac{1}{n} \sum_{i=1}^n -\frac{\partial}{\partial \theta_t} \left( \frac{1}{p(\theta_t;z_i)} \frac{\partial p(\theta_t;z_i) }{\partial \theta_t} \right)~\\
& = \frac{1}{n} \sum_{i=1}^n  \left( \frac{1}{p(\theta_t;z_i)} \right)^2 \frac{\partial p(\theta_t;z_i) }{\partial \theta_t} \frac{\partial p(\theta_t;z_i) }{\partial \theta_t}^T-\frac{1}{n} \sum_{i=1}^n \frac{1}{p(\theta_t;z_i)} \frac{\partial^2 p(\theta_t;z_i) }{\partial \theta_t^2} \\
& = \frac{1}{n} \sum_{i=1}^n  \frac{\partial \log p(\theta_t;z_i) }{\partial \theta_t} \frac{\partial \log p(\theta_t;z_i) }{\partial \theta_t}^T-\frac{1}{n} \sum_{i=1}^n \frac{1}{p(\theta_t;z_i)} \frac{\partial^2 p(\theta_t;z_i) }{\partial \theta_t^2} \\
& = \Sigma_t + \mu_t \mu_t^T - H_p(\theta_t)\\
& = M_t - H_p(\theta_t)~.
\end{align*}
That completes the proof. \qed

\newpage

{\bf Full eigen-spectrum.} Here we present the full eigen-spectrum of $H_f(\theta_t)$, $M_t$, and the residual term $H_p(\theta_t)$ for networks trained on Gauss-10 dataset with large batches (50/100), and Gauss-2 dataset with both small (10/100) and large (50/100) batches. Figure \ref{fig:full_spectrum_k10_b50} to \ref{fig:full_spectrum_k2} show the results at the first, one intermediate, and the last iteration.

\begin{figure}[H]
\vspace{5mm}
\centering
  \subfigure[Gauss-10, batch size 50, 0\% Random labels.]{
  \includegraphics[width = 0.9\textwidth]{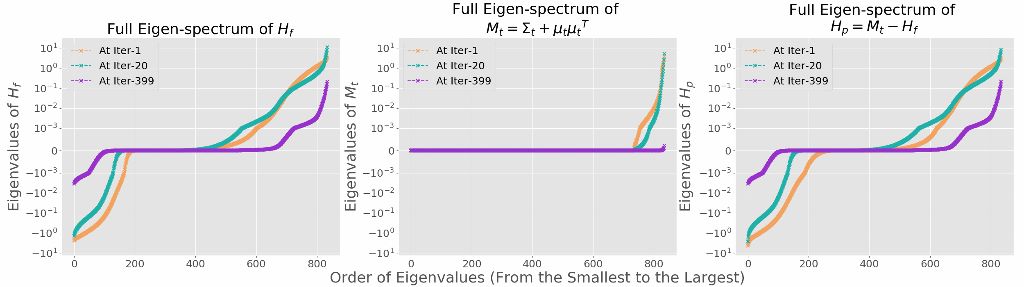}
  }
 \subfigure[Gauss-10, batch size 50, 15\% Random labels.]{
 \includegraphics[width = 0.9 \textwidth]{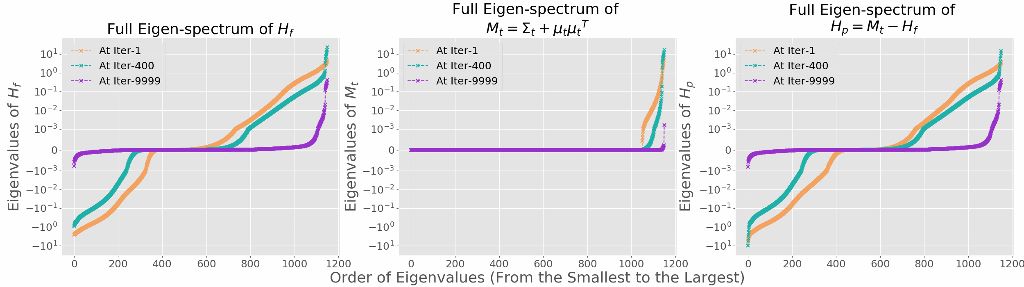}
 }
 \caption[]{Eigen-spectrum dynamics of $H_f(\theta_t)$ (left),  $M_t$ (middle), and $H_p(\theta_t)$ (right) for Gauss-10 dataset trained with large batches containing half of training samples (50/100). $H_p(\theta_t)$ remains significant even after SGD converges, and is close to $-H_f(\theta_t)$.}
 \label{fig:full_spectrum_k10_b50}
\end{figure}

\begin{figure}[p!]
\vspace{-5mm}
\centering
 \subfigure[Gauss-2, batch size: 5, 0\% random labels.]{
 \includegraphics[width = 0.9 \textwidth]{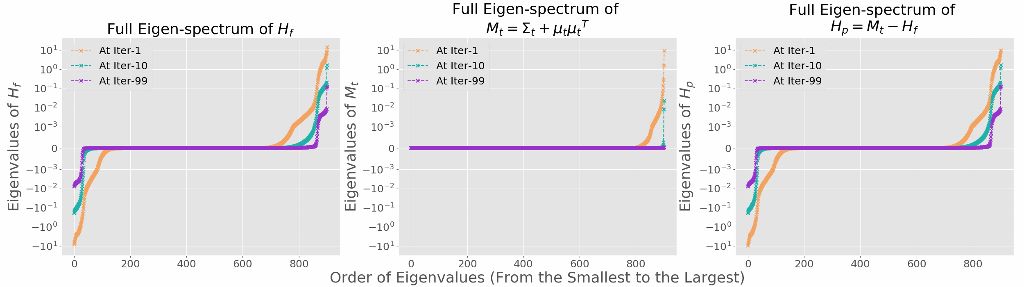}
 } 
 \subfigure[Gauss-2, batch size: 5, 20\% random labels.]{
 \includegraphics[width = 0.9 \textwidth]{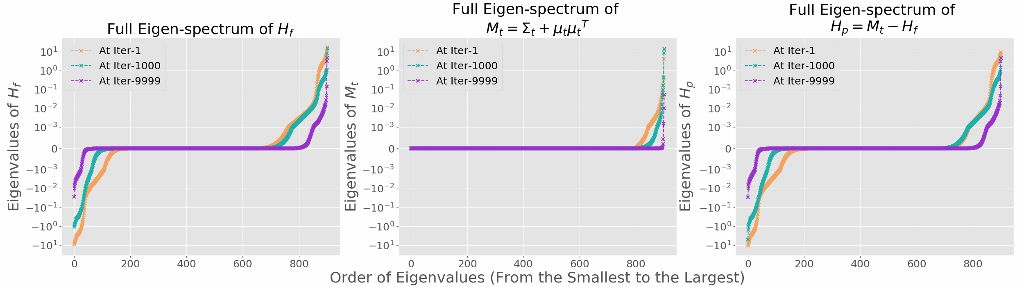}
 }
\subfigure[Gauss-2, batch size: 50, 0\% random labels.]{
 \includegraphics[width = 0.9 \textwidth]{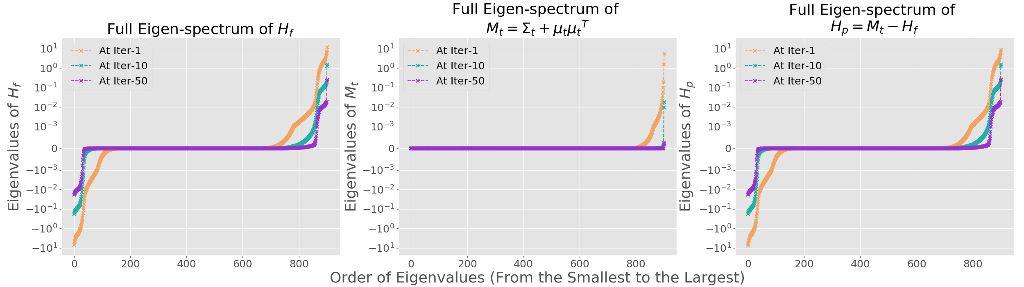}
 } 
 \subfigure[Gauss-2, batch size: 50, 20\% random labels.]{
 \includegraphics[width = 0.9 \textwidth]{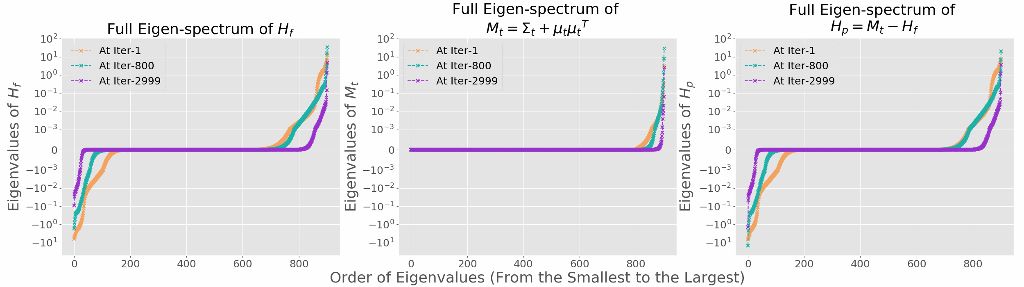}
 }
 \caption[]{Eigen-spectrum dynamics of $H_f(\theta_t)$ (left),  $M_t$ (middle), and $H_p(\theta_t)$ (right) for Gauss-2 dataset. (a) and (b): small batches containing one twentieth of training samples (5/100); (c) and (d): large batches containing half of training samples (50/100). $H_p(\theta_t)$ remains significant even after SGD converges, and is close to $-H_f(\theta_t)$.}
 \label{fig:full_spectrum_k2}
\vspace*{-3mm}
\end{figure}

\newpage

\subsection{Top subspaces: Hessian and Second Moment}
In Section~\ref{sec:B2-principal-angle}, we provide additional experimental results about the overlap between the top eigenvectors of the Hessian $H_f(\theta_t)$ and the second moment $M_t$ based on principal angles \eqref{eq:principal_angle}. Then in Section~\ref{sec:B2-davis-kahan}, we present supplemental analysis based on Davis-Kahan perturbation theorem \cite{daka70}.

\subsubsection{Principal Angles} \label{sec:B2-principal-angle}
We provide additional results for networks trained on Gauss-10 dataset with large batches (Figure~\ref{fig:principal_angle_k10_b50}), which contain half of the training samples (50/100), and Gauss-2 dataset with both small (one twentieth of the training samples) (Figure~\ref{fig:principal_angle_k2} (a) and (b)) and large batches (Figure~\ref{fig:principal_angle_k2} (c) and (d)).

\begin{figure}[H]
\centering
 \subfigure[Gauss-10, batch size 50, 0\% Random labels.]{
 \includegraphics[width = 0.97 \textwidth]{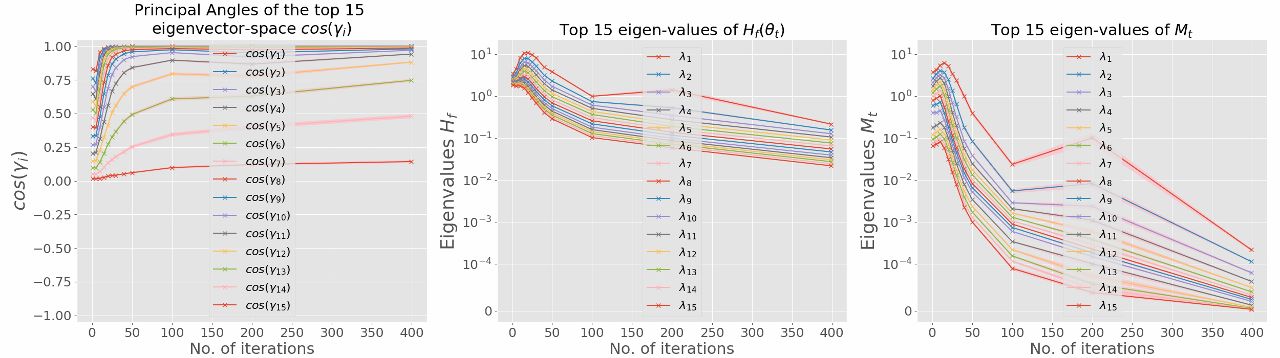}
 }
 \subfigure[Gauss-10, batch size 50, 15\% Random labels.]{
 \includegraphics[width = 0.97 \textwidth]{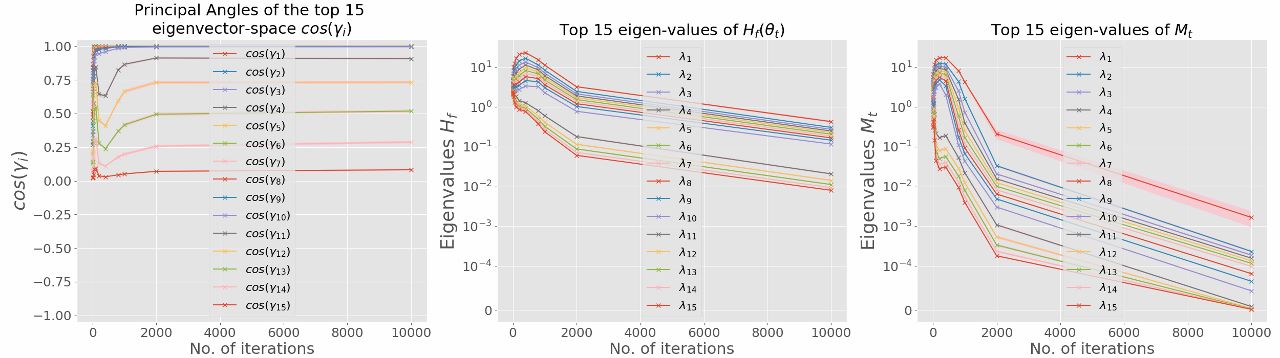}
 }
 \vspace{-4mm}
 \caption[]{Dynamics of principal angles of top 15 eigenvector space between $H_f(\theta_t)$ and $M_t$ (Left) and top 15 eigenvalues dynamics of $H_f(\theta_t)$ (Middle) and $M_t$ (Right) for Gauss-10 dataset trained with large batches containing half of the training samples (50/100). $\cos(\gamma_1)$ to $\cos(\gamma_{10}) \approxeq 1$, indicating the top 10 principal subspaces are well aligned.}
 \label{fig:principal_angle_k10_b50}
\end{figure}
\vfill

\afterpage{
\begin{figure}[t!]
\centering
 \subfigure[Gauss-2, batch size: 5, 0\% random labels.]{
 \includegraphics[width = 0.9 \textwidth]{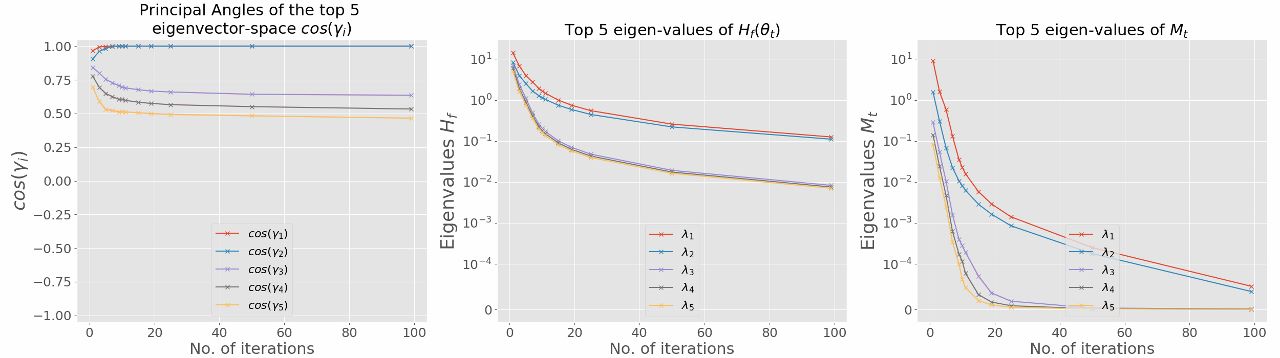}
 }  \vspace{-2mm}
 \subfigure[Gauss-2, batch size: 5, 20\% random labels.]{
 \includegraphics[width = 0.9 \textwidth]{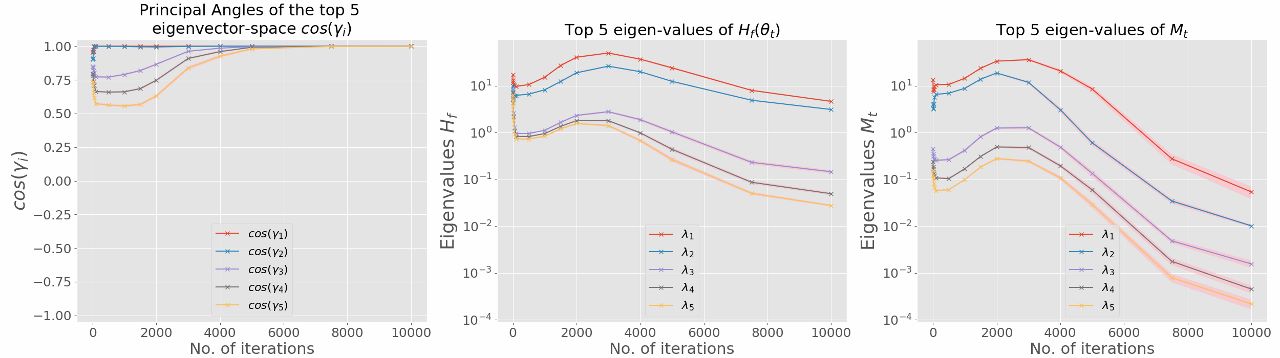}
 } \vspace{-2mm}
 \subfigure[Gauss-2, batch size: 50, 0\% random labels.]{
 \includegraphics[width = 0.9 \textwidth]{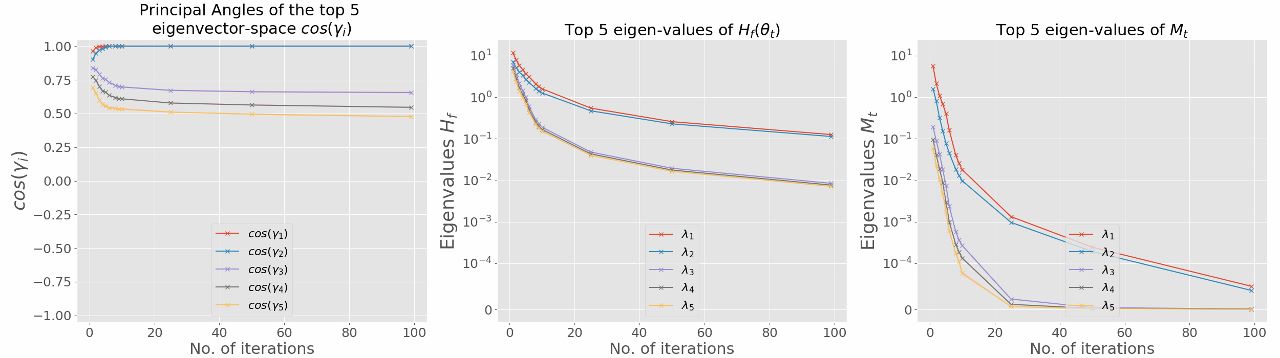}
 }  \vspace{-2mm}
 \subfigure[Gauss-2, batch size: 50, 20\% random labels.]{
 \includegraphics[width = 0.9 \textwidth]{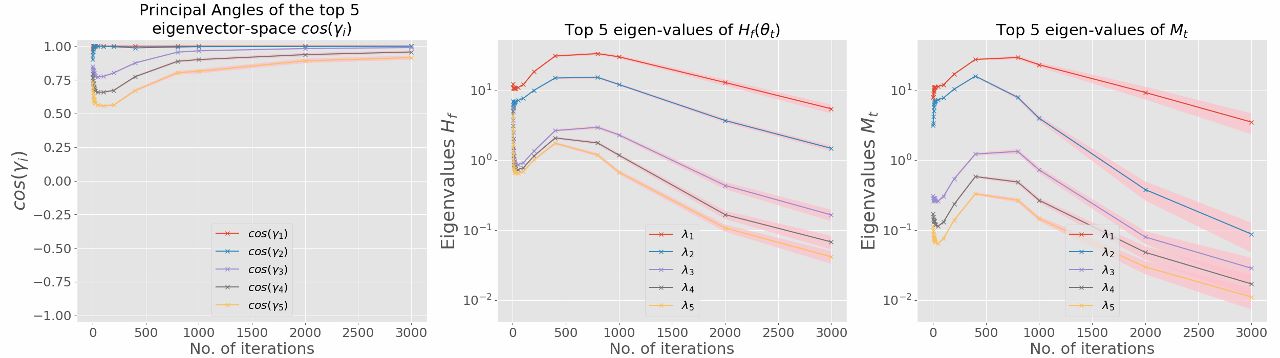}
 }
 \caption[]{Dynamics of principal angles of top 5 eigenvector space between $H_f(\theta_t)$ and $M_t$ (Left) and top 5 eigenvalues dynamics of $H_f(\theta_t)$ (Middle) and $M_t$ (Right) for Gauss-2 dataset. (a) and (b): small batches containing one twentieth of training samples (5/100); (c) and (d): large batches containing half of training samples (50/100). $\cos(\gamma_1)$ to $\cos(\gamma_{2}) \approxeq 1$, indicating the top 2 principal subspaces are always well aligned.}
 \label{fig:principal_angle_k2}
\end{figure}
\clearpage
}

\newpage

\begin{figure}[t!]
\vspace{-5mm}
\centering
\subfigure[Gauss-10, batch size 5, 0\% random labels.]{
\includegraphics[width = 0.48 \textwidth]{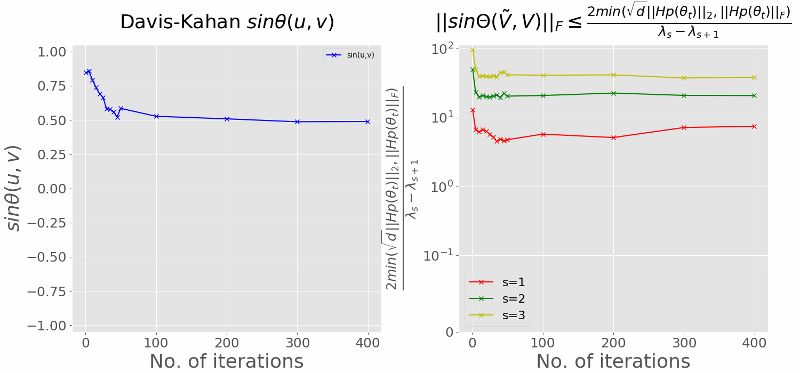}}\vspace{-1mm}
 \subfigure[Gauss-10, batch size 5, 15\% random labels.]{
 \includegraphics[width = 0.48 \textwidth]{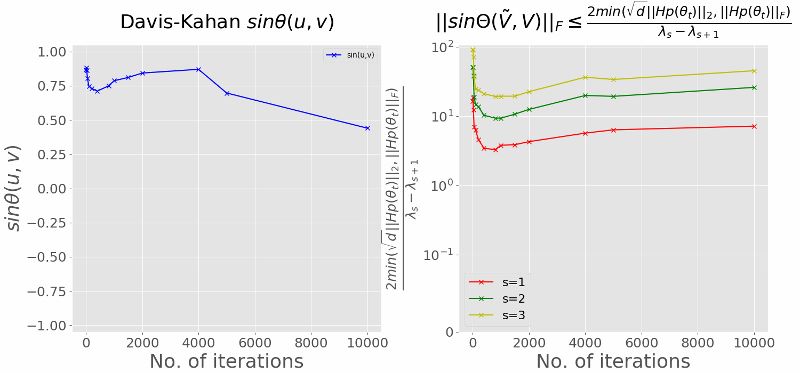}
 }\vspace{-1mm}
 \subfigure[Gauss-10, batch size 50, 0\% random labels.]{
 \includegraphics[width = 0.48 \textwidth]{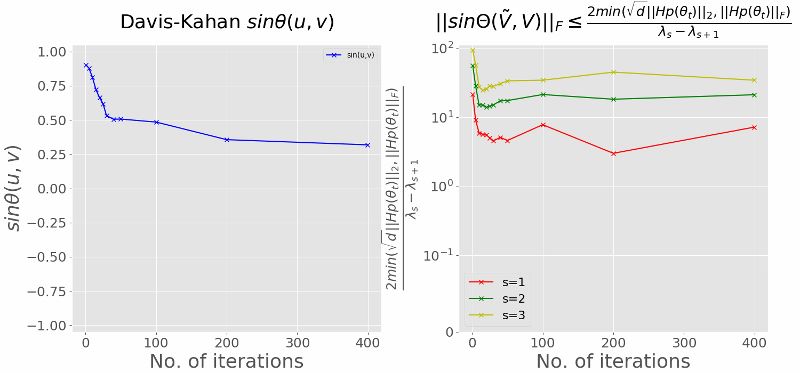}
 }\vspace{-1mm}
 \subfigure[Gauss-10, batch size 50, 15\% random labels.]{
 \includegraphics[width = 0.48 \textwidth]{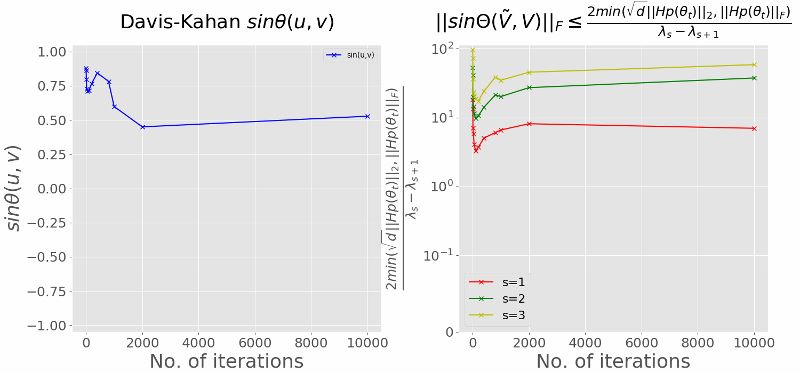}
 }
\caption[]{Davis-Kahan $\sin \theta(u, v) = \sqrt{1 - (u^Tv)^2}$ for Gauss-10 dataset. (a) and (b): small batches containing one twentieth of the training samples (5/100); (c) and (d): large batches containing half of the training samples (50/100). $\sin \theta(u, v) > 0.5$ serves as an extra evidence to suggest the primary subspaces spanned by the top eigen-vectors of $M_t$ and $H_f(\theta_t)$ significantly overlaps as training proceeds. However, the computed upper bounds are too loose to be useful.}
\label{fig:D-K-k10}
\end{figure}

\begin{figure}[t!]
\vspace{-4mm}
\centering
\subfigure[Gauss-2, batch size 5, 0\% random labels.]{
 \includegraphics[width = 0.48 \textwidth]{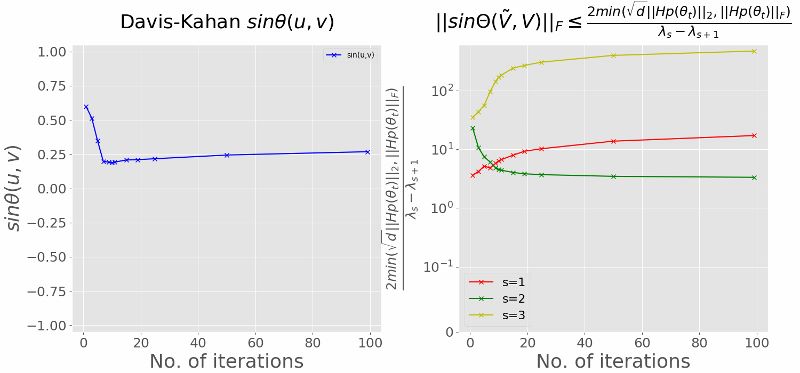}
 }
 \subfigure[Gauss-2, batch size 5, 20\% random labels.]{
 \includegraphics[width = 0.48 \textwidth]{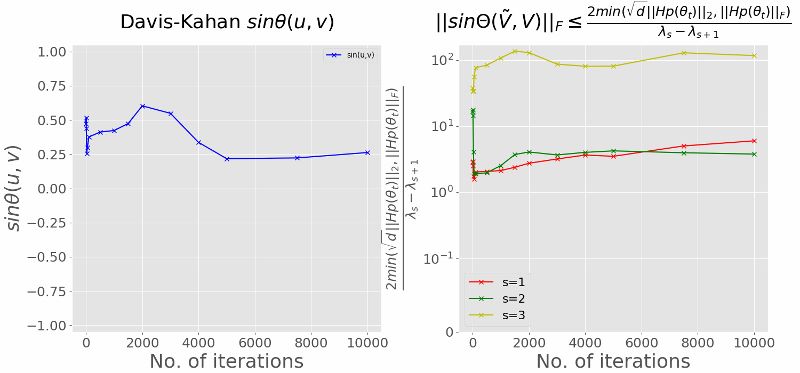}
 }
\subfigure[Gauss-2, batch size 50, 0\% random labels.]{
 \includegraphics[width = 0.48\textwidth]{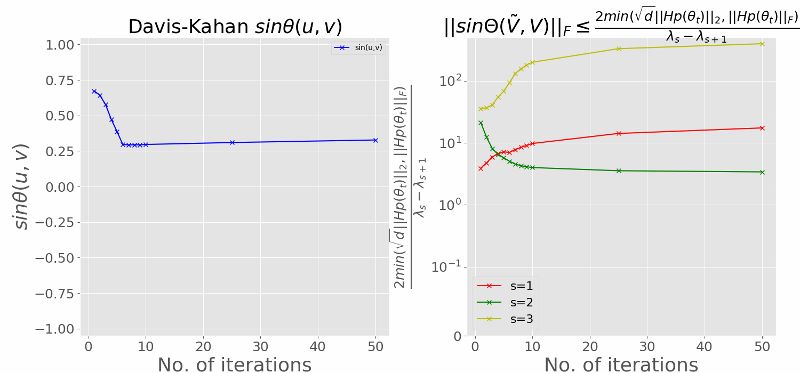}
 }
 \subfigure[Gauss-2, batch size 50, 20\% random labels.]{
 \includegraphics[width = 0.48\textwidth]{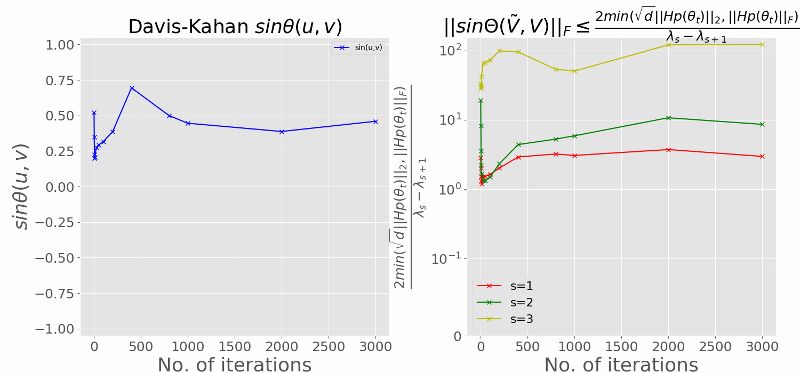}
 }
\caption[]{Davis-Kahan $\sin \theta(u, v) = \sqrt{1 - (u^Tv)^2}$ for Gauss-2 dataset. (a) and (b): small batches containing one twentieth of the training samples (5/100); (c) and (d): large batches containing half of the training samples (50/100). $\sin \theta(u, v) > 0.5$ serves as an extra evidence to suggest the primary subspaces spanned by the top eigen-vectors of $M_t$ and $H_f(\theta_t)$ significantly overlaps as training proceeds. However, the computed upper bounds are too loose to be useful.}
 \label{fig:D-K}
\end{figure}

\subsubsection{Davis-Kahan $\sin\theta$ theorem} \label{sec:B2-davis-kahan}

In this part, we introduce some matrix perturbation theories to bound the angles between top eigenvectors of $H_f$ and $M_t = \Sigma_t + \mu_t \mu_t^T$. 

Let
\beq
Q = P + \Delta~,
\eeq
where $\Delta$ is a symmetric matrix. We will refer to $\Delta$ as the perturbation. In the setting of deep-nets, let
\beq
P = H_f(\theta_t) \qquad Q = M_t \qquad \Delta = H_p(\theta_t)~.
\eeq

Note that we are treating $P$ as the true matrix, and $Q$ as the perturbation, but their roles can be reversed by using $-\Delta$ as the perturbation. Further, note that in deep-nets $(P,Q,\Delta)$ are dynamic, and we can study how the similarity between the eigen-spaces evolve over time.

Let $\lambda_1 \geq \ldots \geq \lambda_p$ be the eigen-values of $P$ (true matrix, $H_f(\theta_t)$) with corresponding eigen-vectors $\v_1,\ldots,\v_p$; further, let $\tilde{\lambda}_1 \geq \ldots \geq \tilde{\lambda}_p$ be the eigen-values of $Q$ (perturbed matrix, $H_p(\theta_t)$) with corresponding eigen-vectors $\tilde \v_1, \ldots, \tilde{\v}_p$. Let $\theta_r$ be the angle between $\v_r$ and $\tilde \v_r$. Let
\beq
\delta_r = \min_{s \neq r} ~| \tilde \lambda_s - \lambda_r |~.
\eeq
Then, the Davis-Kahan theorem \cite{daka70} says:
\beq
\sin \theta_r \leq \frac{ \| \Delta \|_2 }{\delta_r}~.
\eeq
Instead of a result per principal angle, we can use D-K on the subspace. We start by considering the subspace version of D-K. Fix $r, s$ such that $1 \leq r \leq s \leq p$ and let $d \triangleq s - r + 1$.
For the analysis, $d$ will serve as the dimensionality of the subspace of interest. Further, assume that $\min(\lambda_{r-1} - \lambda_r, \lambda_s - \lambda_{s+1}) > 0$. Let $V = (\v_r~ \cdots~ \v_s) \in \R^{p \times d}$ and $\tilde V = (\tilde \v_r ~ \cdots ~ \v_s) \in \R^{p \times d}$. Then, we have
\beq
\| \sin \Theta (\tilde V, V) \|_F \leq
\frac{ 2 \min (\sqrt{d} \| \Delta \|_2, \| \Delta \|_F )}{ \min(\lambda_{r-1} - \lambda_r, \lambda_s - \lambda_{s+1})}~.
\eeq
{\bf Top eigen-space:} Choose $r = 1$, so that for $s < p$, we have
\beq
\min(\lambda_{r-1} - \lambda_r, \lambda_s - \lambda_{s+1}) = \lambda_s - \lambda_{s+1}~.
\eeq
We choose $s$ such that $\lambda_s \geq \alpha$ and $\lambda_{s+1} < \alpha$; in practice, $\alpha$ can be chosen such that eigen-gap $\lambda_s - \lambda_{s+1}$ is significant. Then, we have
\beq
\| \sin \Theta (\tilde V, V) \|_F \leq \frac{ 2 \min( \sqrt{d} \| H_p(\theta_t) \|_2, \| H_p(\theta_t) \|_F ) }{\lambda_s - \lambda_{s+1}}~.
\label{eq:D-K}
\eeq
The above bound can be computed numerically, and we show the dynamics of the bound for $s=1,2,3$ in Figures~\ref{fig:D-K-k10} and \ref{fig:D-K}. The values of the $\sin \theta(u, v)$ for both true and random labeled datasets stay above 0.5 most of the time. In other words, the angle between the eigenvectors corresponding to the largest eigenvalues of $M_t$ and $H_f(\theta_t)$ stays below $30^{\degree}$, which serves as supplemental evidence to support our argument that there is a good amount of overlap between the two subspaces spanned by the top eigenvectors of $M_t$ and $H_f(\theta_t)$. However, the computed upper bounds are significantly above 1, making them not so helpful.

\subsection{Relationship with the Fisher Information Matrix} 
\label{sec:app:fisher}
The expected value of the Hessian of the log-loss goes by another name in the literature: the Fisher Information matrix \cite{leca98}.
We share brief remarks on how the above decomposition in Proposition~\ref{prop:hessian} relates to the Fisher Information matrix but does not quite explain the overlap of the primary subspaces of the Hessian $H_f(\theta_t)$ of the log-loss and the second moment matrix $M_t$. Let us denote $\theta^*$ the true parameter, recall that the Fisher Information matrix \cite{leca98,rao45} is defined as:
\beq
I(\theta^*) \triangleq E_Z \left[ \nabla \log p(\theta^*; Z) \nabla \log p(\theta^*;Z)^T \right]~,
\eeq
where $\nabla \log p(\theta; Z)$ is often referred to as the score function. In the current context, the result of interest is the fact that under suitable regularity conditions \cite{coth06,amna00} the Fisher Information matrix can be written in terms of the expectation of the Hessian of the log-loss, i.e.,
\beq
I(\theta^*) =  -E_Z \left[ \nabla^2 \log p(\theta^*; Z)  \right]~,
\eeq
Starting with $f(\theta; z) = - \log p(\theta; z)$, a direct calculation by chain rule shows:
\begin{align}
E_Z \left[ \nabla^2 f(\theta; Z) \right] & = E_Z \left[ \nabla f(\theta; Z) \nabla f(\theta; Z)^T \right] \nonumber
- E_Z \left[ \frac{1}{p(\theta;Z)} \nabla^2 p(\theta; Z) \right]~\\
\Rightarrow \quad \bar{H}_f(\theta) & = \bar{M_t} - \bar{H}_p(\theta)~, 
\end{align}
where $\bar{\cdot}$ denotes the population expectation corresponding to sample expectations in Proposition \ref{prop:hessian}. In the context of Fisher information matrix, we have 
\begin{align}
\bar{\mu} & = - E_Z[ \nabla \log p(\theta^*; Z)] = 0~, \label{eq:fish1-2} \\
\bar{H}_p(\theta^*) & = E_Z \left[ \frac{1}{p(\theta^*;Z)} \nabla^2 p(\theta^*; Z) \right] = 0~, \label{eq:fish2-2}
\end{align}

where \eqref{eq:fish1-2} follows by assuming an unbiased estimator in the context of statistical estimation \cite{cabe01,wass10}, and \eqref{eq:fish2-2} follows the so-called regularity conditions \cite{coth06,amna00} which allows switching the integral and second derivatives so that
\begin{align*}
E_Z \left[ \frac{1}{p(\theta^*;Z)} \nabla^2 p(\theta^*; Z) \right] 
 & 
  =\int_z \nabla^2 p(\theta^*; Z) dz 
  = \nabla^2 \int_z p(\theta^*; Z) = \nabla^2 1 = 0~.    
\end{align*}
Then, we have 
\begin{align*}
E_Z \left[ \nabla^2 f(\theta^*; Z) \right] & = E_Z \left[ \nabla f(\theta^*; Z) \nabla f(\theta^*; Z)^T \right] \\
\Rightarrow  \quad -E_Z \left[ \nabla^2 \log p(\theta^*; Z)  \right] & = E_Z \left[ \nabla \log p(\theta^*; Z) \nabla \log p(\theta^*;Z)^T \right]~, \\
\Rightarrow  \qquad \bar{H}_f(\theta^*) & = \bar{M}_t~,
\end{align*}
which are both equivalent definitions of the Fisher Information matrix.

\subsection{Examples}
In this subsection, we give detailed derivations of Table \ref{tab:ex:fisher}.
\subsubsection{Least Squares}\label{subsec:ex1}
Let $x_i \in \R^p$ and $y_i \in \R$ for $i=1, 2, \ldots, n$. In this section we focus on the theoretical analysis of SGD for least squares. We assume the probability model is given by:
\[ p (y_i |x_i, \theta ) = \frac{1}{\sqrt{2 \pi} \sigma} e^{- \frac{(x_i^T
		\theta - y_i)^2}{2 \sigma^2}} . \]
Given a sample $z_i$, the stochastic loss function is:
\[ f (\theta ; z_i) = -\log p(y_i|x_i, \theta) = \frac{1}{2 \sigma^2} (x_i^T \theta - y_i)^2 + C, \]
where $C > 0$ is a constant.
		
Let us denote $X = [x_1, x_2, \ldots, x_n]^T$ and $y = [y_1, y_2, \ldots, y_n]^T$, the empirical loss of least squares is given by
\beq
f(\theta) = \frac{1}{2n\sigma^2} \sum_{i=1}^{n} (x_i^T\theta - y_i  )^2 + C = \frac{1}{2n \sigma^2} \|X \theta - y\|_2^2 + C.
\eeq

The gradient of the empirical loss is
\beq
\mu_t = \nabla f(\theta_t) = \frac{1}{n\sigma^2} \sum_{i=1}^{n} x_i (x_i^T\theta_t - y_i  ) = \frac{1}{n \sigma^2} X^T(X\theta - y).
\eeq
The second moment of the stochastic gradient is given by
\[
	M_t = \frac{1}{n} \sum_{i=1}^n \nabla f(\theta_t; z_i)\nabla f(\theta_t; z_i)^T = \frac{1}{n \sigma^4} \sum_{i=1}^n (x_i^T \theta_t - y_i)^2 x_i x_i^T
\]
The Hessian of the empirical loss function is given by
\[
    H_f = \nabla^2 f(\theta) = \frac{1}{n\sigma^2} \sum_{i=1}^n x_i x_i^T = \frac{1}{n\sigma^2} X^T X.
\]
And
\[
    H_p(\theta_t) = M_t - H_f =  \frac{1}{n \sigma^4} \sum_{i=1}^n (x_i^T \theta_t - y_i)^2 x_i x_i^T - \frac{1}{n\sigma^4} \sum_{i=1}^n \sigma^2x_i x_i^T = \frac{1}{n \sigma^4} \sum_{i=1}^{n} [(x_i^T \theta_t - y_i)^2 - \sigma^2] x_i x_i^T.
\]
In $n < p$ case, the optimal solution $\hat{\theta}$ satisfies $X\hat{\theta} = y$. As $\theta_t$ approaches $\hat{\theta}$, we have $H_p(\theta_t)$ approaches
\beq
	H_p(\hat{\theta}) = - \frac{1}{n \sigma^2} X^T X,
\eeq
and the second moment $M_t$ approaches a zero matrix.

\subsubsection{Logistic Regression}\label{subsec:ex2}
Let $x_i \in \R^p$ and $y_i \in \{0, 1\}$ for $i=1, 2, \ldots, n$. The probability model of logistic regression is given by
\[
    p(y_i | x_i, \theta) = p_{\theta}(x_i)^{y_i} (1 - p_{\theta}(x_i))^{1 - y_i}
\]
where $p_{\theta}(x_i) = (1 + \exp{x_i^T \theta})^{-1}$.
Given a sample $z_i$, the stochastic loss function is:
\[ f (\theta ; z_i) = -\log p(y_i|x_i, \theta) = y_i\log (1 + \exp(- x_i^T \theta)) + (1 - y_i) \log (1 + \exp( x_i^T \theta)), \]
The empirical loss of logistic regression is given by
\beq
f(\theta)  = - \frac{1}{n}\sum_{i = 1}^n \log p(y_i| x_i, \theta)
 = \frac{1}{n}\sum_{i=1}^n [y_i\log (1 + \exp(- x_i^T \theta)) + (1 - y_i) \log (1 + \exp( x_i^T \theta)) ].
\eeq
Let us denote $\sigma_{\theta_t}(x_i) = p_{\theta_t}(x_i)(1 - p_{\theta_t}(x_i))$, the gradient $\mu_t$ and Hessian $H_f(\theta_t)$ of empirical loss $f(\theta_t)$ are given by
\[
	\mu_t = \frac{1}{n} \sum_{i=1}^n (p_{\theta_t}(x_i) - y_i)x_i,
\]
and
\[
	H_f(\theta_t) = \frac{1}{n} \sum_{i=1}^{n} p_{\theta_t}(x_i) (1 - p_{\theta_t}(x_i)) x_i x_i^T = \frac{1}{n} \sum_{i=1}^{n} \sigma_{\theta_t}(x_i) x_i x_i^T.
\]

The second moment of the stochastic gradient is given by
\beq
	M_t = \frac{1}{n} \sum_{i=1}^{n} (p_{\theta_t}(x_i) - y_i)^2 x_i x_i^T.
\eeq
And
\[
	\hat{H}_p(\theta_t) = M_t - H_f = \frac{1}{n} \sum_{i=1}^{n} [(p_{\theta_t}(x_i) - y_i)^2 - \sigma_{\theta_t}(x_i)] x_i x_i^T.
\]

\subsection{Computation of the Hessian}
In the following Section ~\ref{sec:B4-1} and \ref{sec:B4-2}, we share some brief discussions regarding the computation aspect of Hessian used in our analysis.

\subsubsection{Relu Network Trained on Gauss-$k$} \label{sec:B4-1}

In our experiments, the exact Hessian and the Second Moment of Relu-networks trained on Gauss-$k$ datasets are directly computed using Autograd \cite{mada15}. Then we evaluate the full eigen-spectrum of $H_f(\theta_t)$ and $M_t$ using numpy. 

Consider a general loss function $l(y^T \hat{y}(\theta))$, where $y \in \{0, 1\}^K$ is the true label and $\hat{y}(\theta)$ is the prediction of our learning algorithm. Our prediction in this paper is given by

\beq
    v^T\sigma(W_{2}\sigma(W_1 x)),~ \theta = \{v, W_2, W_1\},
\eeq
where ${W}_1 \in \R^{m_1 \times m}$, $W_2 \in \R^{m_2 \times m_1}$ are weight matrices. 
Function $\sigma$ is the activation function applied element-wisely and $v \in \R^{m_D \times K}$ denotes the weights of the output layer. Autograd is able to compute $\nabla l(y^T \hat{y}(\theta))$ and $\nabla^2 l(y^T \hat{y}(\theta))$ using calculus rules \cite{rudi76}. Let us denote 
\[
    a_d = W_d \sigma (W_{d-1} \sigma(W_{d-2} \ldots \sigma(W_1 x)))
\]
be the value of the $d$-th layer before activation.
When $\sigma$ is the ReLu function, Autograd computes the first order and second order derivative of $\sigma$ denoted as $\sigma'$ and $\sigma''$ by the following rule:
\[
    \sigma'(a_d)_j = \left\{ \begin{array}{cc}
          1 & a_{d,j} > 0 \\
          \frac{1}{2} & a_{d,j} = 0 \\
          0 & a_{d, j} < 0
    \end{array}\right.
\]
and $\sigma''(a_d) = 0$. 

\subsubsection{Relu Network Trained on MNIST and CIFAR-10} \label{sec:B4-2}

For Relu-networks trained on MNIST and CIFAR-10 dataset, the number of parameters exceeds 100,000. Thus the direct computation of $H_f(\theta_t)$ and $M_t$ is inapplicable. We compute such high-dimensional eigen-spectra based on the Hessian-vector products \cite{Pear94} and the Lanczos algorithm \cite{Lanc50,papy18}.

\section{SGD Dynamics: Additional Experimental Results}
Additional figures for the analysis performed in Section \ref{sec:dynamics} are presented below.

\subsection{ SGD Dynamics} 
Additional SGD dynamics for networks trained on Gauss-10 dataset with large batches, which contain half of the training samples (50/100), and Gauss-2 dataset with both small (one twentieth of the training samples) and large batches are presented below.

\begin{figure}[H]
\centering
 \subfigure[Gauss-10, Batch size 50, 0\% Random labels.]{
 \includegraphics[width = 0.95 \textwidth]{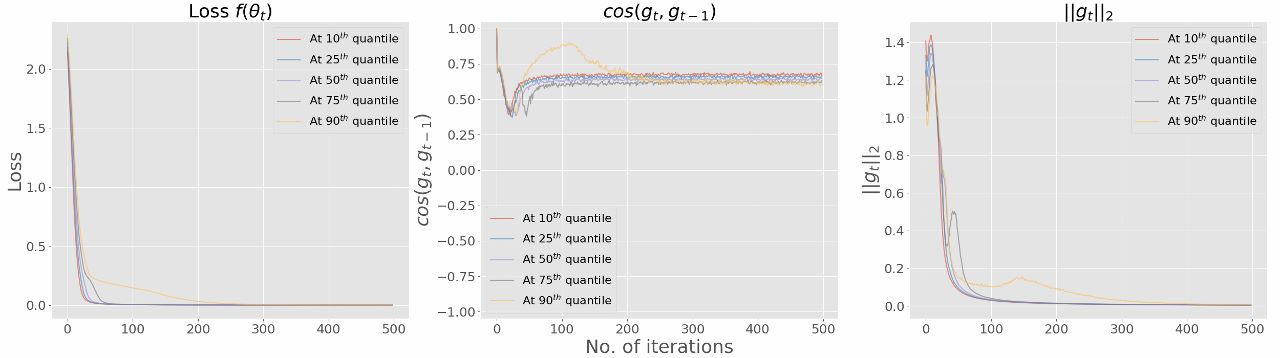}
 }
 \subfigure[Gauss-10, Batch size 50, 15\% Random labels.]{
 \includegraphics[width = 0.95 \textwidth]{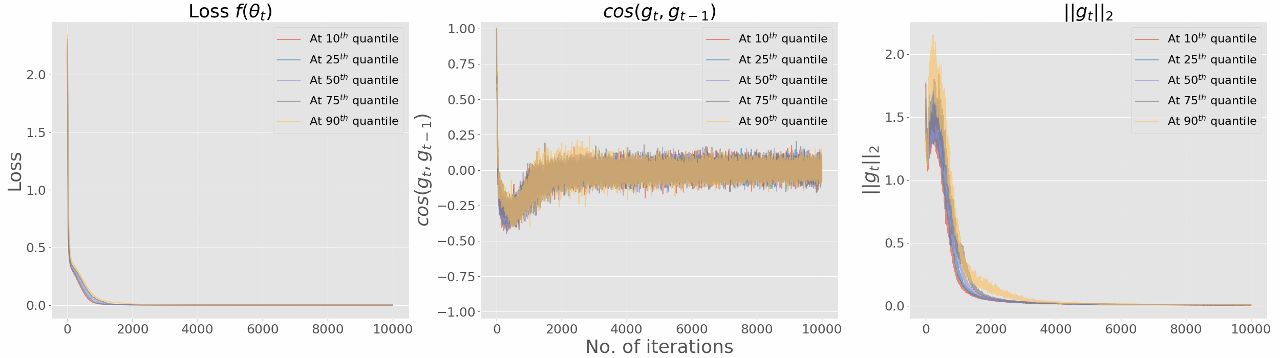}
 }
 \caption[]{Dynamics of the loss $f(\theta_{t})$ (left), the angle of two successive SGs $\cos(g_t,g_{t-1})$ (middle), and the norm of the SGs $\|g_t\|_2$ (right) at different quantiles of $f(\theta_{t})$ for Gauss-10 dataset trained with large batches containing half of training samples (50/100).}
 \label{fig:loss_cos_g2_k10_b50}
\end{figure}

\begin{figure}[p!]
\vspace{-5mm}
\centering
 \subfigure[Gauss-10, batch size 5, 0\% Random labels]{
 \includegraphics[width = 0.9 \textwidth]{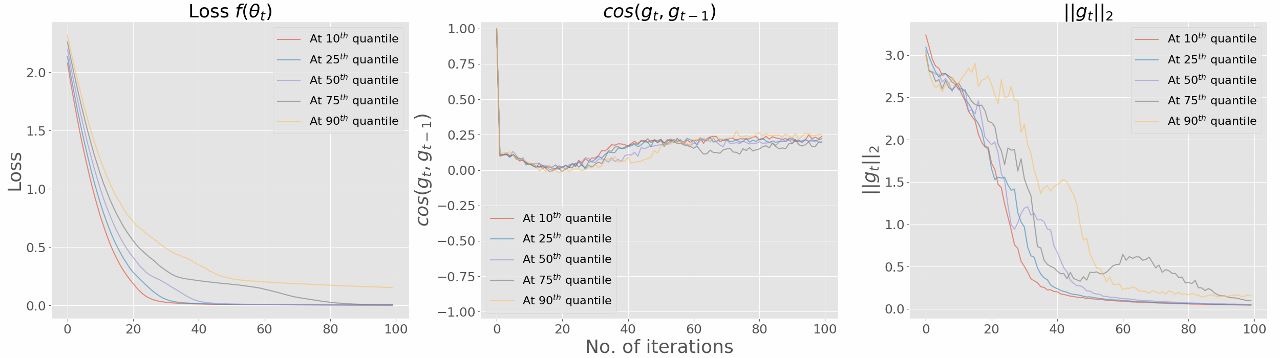}
 } 
 \subfigure[Gauss-10, batch size 5, 15\% random labels]{
 \includegraphics[width = 0.9 \textwidth]{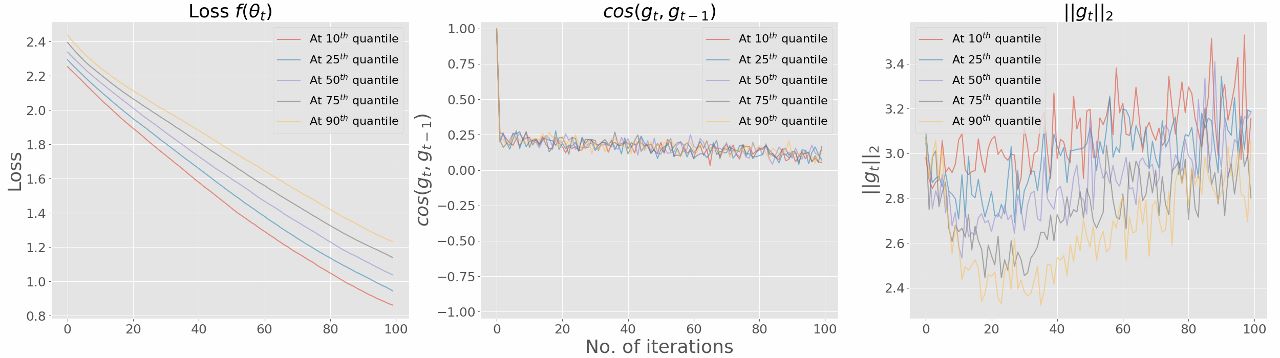}
 }
\subfigure[Gauss-10, Batch size 50, 0\% Random labels.]{
 \includegraphics[width = 0.9 \textwidth]{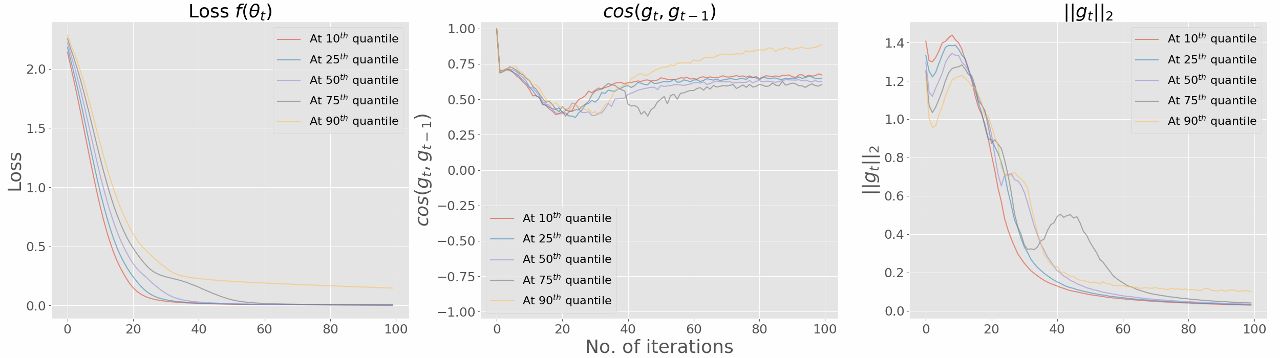}
 } 
\subfigure[Gauss-10, Batch size 50, 15\% Random labels.]{
\includegraphics[width = 0.9 \textwidth]{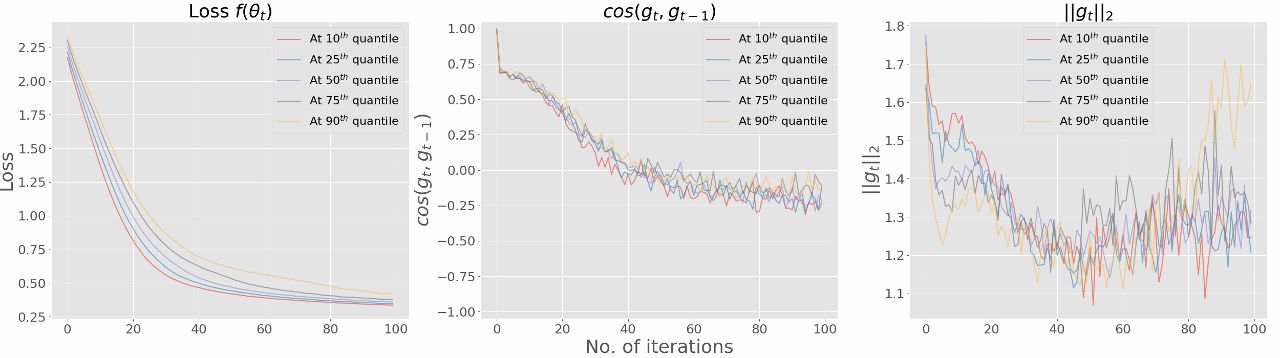}
 }\vspace{-3mm}
\caption[]{Dynamics of the first 100 iterations for the loss $f(\theta_{t})$ (left), the angle of two successive SGs $\cos(g_t,g_{t-1})$ (middle), and the norm of the SGs $\|g_t\|_2$ (right) at different quantiles of $f(\theta_{t})$ for Gauss-10 dataset. (a) and (b): small batches containing one twentieth of training samples (5/100); (c) and (d): large batches containing half of training samples (50/100). The gradient norm always has a decreasing phase at the first few iterations.}
 \label{fig:loss_cos_g2_first100}
\end{figure}

\begin{figure}[p!]
\vspace{-5mm}
\centering
 \subfigure[Gauss-2, batch size: 5, 0\% random labels.]{
 \includegraphics[width = 0.9 \textwidth]{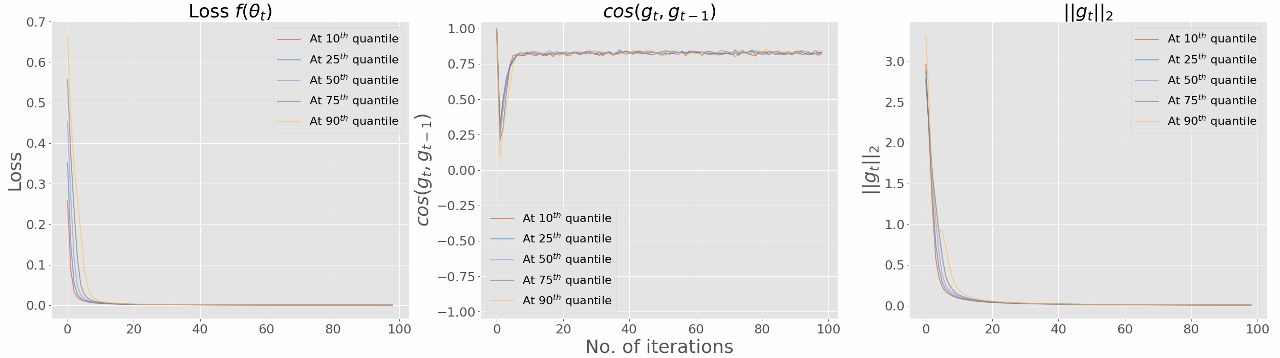}
 } 
 \subfigure[Gauss-2, batch size: 5, 20\% random labels.]{
 \includegraphics[width = 0.9 \textwidth]{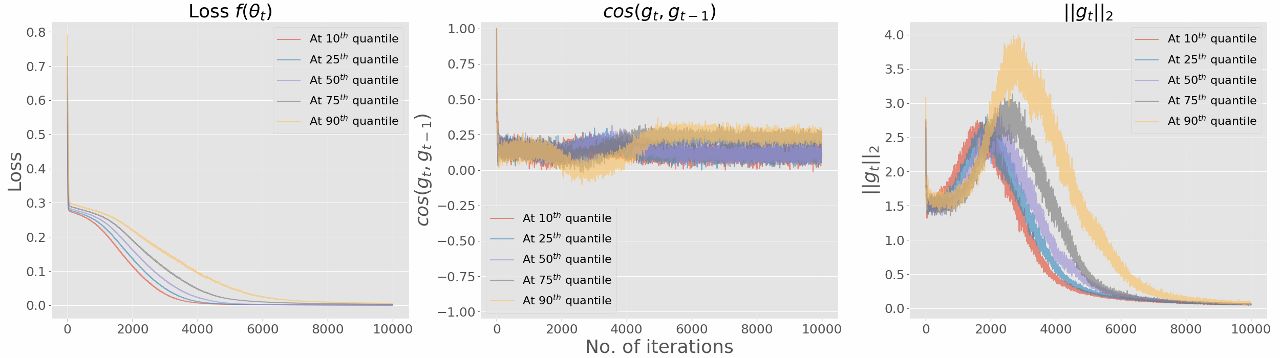}
 } 
  \subfigure[Gauss-2, batch size: 50, 0\% random labels.]{
 \includegraphics[width = 0.9 \textwidth]{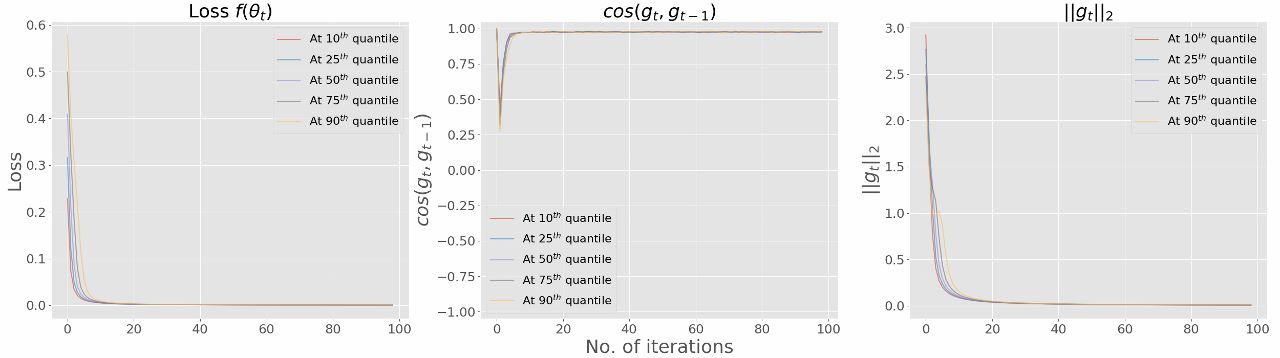}
 } 
 \subfigure[Gauss-2, batch size:50, 20\% random labels..]{
 \includegraphics[width = 0.9 \textwidth]{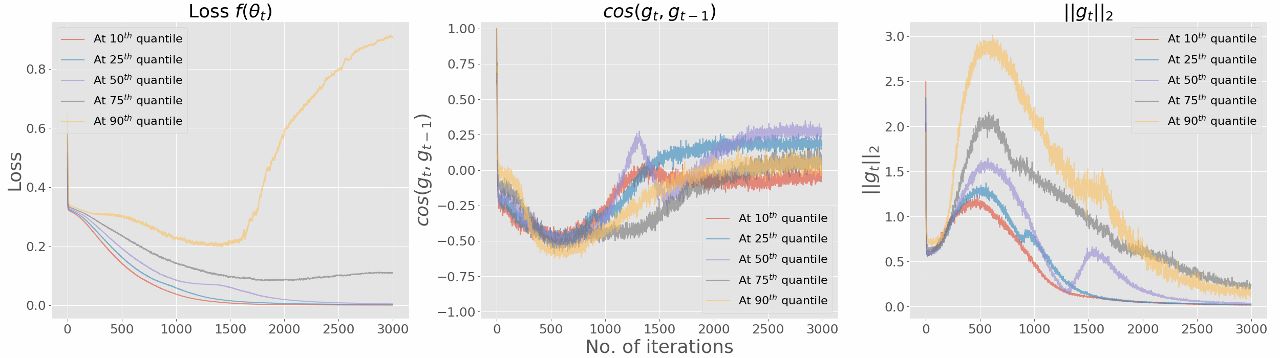}
 }\vspace{-3mm}
 \caption[]{Dynamics of the loss $f(\theta_{t})$ (left), the angle of two successive SGs $\cos(g_t,g_{t-1})$ (middle), and the norm of the SGs $\|g_t\|_2$ (right) at different quantiles of $f(\theta_{t})$ for Gauss-2 dataset. (a) and (b): small batches containing one twentieth of training samples (5/100); (c) and (d): large batches containing half of training samples (50/100).}
 \label{fig:loss_cos_g2_k2}
\end{figure}

\begin{figure}[p!]
\vspace{-5mm}
\centering
\subfigure[MNIST, Network with 3 hidden layers.]{ \includegraphics[width = 0.9 \textwidth]{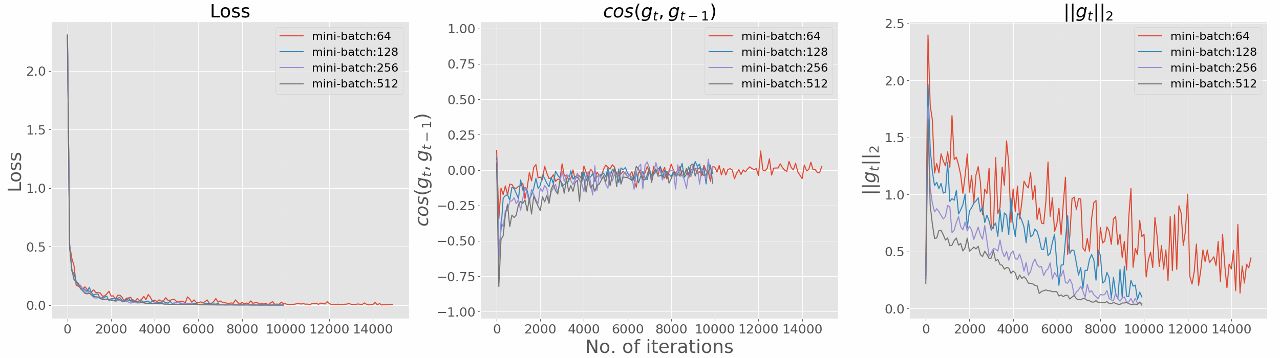}
} 
\subfigure[MNIST, Network with 4 hidden layers.]{
\includegraphics[width = 0.9\textwidth]{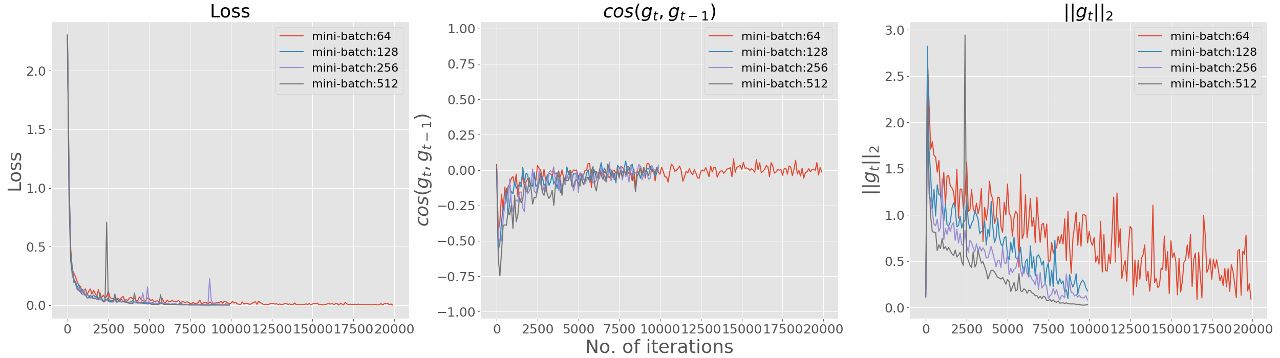}
 }
\subfigure[MNIST, Network with 5 hidden layers.]{
 \includegraphics[width = 0.9 \textwidth]{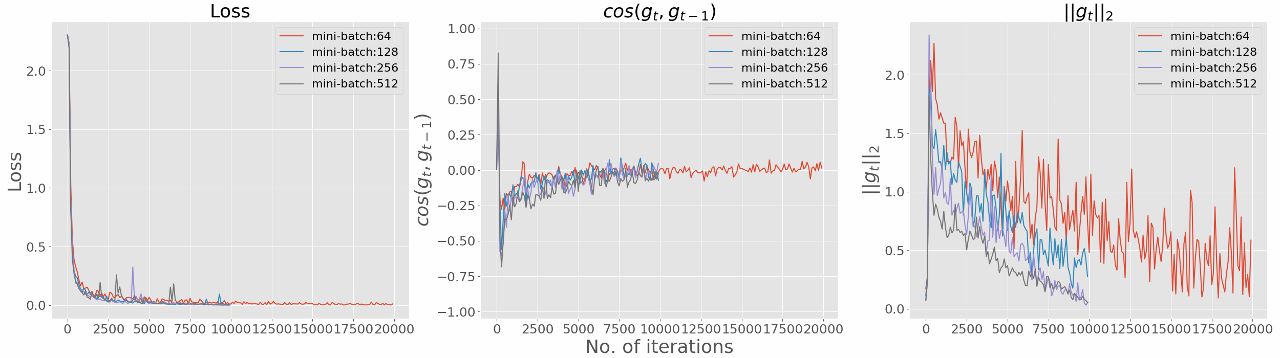}
 } 
 \subfigure[MNIST, Network with 6 hidden layers.]{
 \includegraphics[width = 0.9 \textwidth]{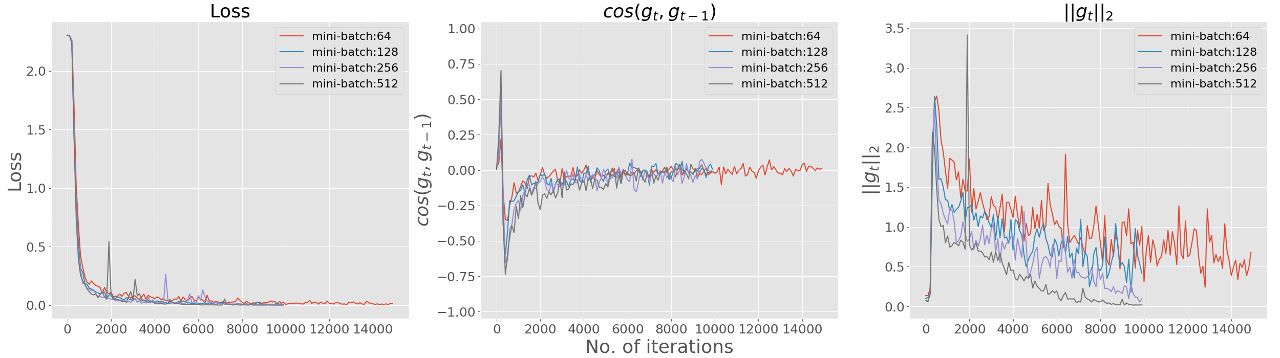}
 } \vspace{-2mm}
\caption[]{Dynamics of the loss $f(\theta_{t})$ (left), the angle of two successive SGs $\cos(g_t,g_{t-1})$ (middle), and the norm of the SGs $\|g_t\|_2$ (right) for the MNIST trained on networks with 3 to 6 hidden layers and various batch sizes. Networks have more than 100,000 parameters. SGD dynamics behave similarly to our Gauss-$k$ datasets with random labels.}
\label{fig:loss_cos_g2_mnist}
\end{figure}

\begin{figure}[p!]
\centering
 \subfigure[CIFAR-10, Network with 3 hidden layers.]{
 \includegraphics[width = 0.9 \textwidth]{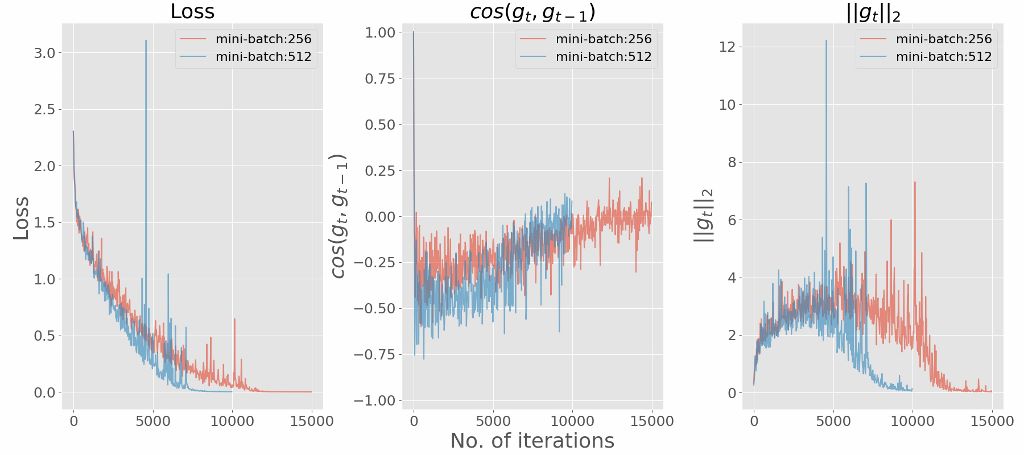}
 } 
\subfigure[CIFAR-10, Network with 6 hidden layers.]{
 \includegraphics[width = 0.9 \textwidth]{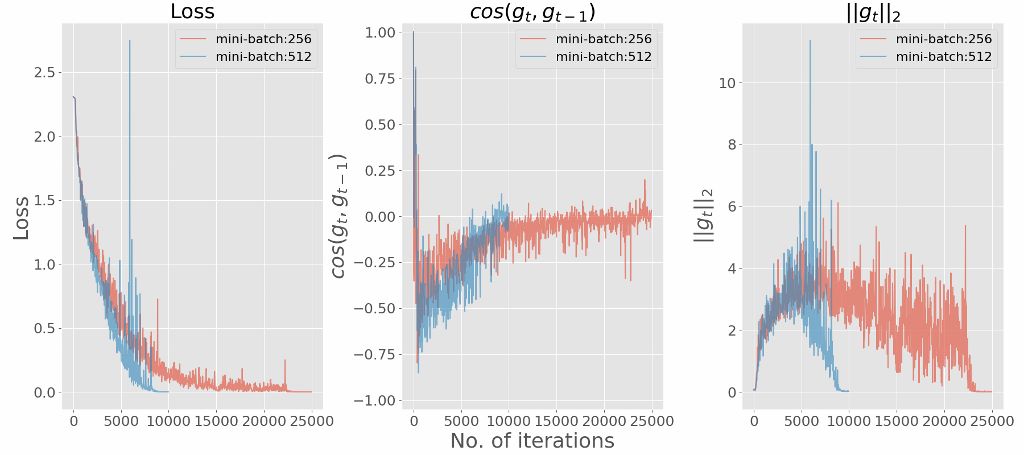}
 }
 \caption[]{Dynamics of the loss $f(\theta_{t})$ (left), the angle of two successive SGs $\cos(g_t,g_{t-1})$ (middle), and the norm of the SGs $\|g_t\|_2$ (right) for the CIFAR-10 dataset with different network architectures and batch sizes. Networks have approximately 1 million parameters. SGD dynamics exhibit similar behavior to Gauss-k datasets.}
 \label{fig:loss_cos_g2_cifar}
\end{figure}

\newpage

\subsection{Loss Difference Dynamics}

Additional loss difference dynamics for networks trained on Gauss-10 dataset with large batches, which contain half of the training samples (50/100), and Gauss-2 dataset with both small (one twentieth of the training samples) and large batches are presented below.

\begin{figure}[b!]
\centering
 \subfigure[Gauss-10, Batch size: 50, 0\% Random labels.]{
 \includegraphics[width = 0.95 \textwidth]{fnn-k10-random0-b50-fnn-loss-distribution.jpg}
 } 
 \subfigure[Gauss-10, Batch size: 50, 15\% Random labels.]{
 \includegraphics[width = 0.95 \textwidth]{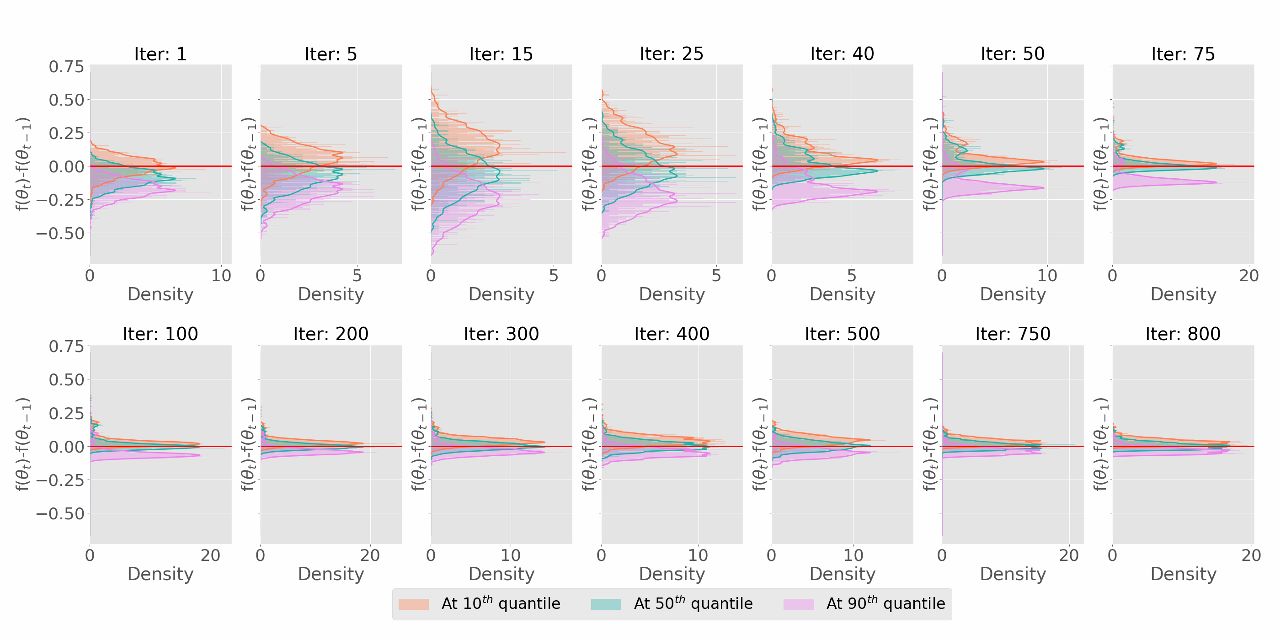}
 }
\caption[]{Dynamics of the distribution $\Delta_t(f)=f(\theta_{t+1}) - f(\theta_t)$ conditioned at $\theta_t$ for Gauss-10 datasets trained with large batches containing half of training samples (50/100). Red horizontal line highlights the value of $\Delta_t(f)=0$. The loss-difference dynamics mainly consist of two phases: 1) the mean of $\Delta_t(f)$ decreases with an increase of variance; 2) the mean of $\Delta_t(f)$ increases and reaches 0 while the variance shrinks.} 
 \label{fig:loss_dynamic_k10}
\end{figure}

\begin{figure}[t!]
\centering
 \subfigure[Gauss-2, Batch size: 50, 0\% Random labels.]{
 \includegraphics[width = 0.95 \textwidth]{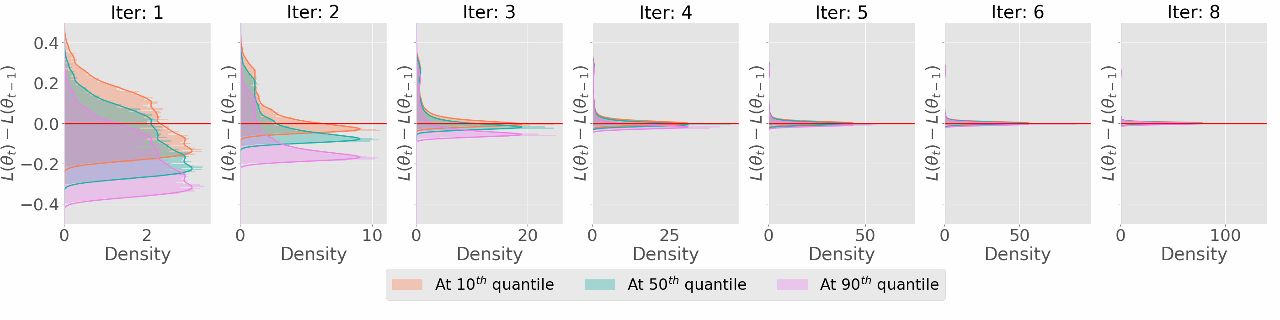}
 } 
 \subfigure[Gauss-2, Batch size: 50, 20\% Random labels.]{
 \includegraphics[width = 0.95 \textwidth]{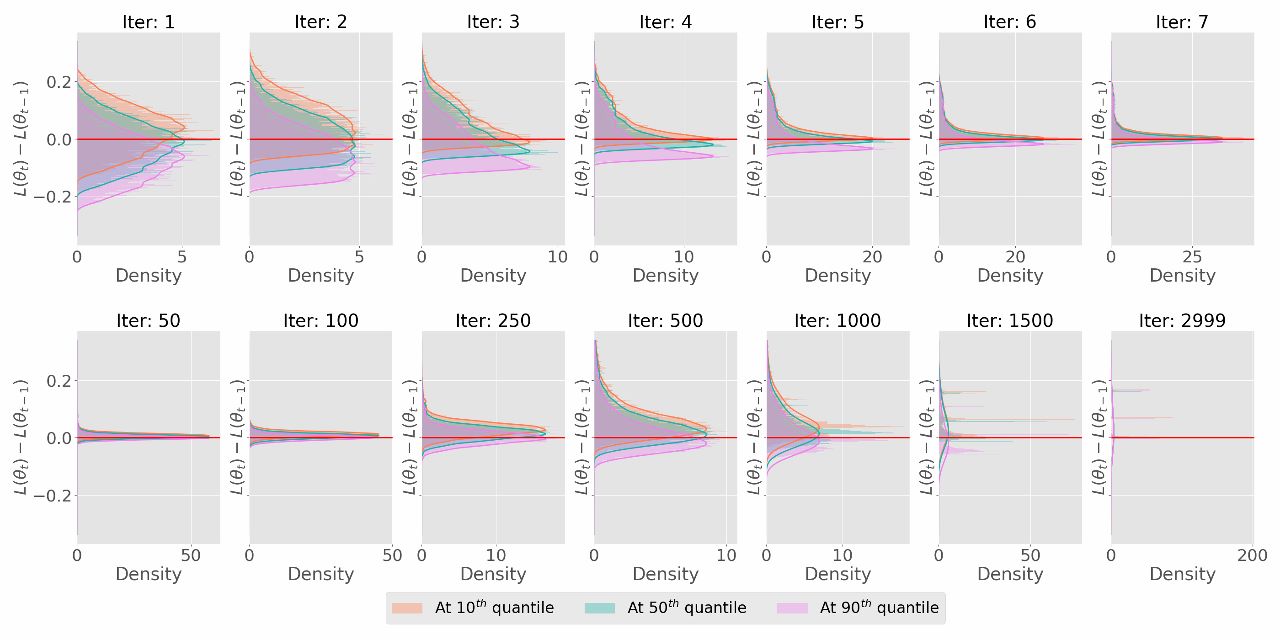}
 }
 \caption[]{Dynamics of the distribution $\Delta_t(f)=f(\theta_{t+1}) - f(\theta_t)$ conditioned at $\theta_t$ for Gauss-2 datasets trained with large batches containing half of training samples (50/100). Red horizontal line highlights the value of $\Delta_t(f)=0$. The loss-difference dynamics changes from two phases to three phases as we increase the difficulty level of the problem.}
 \label{fig:loss_dynamic_k2}
\end{figure}

\newpage
\section{SGD Dynamics: Proof of Theorem \ref{thm:dynamic}}
\label{app:dynam}

\label{sec:app:dynam}
Recall Theorem~\ref{thm:dynamic} in Section \ref{sec:dynamics}:
\theoi*

\proof 
In terms of the empirical loss, we have
\beq
\label{eq:hessian}
   f(\theta_{t+1}) =  f(\theta_t) + (\theta_{t+1} - \theta_t)^T \nabla f(\theta_t)
    + \frac{1}{2} (\theta_{t+1} - \theta_t)^T H_f(\theta_{t+\frac{1}{2}}) (\theta_{t+1} - \theta_t)~,
\eeq
where $\theta_{t+\frac{1}{2}}$ denotes a suitable parameter of the form $(1-\tau) \theta_t + \tau \theta_{t+1}$ which satisfies the above equality by the mean-value theorem \cite{rudi76,bert99}. Replacing $(\theta_{t+1} - \theta_t)$ in \eqref{eq:hessian} using the SGD update \eqref{eq:sgd}, we have
\beq
\label{eq:step1} 
f(\theta_{t+1}) - f(\theta_t) = - \eta_t \nabla \tilde{f}(\theta_t)^T \nabla f(\theta_t) + \frac{\eta_t^2}{2} \nabla \tilde {f(\theta_t)^T} H_f(\theta_{t+\frac{1}{2}}) \nabla \tilde{f(\theta_t)}
\eeq
we can represent the SG as
\begin{equation}
 \nabla \tilde{f}(\theta_t) = \mu_t + \frac{1}{\sqrt{m}}\Sigma_t^{1/2} g~,
\end{equation}

Based on Proposition \ref{prop:hessian}, we have

\begin{equation} \label{eq:emphess}
 H_f(\theta_t) = M_t - H_p(\theta_t)~,   \tag{\ref{eq:hess}}
\end{equation}
where $H_p(\theta_t) =  \frac{1}{n} \sum_{i=1}^n \frac{1}{p(\theta_t;z_i)} \frac{\partial^2 p(\theta_t;z_i)}{\partial \theta^2}$.

From Assumption \ref{asp:rs}, replacing \eqref{eq:generative} and \eqref{eq:emphess} in \eqref{eq:step1}, we get

\beq
\begin{split}
f&(\theta_{t+1}) - f(\theta_t) \\
& \leq  - \eta_t \left(\mu_t + \Sigma_t^{1/2} g_t\right)^T \mu_t + \frac{\eta^2L}{2}  \left(\mu_t + \frac{1}{\sqrt{m}}\Sigma_t^{1/2} g_t\right)^T \left(\mu_t +\frac{1}{\sqrt{m}} \Sigma_t^{1/2} g_t\right) \\
& = \underbrace{- \eta_t \mu_t^T \left( 1 - \frac{\eta L}{2}\right) \mu_t}_{T_1}  \underbrace{- \eta_t \mu_t^T \left( 1 - \frac{1}{\sqrt{m}}\eta_t L \right) \Sigma_t^{1/2} g_t}_{T_2} + \underbrace{\frac{\eta^2 L}{2m} g_t^T \Sigma_t g_t}_{T_3} ~.
\label{bnd:upper}
\end{split}
\eeq

and
\beq
\begin{split}
f&(\theta_{t+1}) - f(\theta_t) \\
& \geq  - \eta_t \left(\mu_t + \Sigma_t^{1/2} g_t\right)^T \mu_t - \frac{\eta_t^2L}{2}  \left(\mu_t + \frac{1}{\sqrt{m}}\Sigma_t^{1/2} g_t\right)^T \left(\mu_t + \frac{1}{\sqrt{m}}\Sigma_t^{1/2} g_t\right) \\
& = \underbrace{- \eta_t \mu_t^T \left( 1 + \frac{\eta_t L}{2}\right) \mu_t}_{T_4} \underbrace{- \eta \mu_t^T \left( 1 + \frac{1}{\sqrt{m}}\eta_t L \right) \Sigma_t^{1/2} g}_{T_5} \underbrace{- \frac{\eta_t^2 L}{2m} g_t^T \Sigma_t g_t}_{T_6} 
\end{split}
\label{bnd:lower}
\eeq

Since $g_t$ is isotropic, conditional on $\theta_t$, let us take expectation on both sides of inequalities \eqref{bnd:upper} and \eqref{bnd:lower}

\beq
\begin{split}
\mathbb{E}[&f(\theta_{t+1}) - f(\theta_t) |\theta_t] \\
& \leq \mathbb{E} [T_1 |\theta_t] + \mathbb{E}[T_2 |\theta_t] + \mathbb{E}[T_3 |\theta_t] \\
& = - \eta_t \mu_t^T \left( 1 - \frac{\eta_t L}{2} \right) \mu_t - \eta_t \mathbb{E} \left[\mu_t^T \left( 1 - \frac{1}{\sqrt{m}}\eta_t L \right) \Sigma_t^{1/2} g \bigg|\theta_t \right] + \frac{\eta_t^2 L}{2m} \mathbb{E} \left[ g_t^T \Sigma_t g_t \bigg|\theta_t \right] \\
& = - \eta_t \|\mu_t\|_2^2 + \frac{\eta_t^2 L}{2} \|\mu_t\|_2^2 + \frac{\eta_t^2 L}{2m} \tr\left(\Sigma_t\right) \\
& = -\eta_t \|\mu_t\|_2^2 + \frac{\eta_t^2 L}{2} \tr \left(\frac{1}{m}\Sigma_t + \mu_t \mu_t^T \right) \\
& = -\eta_t \|\mu_t\|_2^2 + \frac{\eta_t^2 L}{2} \tr M_t
\end{split}
\label{bnd:e:upper}
\eeq
Similarly,
\beq
\begin{split}
\mathbb{E}[&f(\theta_{t+1}) - f(\theta_t) |\theta_t] \\
&\geq \mathbb{E} [T_4 |\theta_t] + \mathbb{E}[T_5 |\theta_t] + \mathbb{E}[T_6 |\theta_t] \\
& \geq - \eta_t \mu_t^T \left( 1 + \frac{\eta_t L}{2}\right) \mu_t - \eta \mathbb{E} \left[\mu_t^T \left( 1 + \frac{1}{\sqrt{m}}\eta_t L \right) \Sigma_t^{1/2} g \bigg|\theta_t \right] - \frac{\eta_t^2 L}{2m} \mathbb{E} \left[ g_t^T \Sigma_t g_t \bigg|\theta_t \right] \\
& = - \eta_t \|\mu_t\|_2^2 - \frac{\eta_t^2 L}{2} \|\mu_t\|_2^2 - \frac{\eta_t^2 L}{2m} \tr(\Sigma_t) \\
& = -\eta_t \|\mu_t\|_2^2 - \frac{\eta_t^2 L}{2} \tr \left(\frac{1}{m}\Sigma_t + \mu_t \mu_t^T \right) \\
& = -\eta_t \|\mu_t\|_2^2 - \frac{\eta_t^2 L}{2} \tr M_t
\end{split}
\label{bnd:e:lower}
\eeq

Next, we focus on a large deviation bound for the $f(\theta_{t+1})-f(\theta_t)|\theta_t$. First, note that $T_1$ is deterministic. For $T_2$ and $T_5$, let $\varsigma_t=\eta_t ( 1 - \frac{1}{\sqrt{m}}\eta_t L ) \Sigma_t^{1/2} \mu_t \in \R^p$ and $\varsigma_t^{'}=\eta_t ( 1 + \frac{1}{\sqrt{m}}\eta_t L ) \Sigma_t^{1/2} \mu_t \in \R^p$. Since $g_t$ is uniform on a sphere, by concentration inequality on a sphere \cite{leta91}, we have
\beq
P[T_2-ET_2\geq s]=P \left[  \varsigma_t^T g_t  \geq s \right] \leq \exp\left(- \frac{s^2}{ \| \varsigma_t \|^2} \right)~.
\label{eq:t2dev-1}
\eeq
\beq
P[T_2-ET_2\leq -s]=P \left[  \varsigma_t^{'T} g_t  \leq -s \right] \leq \exp\left(- \frac{s^2}{ \| \varsigma^{'}_t \|^2} \right)~.
\label{eq:t2dev-2}
\eeq

For $T_3$ and $T_6$, with $A_t = \frac{\eta_t^2 L}{2m} \Sigma_t$ a positive semidefinite matrix, from the Hanson-Wright inequality \cite{hskz12}, we have

\beq
P[T_3-ET_3\geq s]=P \left[  g_t^T A_t g_t - \frac{\eta_t^2L}{2} \tr(\Sigma_t ) \geq s \right]
\leq \exp\left[ - c_2 \min\left( \frac{s^2}{ \| A_t \|_F^2} , \frac{s}{ \| A_t \|_2 } \right) \right]~,
\label{eq:t3dev-1}
\eeq

\beq
P[T_3-ET_3\leq -s]=P \left[  g_t^T A_t g_t - \frac{\eta_t^2L}{2} \tr(\Sigma_t ) \leq -s \right]
\leq \exp\left[ - c_2 \min\left( \frac{s^2}{ \| A_t \|_F^2} , \frac{s}{ \| A_t \|_2 } \right) \right]~,
\label{eq:t3dev-2}
\eeq

where $c_2 > 0$ is an absolute constant.

Then we use the concentration inequality for $T_i,\; i=3,4,5,6$ to construct the two sided tail bounds for $f(\theta_{t+1})-f(\theta_t)|\theta_t$, which is: 
\beq
\begin{split}
& P  \bigg[ \Delta_t(f) - \bigg\{ - \eta_t \| \mu_t \|_2^2 + \frac{\eta_t^2}{2}L\tr M_t \bigg\}  \geq  ~~~ s  \bigg| \theta_t \bigg]\\
\leq& P  \bigg[ T_1+T_2+T_3 - \bigg\{ - \eta_t \| \mu_t \|_2^2 + \frac{\eta_t^2}{2}L\tr M_t \bigg\}  \geq  ~~~ s  \bigg| \theta_t \bigg]\\
=& P  \bigg[ T_2-\mathbb{E}T_2+T_3-\mathbb{E}T_3\geq  ~~~ s  \bigg| \theta_t \bigg]\\
\leq& P  \bigg[ T_2-\mathbb{E}T_2\geq  ~~~ s/2  \bigg| \theta_t \bigg]+P  \bigg[ T_3-\mathbb{E}T_3\geq  ~~~ s/2  \bigg| \theta_t \bigg]\\
\leq &\exp\left(- \frac{s^2}{ \| \varsigma_t \|^2} \right)+\exp\left[ - c_2 \min\left( \frac{s^2}{ \| A_t \|_F^2} , \frac{s}{ \| A_t \|_2 } \right)\right]\\
\leq &2\exp\left(-c\min\left(\frac{s^2}{ \| \varsigma_t \|^2}, \frac{s^2}{ \| A_t \|_F^2},\frac{s}{ \| A_t \|_2 }\right)\right)
\end{split}
\eeq

and 

\beq
\begin{split}
& P  \bigg[ \Delta_t(f) - \bigg\{ - \eta_t \| \mu_t \|_2^2 - \frac{\eta_t^2}{2}L\tr M_t \bigg\}  \leq  ~~~ -s  \bigg| \theta_t \bigg]\\
\leq& P  \bigg[ T_4+T_5+T_6 - \bigg\{ - \eta_t \| \mu_t \|_2^2 - \frac{\eta_t^2}{2}L\tr M_t \bigg\}  \leq  ~~~ -s  \bigg| \theta_t \bigg]\\
=& P  \bigg[ T_4-\mathbb{E}T_4+T_5-\mathbb{E}T_5\leq  ~~~ -s  \bigg| \theta_t \bigg]\\
\leq& P  \bigg[ T_4-\mathbb{E}T_4\leq  ~~~ -s/2  \bigg| \theta_t \bigg]+P  \bigg[ T_5-\mathbb{E}T_5\leq  ~~~ -s/2  \bigg| \theta_t \bigg]\\
\leq &\exp\left(- \frac{s^2}{ \| \varsigma^{'}_t \|^2} \right)+\exp\left[ - c_2 \min\left( \frac{s^2}{ \| A_t \|_F^2} , \frac{s}{ \| A_t \|_2 } \right)\right]\\
\leq &2\exp\left(-c\min\left(\frac{s^2}{ \| \varsigma^{'}_t \|^2}, \frac{s^2}{ \| A_t \|_F^2},\frac{s}{ \| A_t \|_2 }\right)\right)
\end{split}
\eeq

Here $c$ is an absolute constant. \qed

\section{SGD Dynamics: Examples} 

\label{app_Ex_regression}
We apply Theorem \ref{thm:dynamic} to characterize the loss difference of two simple special cases in the over-parameterized setting: high dimensional least squares and high dimensional logistic regression.

\subsection{High Dimensional Least Squares}
For high dimensional least squares discussed in Example~\ref{ex:ls}, we have the following result:

\corri*

\proof Let us denote $\Delta_t(f) = f(\theta_{t + 1}) - f(\theta_t)$, we have 
\[
\Delta_t(f) = \frac{\eta^2}{2} \left(\mu_t + \Sigma_t^{\frac{1}{2}}g \right)^T H_f \left(\mu_t +  \Sigma_t^{\frac{1}{2}}g \right) - \eta \mu_t^T \left(\mu_t +  \Sigma_t^{\frac{1}{2}}g \right).
\]
Then
\beq
\begin{aligned}
\E [\Delta_t(f)|\theta_t] &= \frac{\eta^2}{2} \tr]\left( H_f\left( \Sigma_t + \mu_t \mu_t^T\right)\right) - \eta \|\mu_t\|_2^2 \\
& \leq \frac{\eta^2 L}{2} \tr\left( \Sigma_t + \mu_t \mu_t^T\right) - \eta \|\mu_t\|_2^2
\end{aligned}
\eeq

Let $A = L\Sigma_t$, following Theorem \ref{thm:dynamic}, we have

\begin{align*}
    & P(|\Delta_t(f) - \E \Delta_t(f)| \geq s) 
    \leq  2 \exp\left[ -c \min \left( \frac{s^2}{ \alpha_{t,1}^2}, \frac{s^2}{\alpha_{t,2}^2}, \frac{s}{ \alpha_{t,3}} \right) \right]~,
\end{align*}
where 
\[
    \alpha_{t,1} = |1 - \frac{\eta L}{2\sqrt{m}}|\|\Sigma_t^{\frac{1}{2}} \mu_t\|_2~, \qquad \alpha_{t,2} = \frac{L}{m}\|\Sigma_t\\|_F~, \qquad \alpha_{t,3} = \frac{L}{m}\|\Sigma_t\|_2~,
\]

and $c$ is a positive constant. If we analyze the convergence of least squares in expectation, we have
\begin{align*}
	\E [\Delta_t(f)|\theta_t]  &\leq \frac{\eta^2 L}{2 } \frac{1}{n} \sum_{i=1}^n\|x_i\|_2^2 (x_i^T \theta - y_i)^2 - \eta \|\mu_t\|_2^2 \\
    &= \frac{\eta^2 L}{2 } \frac{1}{n} \sum_{i=1}^n\|x_i\|_2^2 (x_i^T \theta - y_i)^2 
	 - \frac{\eta}{n^2} (X\theta - y)^T (X X^T) (X\theta - y).
\end{align*}

Let us assume $\sigma_{\min} (\frac{1}{n^2}X X^T) \geq \alpha > 0$ and $x_i^T H_f x_i  \leq \beta$ for $i = 1, \ldots, n$, we have
\[
	\E [\Delta_t(f)|\theta_t]  \leq \frac{\eta^2 \beta L}{2} \cdot \frac{1}{n}\|X \theta_t - y\|_2^2 - \eta \alpha \cdot \frac{1}{n} \|X \theta_t - y\|_2^2~.
\]
Let $\eta = \frac{\alpha}{\beta L n}$, we have
\[
		\E [\Delta_t(f)|\theta_t] \leq - \frac{\alpha^2}{2 \beta L n} \|X \theta_t - y\|_2^2.
\]

The analysis above proves Corollary \ref{cor:ls} following Theorem \ref{thm:dynamic}. 
\qed

From Corollary \ref{cor:ls}, SGD for high dimensional least squares has two phases. In the early phase, $-\frac{\alpha^2}{2 \beta L n} \|X \theta_t - y\|_2^2$ will be much smaller than zero, thus SGD can sharply decrease $f$; $\alpha_{t,1}, \alpha_{t,2}, \alpha_{t,3}$ are large, allowing SGD to explore loss surface. In the later phase both $\frac{\alpha^2}{2 \beta L n} \|X \theta_t - y\|_2^2$ and $\|\Sigma_t\|_2$ are small, therefore SGD will do a steady decrease. Therefore, SGD is able to hit the global minima of $\hat{f}(\theta)$.

\subsection{High Dimensional Logistic Regression.} The empirical loss of binary logistic regression is given by
\beq
\hat{f}(\theta; z) = -\log p_{\theta}(x)^{y} (1 - p_{\theta}(x))^{1 - y},\tag{\ref{loss:lr}}
\eeq
where $p_{\theta}(x) = (1 + \exp(- \theta^T x))^{-1}$. 
Then, we have:
\corrii*

\proof Let $\Delta_t (f) = f(\theta_{t + 1}) - f(\theta)$, we have 
\begin{align*}
	\E [\Delta_t(f)|\theta_t] & \leq \frac{L}{2}\E \|\bar{\mu}_t\|_2^2 - \eta \|\mu_t\|_2^2 \\
	& = \frac{\eta^2 L}{2} \tr(\Sigma_t + \mu_t \mu_t^T) - \eta \|\mu_t\|_2^2 \\
	& = \frac{\eta^2 L}{2} \frac{1}{n}\sum_{i=1}^n \|x_i\|_2^2 (y_i - p_{\theta_t}(x_i))^2 - \eta \|\mu_t\|_2^2~,
\end{align*}
Assume $\sigma_{\min}(\frac{1}{n}X X^T) \geq \alpha > 0$ and $\max_i \|x_i\|_2^2 \leq \beta$ then
\begin{align*}
		\E [\Delta_t(f)|\theta_t] \leq & \frac{\eta^2 \beta L}{2} \frac{1}{n} \sum_{i=1}^{n}(y_i - p_{\theta_t}(x_i))^2
		 - \eta \alpha \frac{1}{n} \sum_{i=1}^{n}(y_i - p_{\theta_t}(x_i))^2.
\end{align*}
We choose $\eta = \frac{\alpha}{\beta L}$, we have
\[
		\E [\Delta_t(f)|\theta_t] \leq - \frac{ \alpha^2}{2\beta L n} \sum_{i=1}^{n}(y_i - p_{\theta_t}(x_i))^2.
\]
The analysis above proves Corollary \ref{cor:lr} following Theorem \ref{thm:dynamic}. \qed

In the high dimensional case, the data are always linearly separable. In this case $\sum_{i=1}^n (y_i - p_{\theta_t}(x_i))^2$ can be arbitrarily small (depending on $\| x_i \|_2$), therefore SGD will have a similar behavior as least squares.

\section{SGD Dynamics: Proof of Theorem \ref{thm:Bernstein}}
Recall Theorem \ref{thm:Bernstein} in Section \ref{sec:dynamics}:
\theoii*

\proof
Recall that from Theorem \ref{thm:dynamic}, we already have two sided bounds:
\begin{align}
P  \left[\left. \Delta_t(f) - \bigg\{ - \eta_t \| \mu_t \|_2^2 + \frac{\eta_t^2}{2}L\tr M_t \bigg\}\right|\theta_t  \geq s \right] 
  \leq 2\exp \left[ -c \min \left( \frac{s^2}{ \alpha_{t,1}^2} , \frac{s^2}{ \alpha_{t,2}^2}, \frac{s}{ \alpha_{t,3}} \right) \right]~,\\
P  \left[\left. \Delta_t(f) - \bigg\{ - \eta_t \| \mu_t \|_2^2 - \frac{\eta_t^2}{2}L\tr M_t \bigg\} \right|\theta_t \leq -s \right] 
  \leq 2\exp \left[ -c \min \left( \frac{s^2}{ \beta_{t,1}^2} , \frac{s^2}{ \alpha_{t,2}^2}, \frac{s}{ \alpha_{t,3}} \right) \right]~,
\end{align}
where $\alpha_{t,1}  = \eta_t |1 - \frac{1}{\sqrt{m}}\eta_t L| \| \Sigma_t^{1/2} \mu_t \|_2~,\beta_{t,1}  = \eta_t |1 + \frac{1}{\sqrt{m}}\eta_t L| \| \Sigma_t^{1/2} \mu_t \|_2~, \alpha_{t,2}  = \frac{\eta_t^2 L}{2m}\left\| \Sigma_t \right\|_F~,\alpha_{t,3}  = \frac{\eta_t^2 L}{2m}\left\| \Sigma_t \right\|_2~$, and $c > 0$ is an absolute constant. Based on inequalities \eqref{bnd:upper} and \eqref{bnd:lower}, we have:  
\[
\mathbb{E}[\Delta_t(f)|\theta_t] \in \left[ - \eta_t \| \mu_t \|_2^2 - \frac{\eta_t^2}{2}L\tr M_t, - \eta_t \| \mu_t \|_2^2 + \frac{\eta_t^2}{2}L\tr M_t \right]
\]

For $\forall s\geq 2\eta_t^2L\tr M_t$, we have: 
\begin{equation}\label{eq: prob_ma}
\begin{split}
P[&\left.|\Delta_t(f)-\mathbb{E}\Delta_t(f)|\geq s\right|\theta_t] \\
&=P[\left.\Delta_t(f)-\mathbb{E}\Delta_t(f)\geq s|\theta_t]+P[\Delta_t(f)-\mathbb{E}\Delta_t(f)\leq -s\right|\theta_t]\\
  &= P\left[\left.\Delta_t(f)-\bigg\{ - \eta_t \| \mu_t \|_2^2 + \frac{\eta_t^2}{2}L\tr M_t \bigg\}\geq s+\mathbb{E}\Delta_t(f)-\bigg\{ - \eta_t \| \mu_t \|_2^2 + \frac{\eta_t^2}{2}L\tr M_t \bigg\}\right|\theta_t\right]\\
  &\phantom{=} +P\left[\left.\Delta_t(f)-\bigg\{ - \eta_t \| \mu_t \|_2^2 - \frac{\eta_t^2}{2}L\tr M_t \bigg\}\leq -s+\mathbb{E}\Delta_t(f)-\bigg\{ - \eta_t \| \mu_t \|_2^2 - \frac{\eta_t^2}{2}L\tr M_t \bigg\}\right|\theta_t\right]\\
  &\leq P \left[\left.\Delta_t(f)-\bigg\{ - \eta_t \| \mu_t \|_2^2 + \frac{\eta_t^2}{2}L\tr M_t \bigg\}\geq s-\eta_t^2L\tr M_t\right|\theta_t \right]\\
   &\phantom{=} +P \left[\left.\Delta_t(f)-\bigg\{ - \eta_t \| \mu_t \|_2^2 - \frac{\eta_t^2}{2}L\tr M_t \bigg\}\leq -s+\eta_t^2L\tr M_t \right|\theta_t\right]\\
   &\leq 2\exp\left[-c\min \left(\frac{\left(s-\eta_t^2L\tr M_t\right)^2}{\alpha_{t,1}^2},\frac{(s-\eta_t^2L\tr M_t)^2}{\alpha_{t,2}^2},\frac{s-\eta_t^2L\tr M_t}{\alpha_{t,3}} \right)\right] \\
   &\phantom{=} +2\exp\left[-c\min \left(\frac{(s-\eta_t^2L\tr M_t)^2}{\beta_{t,1}^2},\frac{(s-\eta_t^2L\tr M_t)^2}{\alpha_{t,2}^2},\frac{s-\eta_t^2L\tr M_t}{\alpha_{t,3}} \right)\right]\\
   &\leq 4\exp\left[-c\min \left(\frac{(s-\eta_t^2L\tr M_t)^2}{\beta_{t,1}^2},\frac{(s-\eta_t^2L\tr M_t)^2}{\alpha_{t,2}^2},\frac{s-\eta_t^2L\tr M_t}{\alpha_{t,3}} \right)\right]\\
   &\leq 4\exp\left[-c\min \left(\frac{(s-s/2)^2}{\beta_{t,1}^2},\frac{(s-s/2)^2}{\alpha_{t,2}^2},\frac{s-s/2}{\alpha_{t,3}} \right) \right]\\
   &\leq 4\exp\left[-c\min \left(\frac{s^2}{\beta_{t,1}^2},\frac{s^2}{\alpha_{t,2}^2},\frac{s}{\alpha_{t,3}} \right)\right]~,
\end{split}
\end{equation}
where $c$ is an absolute constant which can be different in different steps.

For non-negative random variable $X$, we have property $\mathbb{E}(X)=\int_{0}^{\infty}\mathbb{P}(X\geq u)du$. Let $X=\|\Delta_t-\mathbb{E}\Delta_t\|^p$, for any $p\geq 1$, based on \cite{vershynin_high_2018} and use \eqref{eq: prob_ma},  we have the moments of $X$ satisfy 
\begin{align*}
    \|\Delta_t-\mathbb{E}\Delta_t\|_{L^p}^p &= \mathbb{E}|\Delta_t-\mathbb{E}\Delta_t|^p\\
    &= \int_0^{\infty}{P}(|\Delta_t-\mathbb{E}\Delta_t|^p\geq u)du  \\
    &= \int_0^{\infty}
    {P}(|\Delta_t-\mathbb{E}\Delta_t|\geq s)ps^{p-1}ds  \\
    &= \int_0^{2\eta_t^2L\tr M_t}{P}(|\Delta_t-\mathbb{E}\Delta_t|\geq s)ps^{p-1}ds+\int_{2\eta_t^2L\tr M_t}^{\infty}
    {P}(|\Delta_t-\mathbb{E}\Delta_t|\geq s)ps^{p-1}ds \\
    &\leq p\left(2\eta_t^2L\tr M_t\right)^p+\int_{2\eta_t^2L\tr M_t}^{\infty}4\exp\left(-c\min\left(\frac{s^2}{\beta_{t,1}^2},\frac{s^2}{\alpha_{t,2}^2},\frac{s}{\alpha_{t,3}}\right)\right)ps^{p-1}ds \\
    &\leq p\left(2\eta_t^2L\tr M_t\right)^p+\int_{0}^{\infty}4\exp\left(-c\min\left(\frac{s^2}{\beta_{t,1}^2},\frac{s^2}{\alpha_{t,2}^2},\frac{s}{\alpha_{t,3}}\right)\right)ps^{p-1}ds)^{1/p}\\
    &\leq p\left(2\eta_t^2L\tr M_t\right)^p+\int_{0}^{\infty}4\exp\left(-c\frac{s^2}{\beta_{t,1}^2}\right)ps^{p-1}ds\\
    &\phantom{=} +\int_{0}^{\infty}4\exp\left(-c\frac{s^2}{\alpha_{t,2}}\right)ps^{p-1}ds+\int_{0}^{\infty}4\exp\left(-c\frac{s^2}{\alpha_{t,2}}\right)ps^{p-1}ds~.
\end{align*}
Taking $p$-th root, we have
\begin{align*}
    \|\Delta_t-\mathbb{E}\Delta_t\|_{L^p} 
    &\leq  2p^{1/p}\eta_t^2L\tr M_t+p^{1/p}\left(\int_{0}^{\infty}4\exp\left(-c\frac{s^2}{\beta_{t,1}^2}\right)s^{p-1}ds\right)^{1/p}\\
    &\phantom{=} +p^{1/p}\left(\int_{0}^{\infty}4\exp\left(-c\frac{s^2}{\alpha_{t,2}}\right)ps^{p-1}ds\right)^{1/p}+\left(\int_{0}^{\infty}4\exp\left(-c\frac{s^2}{\alpha_{t,2}}\right)ps^{p-1}ds\right)^{1/p}\\
    &=2p^{1/p}\eta_t^2L\tr M_t+(4p)^{1/p}\left(\int_0^{\infty}\exp(-v)\frac{\beta_{t,1}^p}{c^{p/2}}v^{\frac{p-2}{2}}\frac{1}{2}dt\right)^{1/p}\\
    &\phantom{=} +(4p)^{1/p}\left(\int_0^{\infty}\exp(-v)\frac{\alpha_{t,2}^p}{c^{p/2}}v^{\frac{p-2}{2}}\frac{1}{2}dt\right)^{1/p}+(4p)^{1/p}\left(\int_0^{\infty}\exp(-v)\frac{\alpha_{t,3}^p}{c^p}v^{p-1}dv\right)^{1/p}\\
    &=2p^{1/p}\eta_t^2L\tr M_t+(4p)^{1/p}\left(\frac{1}{2}\frac{\beta_{t,1}^p}{c^p}\Gamma\left(\frac{p-2}{2}\right)\right)^{1/p}+(4p)^{1/p}\left(\frac{1}{2}\frac{\alpha_{t,2}^p}{c^p}\Gamma\left(\frac{p-2}{2}\right)\right)^{1/p}\\
    & \phantom{=} +(4p)^{1/p}\left(\frac{\alpha_{t,3}^p}{c^p}\Gamma(p-1)\right)^{1/p}~,
\end{align*}
where $\Gamma(x)$ represents gamma function \cite{sebah02}. Using the property of gamma function \cite{sebah02}: $\Gamma(x)\leq x^x$, we have: 
\begin{align*}
    \|\Delta_t&-\mathbb{E}\Delta_t\|_{L^p} \\
    & \leq \frac{1}{c}\left(2p^{1/p}\eta_t^2L\tr M_t+(2p)^{1/p}\beta_{t,1}\left(\frac{p-2}{2}\right)^{\frac{p-2}{2p}}+(2p)^{1/p}\alpha_{t,2}(2(p-2))^{\frac{p-2}{2p}}+(4p)^{1/p}\alpha_3(p-1)^{\frac{p-1}{p}}\right)\\
&\leq \frac{1}{c}\left(2p^{1/p}\eta_t^2L\tr M_t+4^{1/p}\beta_{t,1}\left(\frac{p}{2}\right)^{1/2}+4^{1/p}\alpha_{t,2}\left(\frac{p}{2}\right)^{1/2}+4^{1/p}\alpha_{t,3} p\right)\\
&\leq \frac{1}{4c}(\eta_t^2L\tr M_t+\beta_{t,1}+\alpha_{t,2}+\alpha_{t,3})p~.
\end{align*}
From bounded assumption: stepsize $\eta_t\leq\eta$, gradient $\|\mu_t\|_2\leq\mu$, covariance $\|\Sigma_t^{1/2}\|_2\leq \sigma^{1/2}$, then $\|\Sigma_t\|_2\leq \|\Sigma_t^{1/2}\|_2^2\leq \sigma$, and $\|\Sigma_t\|_F\leq m\|\Sigma_t\|_2\leq m\sigma$, we have the following 
\begin{align*}
    \|\Delta_t&-\mathbb{E}\Delta_t\|_{L^p} \\
    & \leq \frac{1}{4c}(\eta_t^2L\tr M_t+\beta_{t,1}+\alpha_{t,2}+\alpha_{t,3})p\\
    &= \frac{1}{4c}\left( \eta_t^2 L\tr M_t+\eta_t |1 + \eta_t L| \frac{1}{\sqrt{m}}\| \Sigma_t^{1/2} \mu_t \|_2+\frac{\eta_t^2 L}{2m}\left\| \Sigma_t \right\|_F+ \frac{\eta_t^2 L}{2m}\left\| \Sigma_t \right\|_2\right)p\\
    &\leq \frac{1}{4c}\left( \eta^2 L\left(\sigma+\mu^2\right)+\frac{\eta\mu^2}{\sqrt{m}}\left(1+\eta L\right)\sqrt{\sigma}+\frac{(m+1)\eta^2L\sigma}{2m}\right)p\\
    &:=Kp
\end{align*}

Therefore, $\Delta_t(f)-\mathbb{E}\Delta_t(f)|\theta_t$ is a  sub-exponential MDS, which has so-called sub-exponential norm or Orlicz norm equivalent to $K=\frac{1}{4c}\left( \eta^2 L\left(\sigma+\mu^2\right)+\frac{\eta\mu^2}{\sqrt{m}}\left(1+\eta L\right)\sqrt{\sigma}+\frac{(m+1)\eta^2L\sigma}{2m}\right)$ up to an absolute constant. Another equivalent form of sub-exponential property \cite{vershynin_high_2018} is:
\beq
\label{subex_pro}
\mathbb{E}[\exp(\lambda (\Delta_t(f)-\mathbb{E}\Delta_t(f)))|\theta_t] \leq \exp(K^2\lambda^2), \qquad \forall\lambda \text{ s.t. }~~|\lambda|\leq 1/K ~.
\eeq
In the next step, using this sub-exponential property on MDS, we can derive an Azuma-Bernstein inequality for sub-exponential MDSs, which is a generalization of classical Azuma-Hoeffding inequality \cite{melnyk_estimating_2016}.\\
For any $0<\lambda\leq 1/K$, we have:
\begin{align*}
    P &\left[f(\theta_T)-\mathbb{E}f(\theta_T)>s\right]\\
    & = P\left[\sum_{t=0}^{T-1}(\Delta_t(f)-\mathbb{E}[\Delta_t(f)])>s\right]\\
    & = P\left[\exp\left(\lambda\sum_{t=0}^{T-1}(\Delta_t(f)-\mathbb{E}[\Delta_t(f)])\right)>\exp(\lambda s)\right]\\
    & \leq \exp(-\lambda s)\mathbb{E}\left[\exp\left(\lambda\sum_{t=0}^{T-1}\left(\Delta_t(f)-\mathbb{E}\left[\Delta_t(f)\right]\right)\right)\right]\\
    & = \exp(-\lambda s)\mathbb{E}\left[\left.\mathbb{E}\left[\prod_{t=0}^{T-1}\exp(\lambda (\Delta_t(f)-\mathbb{E}[\Delta_t(f)]))\right|\theta_{T-1}\right]\right]\\
    & = \exp(-\lambda s)\mathbb{E}\left[\mathbb{E}\left[\left.\exp(\lambda (\Delta_{T-1}(f)-\mathbb{E}[\Delta_{T-1}(f)]))\right|\theta_{T-1}\right]\prod_{t=0}^{T-1}\exp(\lambda (\Delta_t(f)-\mathbb{E}[\Delta_t(f)]))\right]\\
    &=\exp(-\lambda s)\mathbb{E}[\mathbb{E}\left[\left.\exp(\lambda (\Delta_{T-1}(f)-\mathbb{E}[\Delta_{T-1}(f)|\theta_{T-1}]))\right|\theta_{T-1}\right]\\
    &\mathbb{E}\left[\exp(\lambda (\mathbb{E}[\Delta_{T-1}|\theta_{T-1}]-\mathbb{E}[\Delta_{T-1}]))\right]\prod_{t=0}^{T-1}\exp(\lambda (\Delta_t(f)-\mathbb{E}[\Delta_t(f)]))]
    \end{align*}
    Notice that here when we want to apply the Equation \eqref{subex_pro}, the main issue here is the difference between $\mathbb{E}[\Delta_{T-1}|\theta_{T-1}]$ and $\mathbb{E}[\Delta_{T-1}]$. From $\mathbb{E}\left[\mathbb{E}[\Delta_{T-1}|\theta_{T-1}]=\mathbb{E}[\Delta_{T-1}]\right]$ and we have bounded $\Delta_t$:
    \begin{align*}
        |\Delta_t|=|f(\theta_{t+1})-f(\theta_t)|&\leq L_1\|\theta_{t+1}-\theta_t\|_2\\
        &=L_1\|\eta_t\mu_t+\eta_t\frac{1}{\sqrt{m}}\Sigma_t^{1/2}g\|\\
        &\leq L_1\eta(\mu+\frac{\sqrt{\sigma p}}{\sqrt{m}})
    \end{align*}
    so we can apply Hoeffding Lemma on random variable $\mathbb{E}[\Delta_{T-1}|\theta_{T-1}]-\mathbb{E}[\Delta_{T-1}]$: For any $\lambda\in \mathbb{R}$
    \begin{align*}
        \mathbb{E}\left[\exp(\lambda(\mathbb{E}[\Delta_{T-1}|\theta_{T-1}]-\mathbb{E}[\Delta_{T-1}])\right]\leq \exp(2L_1\eta(\mu+\frac{\sqrt{\sigma p}}{\sqrt{m}})\lambda^2):=\exp(K_1\lambda^2)
    \end{align*}
    Combining Equation \eqref{subex_pro}, the last inequality can be further computed as:
    \begin{align*}
    &\leq \exp(-\lambda s)\exp(\lambda^2  K^2+\lambda^2K_1^2)\mathbb{E}\left[\prod_{t=0}^{T-2}\exp(\lambda (\Delta_t(f)-\mathbb{E}[\Delta_t(f)]))\right]\\
    & \leq   \exp(-\lambda s)\exp(2\lambda^2 K^2+2\lambda^2 K_1^2)\mathbb{E}\left[\prod_{t=0}^{T-3}\exp(\lambda (\Delta_t(f)-\mathbb{E}[\Delta_t(f)]))\right]\\
    & \leq\cdots\leq \exp(-\lambda s+T\lambda^2K^2+T\lambda^2K_1^2)
\end{align*}
Denote $K_2=\max(K,K_1)$, choosing $\lambda =\min\left(\frac{s}{4K_2^2T},\frac{1}{K_2}\right)$, we obtain
\beq
P\left[f(\theta_T)-\mathbb{E}f(\theta_T)>s\right]\leq \exp \left(-c\min \left(\frac{s^2}{8K_2^2T},\frac{s}{2K_2}\right) \right)
\eeq
Repeating the same argument with $-(\Delta_t(f)-\mathbb{E}[\Delta_t(f)])$ instead of $\Delta_t(f)-\mathbb{E}[\Delta_t(f)]$, we obtain the similar bound for ${P}\left[f(\theta_T)-\mathbb{E}f(\theta_T)<-s\right]$. Combining the two results give:
\beq
P\left[|f(\theta_T)-\mathbb{E}f(\theta_T)|>s\right]\leq 2\exp\left(-c\min\left(\frac{s^2}{8K_2^2T},\frac{s}{2K_2}\right)\right)
\eeq
Take $s=\tau \sqrt{T}$, we can have another form:
\beq
P\left[|f(\theta_T)-\mathbb{E}f(\theta_T)|>\tau \sqrt{T}\right]\leq 2\exp\left(-c\min\left(\frac{\tau^2}{8K_2^2},\frac{\tau\sqrt{T}}{2K_2}\right)\right)
\eeq
The proof is similar for preconditioned SGD. That completes the proof. 
\qed

\section{Proof of Theorem \ref{thm:convergence}}

Recall Theorem~\ref{thm:convergence} in Section~\ref{sec:dynamics}:
\theoiii*

\proof
We first show the proof of statement (1). By inequality \eqref{bnd:e:lower}, $f(\theta_t)$ is a non-negative martingale for our choices of step size and preconditioner. Then statement (1) is a direct result from Martingale Convergence Theorem for non-negative super-martingale \cite{williams_probability_1991}. We proof statement (2) by following the  proof strategy of Theorem 2.1 in \cite{ghla13}, then we show the convergence using contradiction. We show the proof of both vanilla SGD \eqref{step:vanilla} and preconditioned SGD \eqref{step:precond}. Let us first look at vanilla SGD \eqref{step:vanilla}. By inequality \eqref{bnd:e:upper} and our choice of step size \eqref{step:vanilla}, 
\[
	\E [ f(\theta_{t +1}) - f(\theta_t) | \theta_t ] \leq - \frac{\|\mu_t\|_2^4}{4 L \tr M_t} = - \frac{\|\mu_t\|_2^4}{4 L (\tr \Sigma_t + \|\mu_t\|_2^2)}.
\]
By our assumption $\tr \Sigma_t \leq \sigma^2$, we have
\[
	\E [ f(\theta_{t +1}) - f(\theta_t) | \theta_t ] \leq - \frac{\|\mu_t\|_2^4}{4 L (\sigma^2 + \|\mu_t\|_2^2)}.
\]
Let us take expectation over all $\theta_t$, 
\begin{align*}
	\E [ f(\theta_{t +1}) - f(\theta_t) ] & \leq - \E \frac{\|\mu_t\|_2^4}{4 L (\sigma^2 + \|\mu_t\|_2^2)} \\
	\Rightarrow \qquad \E \frac{\|\mu_t\|_2^4}{4 L (\sigma^2 + \|\mu_t\|_2^2)} & \leq \E [ f(\theta_t) - f(\theta_{t +1}) ]~.
\end{align*}
Since function $\frac{x^2}{\sigma^2 + x}$ is convex when $x \geq 0$, by Jensen's inequality, we have
\[
	\frac{(\E \|\mu_t\|_2^2 )^2}{4 L (\sigma^2 + \E \|\mu_t\|_2^2)} \leq \E \frac{\|\mu_t\|_2^4}{4 L (\sigma^2 + \|\mu_t\|_2^2)} \leq \E [ f(\theta_t) - f(\theta_{t +1})].
\]
Let us sum over all $t = 0, 1, \ldots, T$, we have  
\[
	\frac{1}{4L} \sum_{t = 0}^T \frac{\E \|\mu_t\|_2^2 }{ \sigma^2 + \E \|\mu_t\|_2^2} \cdot \E \|\mu_t\|_2^2 \leq \E [f(\theta_0) - f(\theta_T)] \leq \E [f(\theta_0) - f^*],
\]
where the second inequality follows the fact that $f^*$ is the global minimum of function $f$. We can divide both side of the inequality by $\frac{1}{2L} \sum_{t = 0}^T \frac{\E \|\mu_t\|_2^2 }{ \sigma^2 + \E \|\mu_t\|_2^2} $ follows the proof of Theorem 2.1 in \cite{ghla13} to get
\[
	\min_t ~\E \|\mu_t\|_2^2 \leq \frac{ 4 L \E [f(\theta_0) - f^*]}{\sum_{t = 0}^T \frac{\E \|\mu_t\|_2^2 }{ \sigma^2 + \E \|\mu_t\|_2^2}}.
\]
Let us choose 
\[
	T \geq \frac{4L \E [f(\theta_0) - f^*] (\sigma^2 + \epsilon)}{\epsilon^2}.
\]
For any $\epsilon > 0$, if we assume $\min_t \E \|\mu_t\|_2^2 \geq \epsilon$, we have 
\[
	\min_t \E \|\mu_t\|_2^2 \leq \frac{ 4 L \E [f(\theta_0) - f^*]}{\sum_{t = 0}^T \frac{\E \|\mu_t\|_2^2 }{ \sigma^2 + \E \|\mu_t\|_2^2}} \leq \frac{ 4 L \E [f(\theta_0) - f^*]}{(T + 1) \frac{\epsilon }{ \sigma^2 + \epsilon}} < \epsilon~,
\]
and we get a contradiction. Therefore $\min_t \E \|\mu_t\|_2^2 = O\left( \frac{1}{\sqrt{T}} \right)$.

We then show the convrgence of preconditioned SGD \eqref{step:precond}. Let us denote $\sigma_{j}^2$ be upper bound of the $j$-th diagonal element of $\Sigma_t$. For the preconditioned SGD case, the conditional expectation can be bounded by 
\[
    \E \Delta_t(f) \leq - \frac{1}{4L}\E\sum_{j=1}^p \frac{\mu_{t, j}^2}{ M_{t, j}} \leq - \E\frac{1}{4L}\sum_{j=1}^p \frac{\mu_{t, j}^2}{ \sigma_j^2 + \mu_{t, j}^2}.
\]
Since function $\frac{x^2}{\sigma^2 + x}$ is convex when $x \geq 0$, by Jensen's inequality, we have
\[
    - \frac{1}{4L}\E\sum_{j=1}^p \frac{\mu_{t, i}^2}{4 L \sigma_j^2 + \mu_{t, j}^2} \leq -\frac{1}{4L} \E \frac{\|\mu_t\|_2^4}{\sum_{j=1}^p\sigma_j^2 + \|\mu_t\|_2^2} \leq -\frac{1}{4L} \E \frac{\|\mu_t\|_2^4}{\sigma^2 + \|\mu_t\|_2^2}.
\]
By using Jensen's inequality again, we have 
\[
	\frac{(\E \|\mu_t\|_2^2 )^2}{4 L (\sigma^2 + \E \|\mu_t\|_2^2)} \leq \E \frac{\|\mu_t\|_2^4}{4 L (\sigma^2 + \|\mu_t\|_2^2)} \leq \E [ f(\theta_t) - f(\theta_{t +1})].
\]
Summing over all $t = 0, \ldots, T$, we have
\[
	\frac{1}{4L} \sum_{t = 0}^T \frac{\E \|\mu_t\|_2^2 }{ \sigma^2 + \E \|\mu_t\|_2^2} \cdot \E \|\mu_t\|_2^2 \leq \E [f(\theta_0) - f(\theta_T)] \leq \E [f(\theta_0) - f^*].
\]

We can divide both side of the inequality by $\frac{1}{2L} \sum_{t = 0}^T \frac{\E \|\mu_t\|_2^2 }{ \sigma^2 + \E \|\mu_t\|_2^2} $ follows the proof of Theorem 2.1 in \cite{ghla13} to get
\[
	\min_t ~\E \|\mu_t\|_2^2 \leq \frac{ 4 L \E [f(\theta_0) - f^*]}{\sum_{t = 0}^T \frac{\E \|\mu_t\|_2^2 }{ \sigma^2 + \E \|\mu_t\|_2^2}}.
\]
Let us choose 
\[
	T \geq \frac{4L \E [f(\theta_0) - f^*] (\sigma^2 + \epsilon)}{\epsilon^2}.
\]
For any $\epsilon > 0$, if we assume $\min_t \E \|\mu_t\|_2^2 \geq \epsilon$, we have 
\[
	\min_t \E \|\mu_t\|_2^2 \leq \frac{ 4 L \E [f(\theta_0) - f^*]}{\sum_{t = 0}^T \frac{\E \|\mu_t\|_2^2 }{ \sigma^2 + \E \|\mu_t\|_2^2}} \leq \frac{ 4 L \E [f(\theta_0) - f^*]}{(T + 1) \frac{\epsilon }{ \sigma^2 + \epsilon}} < \epsilon~,
\]
and we get a contradiction. Therefore $\min_t \E \|\mu_t\|_2^2 = O\left( \frac{1}{\sqrt{T}} \right)$. That completes the proof. 
\qed

\section{Proof of scale-invariant generalization bound}\label{app:general}

In this section, we first present the proof of  Lemma \ref{Lem: KLD} and Corollary \ref{corr: KPQ}. Then we present the proof of  Theorem \ref{theo: bound1}.

\subsection{Proof of Lemma \ref{Lem: KLD} and Corollary \ref{corr: KPQ} }

Recall Lemma~\ref{Lem: KLD} in Section~\ref{sec:bound}:
\lemmi*

\begin{proof}
For distributions $\cQ$ and $\cP$ of a continuous random  variable with support $\cX \subseteq \mathbb{R}^p$, and $\cQ$ is absolutely continuous with respect to  $\cP$, the  differential relative entropy is defined to be
\begin{equation}
    KL(\cQ||\cP) = \int_{\cX} q(x) \log \frac{q(x)}{p(x)} dx,
\end{equation}
where $q(x)$ and $p(x)$ are probability density functions of $\cQ$ and $\cP$.

Suppose we have the following general invertible change of variables:
\begin{equation}
    r: \cX \rightarrow \cX
\end{equation}
and
\begin{equation}
    y =  r(x)
\end{equation}

Let $\cQ^\prime$ and $\cP^\prime$ to the transformed distributions corresponding to $\cQ$ and $\cP$ respectively. The transformed probability densities of 
$\cQ^\prime$ and $\cP^\prime$ is 
\begin{equation}
    q^\prime (y) = q(r^{-1}(y)) \left\vert 
    \det\left\{ \frac{\partial r^{-1}(y)}{\partial y}\right\} \right\vert = q(r^{-1}(y)) \left\vert \det J \right\vert,
\end{equation}
where $J = \frac{\partial r^{-1}(y)}{\partial y}$ is the so-called Jacobian of the transformation. The change of variables formula for integration is
\begin{equation}
    dx = dy  \left\vert \det J \right\vert.
\end{equation}
Let $\cX^\prime$ be the support of $y$.  So we have:
\begin{align}
	KL(\cQ^\prime || \cP^\prime) &= \int_{\cX^\prime}q^\prime(y) \frac{q^\prime(y)}{p^\prime(y)}dy \nonumber\\
	& = \int_{\cX} q(x) \log \frac{q(x)|\det J|}{p(x)|\det J|} dx \nonumber\\
	& = KL(\cQ||\cP)
\end{align}
provided the determinant of the transformation does not vanish since the Jacobian $J$ is non-singular given $r$ is invertible linear transformation. \qed

\end{proof}

\corriv*

This follows from the general result above. Since $\alpha$-transformation is an invertible transformation, we also give an independent proof for this special case of interest.  
	
\begin{proof} We use $T_\alpha(\theta) $ to denote $(\alpha_1 \theta_1, ..., \alpha_L \theta_L)$. Then we have $T_\alpha(\theta) = A\theta$ and $A$ is diagonal matrix
	\[  
	A = \begin{bmatrix} 
	\alpha_1 \mathbb{I}_{n_1} & \hdots &0 \\
	\vdots & \ddots & \vdots\\
	0 & \hdots & \alpha_L \mathbb{I}_{n_L} 
	\end{bmatrix}
	\]
where the $n_1, ..., n_L$ represent the number of parameters of each layer.

Let $\Sigma_{\cP_{{T_\alpha}(\theta_0)}}$ be the covariance of $\cP_{{T_\alpha}(\theta_0)}$, we have:
	\begin{equation} \label{eq: cov_P}
	\Sigma_{\cP_{{T_\alpha}(\theta_0)}} = A\Sigma_\cP A^T
	\end{equation}

	By the property of $\alpha$-scale transformation:
	\begin{eqnarray}\label{eq: Hess}
	H_f(T_\alpha(\theta)) =  (A^T)^{-1} H_f(\theta) A^{-1} 
	\end{eqnarray}

By the definition of $H(\theta)$, with  \eqref{eq: Hess} and  \eqref{eq: cov_P} after $T_\alpha$ transformation we have
	\begin{equation}
	\Sigma^{-1}_{\cQ_{T_\alpha(\theta)}} = [ \nu_1^\alpha, ..., \nu_p^\alpha ]
	\end{equation}
	where 	

	\begin{align}
	\nu_i^\alpha &= \max \left(H_f(T_\alpha(\theta)[i,i], 1/ (\alpha_i^2\sigma_i^2) \right) \nonumber\\
	& = \max \left((A^T)^{-1} H_f(\theta) A^{-1}[i,i],   1/ (\alpha_i^2\sigma_i^2) \right) \nonumber\\
	& = \max \left(\frac{1}{\alpha_i^2} H_f(\theta)[i,i],  1/ (\alpha_i^2\sigma_i^2) \right) \nonumber\\
	& = \frac{1}{\alpha_i^2} \max\left(H_f(\theta)[i,i], 1/ \sigma_i^2\right)
	\end{align}

	By the definition of the posterior covariance in Assumption \ref{asmp: p_q}, we have the covariance after $\alpha$-scale transformation is
	
	\begin{equation} \label{eq: cov_1}
	\Sigma_{\cQ_{T_\alpha(\theta)}} = A \Sigma_{\cQ_{\theta}}A^T
	\end{equation}

By definition, the $KL$-divergence before transformation is 
	\begin{align}
	KL(\cQ_\theta||\mathcal{P}) = \frac{1}{2}\left( \tr(\Sigma_\mathcal{P}^{-1}\Sigma_{\cQ_{\theta}} + (\theta_0 -\theta)^T\Sigma_\mathcal{P}^{-1}(\theta_0 -\theta)-d + \ln\left( \frac{\det \Sigma_\mathcal{P}}{\det \Sigma_{\cQ_{\theta}}} \right) \right) 	\end{align}

Equation \eqref{eq: cov_1} shows that after transformation, the posterior distribution is $\cQ_{T_\alpha(\theta)} \sim \cN(A\theta, A \Sigma_{\cQ_{\theta}}A^T)$ and the prior distribution is $\cP_{{T_\alpha}(\theta_0)} \sim \cN(A\theta_0,A\Sigma_\mathcal{P} A^T)$. So the $KL$-divergence after transformation is 
	\begin{align}
	&	KL( \cQ_{T_\alpha(\theta)} || \cP_{{T_\alpha}(\theta_0)}) \nonumber\\ 
	&=\frac{1}{2}\left( \tr
	\left( (A\Sigma_\mathcal{P} A^T)^{-1}A\Sigma_{\cQ_{\theta}} A^T \right) + (A\theta_0 -A\theta)^T(A\Sigma_\mathcal{P}A^T)^{-1}(A\theta_0 -A\theta)
	-d + \ln\left( \frac{\det A\Sigma_\mathcal{P}A^T}{\det A\Sigma_{\cQ_{\theta}}A^T} \right) \right) \nonumber\\
	& = \frac{1}{2}\left( \tr
	\left( (A^T)^{-1} \Sigma_\mathcal{P}^{-1} \Sigma_{\cQ_{\theta}} A^T \right) + (\theta_0 -\theta)^T\Sigma_\mathcal{P}^{-1}(\theta_0 -\theta)
	-d + \ln\left( \frac{\det (A) \det(\Sigma_\mathcal{P}) \det(A^T)}{\det (A) \det(\Sigma_{\cQ_{\theta}}) \det(A^T)} \right) \right) \nonumber\\
	& = \frac{1}{2}\left( \tr(\Sigma_\mathcal{P}^{-1}\Sigma_{\cQ_{\theta}}) + (\theta_0 -\theta)^T\Sigma_\mathcal{P}^{-1}(\theta_0 -\theta)-d + \ln\left( \frac{\det \Sigma_\mathcal{P}}{\det \Sigma_{\cQ_{\theta}}} \right) \right)  \nonumber\\
	& = KL(\cQ_\theta||\mathcal{P})
	\end{align}
	
Here we complete the proof.
\qed	
\end{proof}

\begin{figure}[t!] 
\centering
\subfigure[Diagonal Element of $H_f(\theta_t)$.]{
 \includegraphics[width = 0.48 \textwidth]{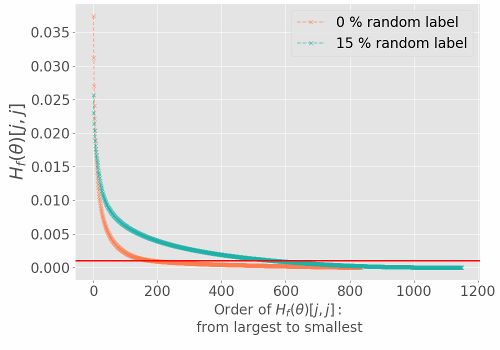}
 } 
 \subfigure[Effective Curvature.]{
 \includegraphics[width = 0.48 \textwidth]{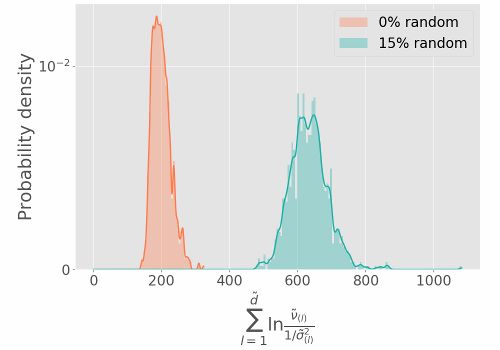}
 } 
 \subfigure[Precision Weighted Frobenius Norm.]{
 \includegraphics[width = 0.48 \textwidth]{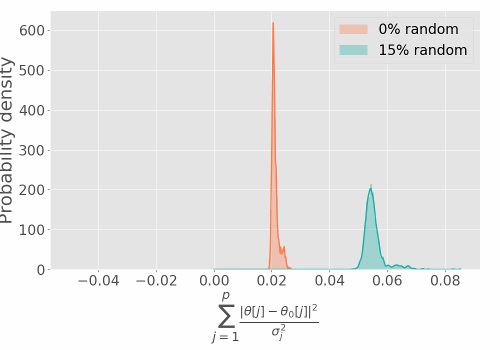}
 } 
 \subfigure[Scale-invariant Generalization Bound.]{
 \includegraphics[width = 0.48 \textwidth]{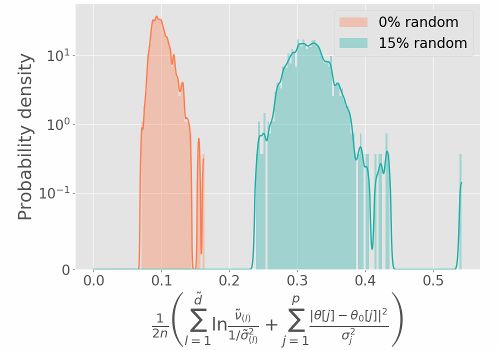}
 } 
\caption[]{Distributions from 10,000 runs on Gauss-10 datasets trained with large batches containing half of training samples (50/100): a) the diagonal elements (mean) of $H_f(\theta)$; random labels increase the value of diagonal elements implying a large first term in the generalization error bound. b) The first term in the generalization bound; as fraction of random labels increases, the term increases, partly explaining the poor generalization performance. c) The second term in the generalization bound; SGD goes further from the initialization with random labels ($\sigma_i = \sigma$, second term $=\|\theta\|^2/\sigma^2$), suggesting larger generalization error than true labels. d) The generalization bound, with randomness increased from $0\%$ to $15\%$, the distribution of the generalization error shifts to a higher value, which validates the proposed bound. }
\end{figure}

\subsection{Proof of Theorem \ref{theo: bound1}}
Recall Theorem \ref{theo: bound1} in Section~\ref{sec:bound}:
\theoiv*

\begin{proof}
	The result is an application of the PAC-Bayesian bound \cite{mcda99,lash03}, which states that for n i.i.d. samples, with probability at least $1-\delta$ we have
	\begin{equation}
	kl  ( \ell(\cQ,S) \| \ell(\cQ,D)) \leq \frac{1}{n} \left[ KL(\cQ\| \mathcal{P} ) + \ln \frac{n+1}{\delta} \right]~. 
	\label{eq: pac}
	\end{equation}
	
	The primary focus of our analysis is to bound $KL(\cQ \| \mathcal{P})$ under Assumption~\ref{asmp: p_q}. We have:
	\begin{align}
	2KL(\cQ|| P) &= \sum_{j = 1}^{p} (\frac{1}{\sigma_j^2 \nu_j} -1)+ \sum_{j =1}^{p}\frac{|\theta[j]-\theta_0[j]|^2}{\sigma_j^2} + \sum_{i =1}^{p} \ln \frac{\nu_j}{1/\sigma_j^2} \nonumber\\
	& \leq \sum_{l =1}^{\tilde d} \ln \frac{\tilde \nu_{(l)}}{1/\tilde \sigma_{(l)}^2} + \sum_{j =1}^{p}\frac{|\theta[j]-\theta_0[j]|^2}{\sigma_j^2}  
    \label{Eq: A3}
	\end{align}
	The proof follows by replacing this upper bound Eq. \eqref{Eq: A3} in the PAC-Bayes bound in Eq. \eqref{eq: pac}
	
	Based on Corollary \ref{corr: KPQ}, the $KL(\cQ \| \mathcal{P})$ in the right hand side is scale-invariant, so the bound Eq. \eqref{Eq: A3} is scale-invariant.
\end{proof}
That completes the proof. \qed

\end{document}